\algnewcommand\True{\textbf{true}\space}
\algnewcommand\False{\textbf{false}\space}
\algnewcommand{\StatexInd}[1]{\Statex \hskip\ALG@thistlm #1} % correct indenting for \Statex
\def\R{{\mathbb{R}}}
\def\N{{\mathbb{N}}}
\def\Z{{\mathbb{Z}}}
\def\1{{\textbf{1}}}
\def\0{{\textbf{0}}}
\def\I{{\mathds{1}}}
\def\P{{\mathbb{P}}}
\def\E{{\mathbb{E}}}
\def\A{{\mathcal{A}}}
\def\cP{{\mathcal{P}}}
\def\S{{\mathcal{S}}}
\def\L{{\mathcal{L}}}
\def\net{{\mathcal{N}}}
\def\M{{\mathcal{M}}}
\def\FedAve{{\mathsf{FedAve}}}
\def\lrgd{{\mathsf{FedLRGD}}}
\DeclareMathOperator\diff{{d \!}}
\DeclareMathOperator\rank{{rank}}
\DeclareMathOperator\vol{{vol}}
\DeclareMathOperator\subpoly{{sub-poly}}
\DeclareMathOperator*{\argmin}{arg\,min}
\newcommand{\T}{\mathrm{T}}
\newcommand{\Fro}{\mathrm{F}}
\newcommand{\op}{\mathrm{op}}
\newtheorem{theorem}{Theorem}
\newtheorem{lemma}{Lemma}
\newtheorem{proposition}{Proposition}
\theoremstyle{definition}
\newtheorem{definition}{Definition}
\newcommand{\figref}[1]{Figure~\ref{#1}}
\newcommand{\thmref}[1]{Theorem~\ref{#1}}
\newcommand{\propref}[1]{Proposition~\ref{#1}}
\newcommand{\lemref}[1]{Lemma~\ref{#1}}
\newcommand{\defref}[1]{Definition~\ref{#1}}
\newcommand{\algoref}[1]{Algorithm~\ref{#1}}
\newcommand{\secref}[1]{Section~\ref{#1}}
\newcommand{\appref}[1]{Appendix~\ref{#1}}
\begin{document}

\bstctlcite{IEEEexample:BSTcontrol} % to ensure that adjacent author names that are the same are not dashed in the bibliography

\title{Federated Optimization of Smooth Loss Functions}

\author{Ali~Jadbabaie,~Anuran~Makur,~and~Devavrat~Shah%
\thanks{The author ordering is alphabetical. This research was supported in part by the ONR under Grant N000142012394, in part by the ARO MURI under Grant W911NF-19-1-0217, and in part by the NSF TRIPODS Foundations of Data Science.}%
\thanks{A. Jadbabaie is with the Laboratory for Information and Decision Systems, Massachusetts Institute of Technology, Cambridge, MA 02139, USA (e-mail: jadbabai@mit.edu).}%
\thanks{A. Makur is with the Department of Computer Science and the Elmore Family School of Electrical and Computer Engineering, Purdue University, West Lafayette, IN 47907, USA (e-mail: amakur@purdue.edu).}%
\thanks{D. Shah is with the Laboratory for Information and Decision Systems, Massachusetts Institute of Technology, Cambridge, MA 02139, USA (e-mail: devavrat@mit.edu).}}%

\maketitle

\begin{abstract}
In this work, we study empirical risk minimization (ERM) within a federated learning framework, where a central server seeks to minimize an ERM objective function using $n$ samples of training data that is stored across $m$ clients and the server. The recent flurry of research in this area has identified the Federated Averaging ($\FedAve$) algorithm as the staple for determining $\epsilon$-approximate solutions to the ERM problem. Similar to standard optimization algorithms, e.g., stochastic gradient descent, the convergence analysis of $\FedAve$ and its variants only relies on smoothness of the loss function in the optimization parameter. However, loss functions are often very smooth in the training data too. To exploit this additional smoothness in data in a federated learning context, we propose the Federated Low Rank Gradient Descent ($\lrgd$) algorithm. Since smoothness in data induces an approximate low rank structure on the gradient of the loss function, our algorithm first performs a few rounds of communication between the server and clients to learn weights that the server can use to approximate clients' gradients using its own gradients. Then, our algorithm solves the ERM problem at the server using an inexact gradient descent method. To theoretically demonstrate that $\lrgd$ can have superior performance to $\FedAve$, we present a notion of federated oracle complexity as a counterpart to canonical oracle complexity in the optimization literature. Under some assumptions on the loss function, e.g., strong convexity and smoothness in the parameter, $\eta$-H\"{o}lder class smoothness in the data, etc., we prove that the federated oracle complexity of $\lrgd$ scales like $\phi m (p/\epsilon)^{\Theta(d/\eta)}$ and that of $\FedAve$ scales like $\phi m (p / \epsilon)^{3/4}$ (neglecting typically sub-dominant factors), where $\phi \gg 1$ is the ratio of client-to-server communication time to gradient computation time, $p$ is the parameter dimension, and $d$ is the data dimension. Then, we show that when $d$ is small compared to $n$ and the loss function is sufficiently smooth in the data, i.e., $\eta = \Theta(d)$, $\lrgd$ beats $\FedAve$ in federated oracle complexity. Finally, in the course of analyzing $\lrgd$, we also establish a general result on low rank approximation of smooth latent variable models.
\end{abstract}

\begin{IEEEkeywords}
Federated learning, empirical risk minimization, gradient descent, H\"{o}lder class, low rank approximation.
\end{IEEEkeywords}

\section{Introduction}
\label{Introduction}

Optimization for machine learning has become a prevalent subject of study due to the advent of large scale model learning tasks. Indeed, while optimization theory is a relatively mature field in applied mathematics (cf. \cite{BoydVandenberghe2004,Nesterov2004}), many of its recent developments have focused on its application to specialized machine learning settings. For instance, several common machine learning problems, such as classification and regression, can be posed as \emph{empirical risk minimization} (ERM) problems. In the unconstrained case, ERM problems have the form \cite{HastieTibshiraniFriedman2009}, \cite[Section 1]{Bubeck2015}:
\begin{equation}
\label{Eq: Standard formulation}
\min_{\theta \in \R^p}{F(\theta)} \triangleq \frac{1}{n} \sum_{i = 1}^{n}{f(x^{(i)};\theta)} \, ,
\end{equation}
where $\theta \in \R^p$ is the parameter over which we optimize, $x^{(1)},\dots,x^{(n)} \in [0,1]^d$ are some given training data, $f : [0,1]^d \times \R^p \rightarrow \R$, $f(x;\theta)$ denotes the loss function, and $F : \R^p \rightarrow \R$ defines the empirical risk (or objective function). There is a gargantuan literature that develops and analyzes iterative gradient-based optimization methods that solve the specific optimization in \eqref{Eq: Standard formulation}, e.g., (exact and projected) GD \cite{Nesterov2004,Bubeck2015}, inexact GD \cite{FriedlanderSchmidt2012,DevolderGlineurNesterov2013,SoZhou2017,SchmidtLeRouxBach2011}, stochastic gradient descent (SGD) \cite{Nemirovskietal2009}, GD with momentum \cite{Polyak1964}, accelerated GD \cite{Nesterov1983}, mini-batch SGD \cite{Dekeletal2012}, variance reduction methods \cite{LeRouxSchmidtBach2012,SchmidtLeRouxBach2017,ShalevShwartzZhang2013,JohnsonZhang2013}, and methods with adaptive learning rates \cite{DuchiHazanSinger2011,TielemanHinton2012,KingmaBa2015} (also see the recent monographs \cite{Bubeck2015,JainKar2017,BottouCurtisNocedal2018,Netrapalli2019}, and the references therein). 

However, as observed in \cite{JadbabaieMakurShah2021}, the optimization for machine learning literature typically analyzes convergence and oracle complexity of iterative gradient-based algorithms for \eqref{Eq: Standard formulation} by posing the problem as, cf. \cite[Section 1]{Bubeck2015}, \cite{HastieTibshiraniFriedman2009}:
\begin{equation}
\label{Eq: Standard formulation 2}
\min_{\theta \in \R^p}{\frac{1}{n} \sum_{i = 1}^{n}{f_i(\theta)}} \, ,
\end{equation}
where each $f_i(\theta) = f(x^{(i)};\theta)$. Hence, canonical analyses of iterative methods for the representation \eqref{Eq: Standard formulation 2} do not effectively utilize information about the smoothness of the loss function $f(x;\theta)$ with respect to the data $x$. As noted in \cite{JadbabaieMakurShah2021}, such smoothness properties are actually quite pervasive in ERM problems, e.g., (regularized) linear regression \cite[Chapter 3]{HastieTibshiraniFriedman2009}, (regularized) logistic regression \cite[Section 4.4]{HastieTibshiraniFriedman2009}, deep neural network training with smooth loss and activation functions \cite[Chapter 5]{Bishop2006}, etc. To address the paucity of theoretical analyses that exploit this additional smoothness information, \cite{JadbabaieMakurShah2021} initiates a study of inexact gradient descent methods for ERM problems that learn the gradient oracle at every iteration by performing local polynomial regressions \cite{FanGijbels1996,ClevelandLoader1996,Tsybakov2009}.

In this paper, we continue this important thread of exploiting the smoothness of loss functions in training data to improve optimization algorithms. But in contrast to \cite{JadbabaieMakurShah2021}, we consider the newly developed federated learning paradigm here, and utilize the approximate low rank structure induced by smoothness in data to theoretically ameliorate optimization performance in this paradigm. We briefly introduce the federated learning architecture and formally abstract it in \secref{Federated Learning}, and we subsequently explain our main contributions in \secref{Main Contributions}. 

\subsection{Federated Learning}
\label{Federated Learning}

The recent escalation of mobile phone usage and emergence of the Internet of Things (IoT) has engendered unprecedented volumes of collected data on such devices. While it is clearly of utility to train machine models based on this data, various new challenges pertaining to privacy, scalability, distributed computation, etc. must be addressed before doing so. For instance, data collected on a user's mobile phone must be kept private, and even if privacy is not a concern, it is infeasible to communicate large quantities of data from millions of users to a central server for computation. To deal with such issues, a new paradigm known as \emph{federated learning} has been formulated in the literature \cite{McMahanetal2017}. There are several classes of federated learning architectures (see, e.g., \cite{Kairouzetal2021}): those with a synchronous central server, fully decentralized networks with synchronized clocks, and asynchronous versions of these topologies. We consider synchronous architectures with a central server in this paper.

In this synchronous setting, the architecture consists of a single central \emph{server} and a collection of several \emph{clients} (that all have synchronized clocks), cf. \cite{McMahanetal2017,Reisizadehetal2020a,Reisizadehetal2020b}, and the references therein. Typically, the server and clients hold their own private data. So, to train machine learning models, \emph{federated optimization} is carried out by clients training possibly inaccurate models based on their private data, and the server periodically aggregating client models to produce a more accurate model at the server. Several key dimensions are considered when studying such federated learning and optimization systems:
\begin{enumerate}
\item \textbf{Privacy:} The training data resides with the clients and is not shared with the server (and vice versa) \cite{McMahanetal2017}. However, the server must leverage the data stored at the clients via several communication rounds to improve the statistical accuracy of the learnt model. 
\item \textbf{Large client population with possibly inactive clients:} Federated learning systems usually have a very large number of clients so that the client population size often dominates the number of training samples per client \cite{McMahanetal2017}. On the other hand, a nontrivial fraction of clients could be inactive during the learning process, e.g., mobile phones only send updates when they are online and being charged \cite{Reisizadehetal2020a}. So, federated optimization algorithms must be robust to participation from smaller subsets of clients.
\item \textbf{Communication cost:} There is significant cost associated with clients communicating with the server. Typically, clients have limited upload bandwidth and servers have limited download bandwidth. In particular, if a large number of clients seek to simultaneously communicate with the server, the server will put the clients in a queue to receive their updates. So, federated optimization schemes often try to increase the amount of (cheap) computation at each client in order to reduce the frequency of communication with the server (e.g., by employing several local updates at the clients), and utilize smaller fractions of active clients in certain rounds \cite{McMahanetal2017,Reisizadehetal2020a}. (Moreover, such schemes may also utilize quantization to compress updates from clients to servers \cite{Reisizadehetal2020a}; we do not consider data compression in this work.)
\item \textbf{Latency:} Beyond the communication costs delineated above, the duration of client-to-server updates is determined by the slowest client among the set of active clients. Hence, using smaller fractions of active clients in certain rounds is also advantageous to eliminate stragglers. In fact, recent algorithms and analysis suggest that stragglers should be used at later stages of the learning process in order to reduce latency \cite{Reisizadehetal2020b}. 
\item \textbf{Data heterogeneity:} Different clients may have data that is drawn from different population distributions \cite{McMahanetal2017,Kairouzetal2021}. Although some works in the literature, e.g., \cite{Reisizadehetal2020a,Reisizadehetal2020b}, assume for simplicity of analysis that clients are homogeneous, i.e., the training data at every client is drawn from a common (unknown) probability distribution, we do not need to impose such a stringent assumption in this work. In particular, our new algorithm in \algoref{Alg: FedLRGD} and its theoretical analysis in \thmref{Thm: Federated Oracle Complexity of FedLRGD} and \secref{Analysis of Federated Low Rank Gradient Descent} will hold in the general heterogeneous setting. Usually, data homogeneity simplifies the analysis of stochastic federated optimization methods since clients' gradients become unbiased estimators of true full gradients \cite{Reisizadehetal2020a}. However, since we analyze the (sample-version of the) ERM problem given below in \eqref{Eq: Standard formulation 3}, which has the same form in both homogeneous and heterogeneous settings, and we will not use randomized estimators of true full gradients at the clients, our analysis is agnostic to whether the server and client data is homogeneous or heterogeneous.
\end{enumerate}

Formally, we model synchronous federated learning with a single central server as follows. We assume that there are $m \in \N$ clients, indexed by $[m]$. The server is endowed with $r \in \Z_+$ training data samples $x^{(0,1)},\dots,x^{(0,r)} \in [0,1]^d$, and each client $i \in [m]$ is given $s \in \N$ training data samples $x^{(i,1)},\dots,x^{(i,s)} \in [0,1]^d$, where $n = ms + r$ so that $\{x^{(i,j)} : i \in \{0\} \cup [m], \, j \in [s]\} = \{x^{(k)} : k \in [n]\}$.\footnote{It is assumed for simplicity that every client has the same number of training data samples.} Under the assumptions outlined in \secref{Assumptions on Loss Function}, the goal of the server is to utilize the clients to compute an $\epsilon$-approximate solution (in the sense of \eqref{Eq: Approx solution 1} or \eqref{Eq: Approx solution 2}) to the ERM problem \eqref{Eq: Standard formulation}, which may be rewritten as
\begin{equation}
\label{Eq: Standard formulation 3}
\min_{\theta \in \R^p}{F(\theta)} = \frac{1}{n} \sum_{j = 1}^{r}{f(x^{(0,j)};\theta)} + \frac{1}{n} \sum_{i = 1}^{m}{\sum_{j = 1}^{s}{f(x^{(i,j)};\theta)}} \, .
\end{equation}

To achieve this goal, we consider a specific family of federated optimization algorithms. In particular, any such algorithm runs over $T \in \N$ synchronized epochs (or rounds). During each epoch, the server may first communicate with $\tau m \in \N$ clients, where $\tau \in [0,1]$ denotes the proportion of active clients that participate in computation. Then, the server and the active clients perform gradient computations, or other less expensive computations, towards finding an $\epsilon$-approximate solution to \eqref{Eq: Standard formulation 3}, where we typically assume that all active clients perform the same number of gradient computations. Finally, each active client may communicate only one vector in $\R^p$ to the server (while taking care not to reveal their private data by transmitting vectors from which it could be easily extracted). This client-to-server communication has some nontrivial cost associated with it. On the other hand, it is usually assumed that server-to-client communication has negligible cost, because there are often no critical constraints on client download bandwidths and the server broadcasting bandwidth. So, the server broadcasts vectors in $\R^p$ to active clients during epochs at essentially no cost. We also note that every epoch need not contain the three aforementioned phases, i.e., the server need not always communicate with active clients or perform gradient computations, and clients need not always perform gradient computations and communicate with the server. 

Several federated optimization algorithms for the synchronous central server setting have been proposed in the literature. The benchmark method in the field is known as \emph{Federated Averaging} ($\FedAve$) \cite{McMahanetal2017}, where clients perform several iterations of (mini-batch) SGD using their private data, and the server aggregates client updates by periodically averaging them over several epochs. Needless to say, $\FedAve$ belongs to the aforementioned family of algorithms, and has been extended in several directions. For example, the authors of \cite{Huoetal2020} introduce an accelerated version of the algorithm by introducing momentum, and the authors of \cite{Reisizadehetal2020a} include a compression (or quantization) step before any client-to-server communication in the algorithm to reduce communication cost. In particular, the iteration complexity and convergence of $\FedAve$ are carefully analyzed in \cite{Reisizadehetal2020a}. More generally, a unified analysis of the class of communication-efficient SGD algorithms is presented in \cite{WangJoshi2019}. Various other federated optimization methods have also been proposed that address different drawbacks of $\FedAve$. For instance, federated optimization methods with better fixed point properties are developed in \cite{PathakWainwright2020}, latency of federated optimization algorithms is improved via adaptive node participation in \cite{Reisizadehetal2020b}, a federated optimization algorithm for training with only positive labels is presented in \cite{Yuetal2020}, and adaptive algorithms that incorporate more sophisticated gradient-based iterative methods, such as Adam \cite{KingmaBa2015}, into the federated learning framework are proposed in \cite{Reddietal2021}. In a different vein, one-shot federated learning is considered in \cite{GuhaTalwakarSmith2019}, where only one round of client-to-server communication is used in the optimization process (also see follow-up work). We refer readers to \cite{Kairouzetal2021} and the references therein for an in-depth further discussion of federated learning, distributed optimization, and related ideas.

Finally, we note that in the traditional federated learning literature, only the clients possess training data which the server cannot access. However, as mentioned above, our work assumes that the server also possesses some training data samples. As explained in \cite{Maietal2023}, this is a reasonable assumption in many federated learning settings because the server could also function as a client and have access to its own private training data, it may have to collect some training data from clients (despite privacy concerns) for system monitoring purposes, it may collect data from test devices (or test clients) which do not participate in training, and it could even possess simulated training data from models. For these reasons, there is a growing recent literature on federated learning with server data, cf. \cite{YangShen2022,MaiLaZhang2023,Maietal2023} and the references therein. Furthermore, as remarked earlier, the server and client datasets can be heterogeneous in our work since the smoothness (or approximate low rankness) of loss functions will circumvent the need for data homogeneity in our analysis.

\subsection{Main Contributions}
\label{Main Contributions}

As mentioned earlier, the vast literature on federated optimization, including the aforementioned works, deals with the ERM problem representation in \eqref{Eq: Standard formulation 2}. Thus, these works do not exploit smoothness of loss functions in the training data for federated learning. In this paper, we fill this void by making the following key theoretical contributions: 
\begin{enumerate}
\item We define a notion of \emph{federated oracle complexity} in \defref{Def: Federated oracle complexity} to capture the running time of federated optimization algorithms, and argue its suitability using some simple probabilistic analysis. This formally generalizes classical oracle complexity of iterative optimization methods (see \cite[Section 1.1]{Nesterov2004}) to a federated learning setting. Our definition allows us to theoretically compare our proposed federated optimization algorithm (mentioned next) with the benchmark $\FedAve$ method from the literature. We believe that this definition will be of further utility in future research in the area.
\item We propose a new \emph{Federated Low Rank Gradient Descent} ($\lrgd$) method in \algoref{Alg: FedLRGD} to solve the ERM problem \eqref{Eq: Standard formulation}, or equivalently, \eqref{Eq: Standard formulation 3}, in the federated learning setting. Our method exploits the approximate low rankness of the gradient of the loss function, which is a consequence of its smoothness in data, by using a few rounds of communication to first learn weights that allow the server to approximate clients' gradients by computing weighted sums of its own gradients. Then, the server can solve \eqref{Eq: Standard formulation} by running an inexact GD method on its own, where the server estimates full gradients at each iteration from gradients corresponding to its private data. Moreover, like other federated optimization algorithms in the literature, our method preserves the privacy of the server and clients by never exchanging data samples, and it works in settings where server and client data is heterogeneous.
\item Under the smoothness, strong convexity, and non-singularity assumptions outlined in \secref{Assumptions on Loss Function} and \secref{Federated Oracle Complexity of FedLRGD}, we analyze the $\lrgd$ algorithm and show in \thmref{Thm: Federated Oracle Complexity of FedLRGD} that its federated oracle complexity scales like $\phi m (p/\epsilon)^{\Theta(d/\eta)}$ (neglecting typically sub-dominant factors), where $\phi \gg 1$ is the ratio of communication cost to computation cost (see \secref{Federated Oracle Complexity}), $\epsilon > 0$ is the desired approximation accuracy of the solution (see \secref{Assumptions on Loss Function}), and $\eta > 0$ denotes a H\"{o}lder class parameter which determines the level of smoothness of the loss function in the data (see \secref{Assumptions on Loss Function}). 
\item We also calculate the federated oracle complexity of the $\FedAve$ algorithm in \thmref{Thm: Federated Oracle Complexity of FedAve} by building on known results from the literature, cf. \cite{Reisizadehetal2020a}, and show that it scales like $\phi m (p/ \epsilon)^{3/4}$. Using this calculation, we demonstrate in \propref{Prop: Comparison between FedLRGD and FedAve} that when the data dimension is small, e.g., $d = O(\log(n)^{0.99})$, and the gradient of the loss function is sufficiently smooth in the training data so that it is approximately low rank, i.e., $\eta = \Theta(d)$, the federated oracle complexity of $\lrgd$ is much smaller than that of the benchmark $\FedAve$ algorithm.
\item In the process of analyzing the $\lrgd$ algorithm, we also derive a new statistical result in \thmref{Thm: Low Rank Approximation} that utilizes the smoothness of loss functions in the training data to bound the Frobenius norm error in estimating a latent variable model (or graphon) corresponding to the loss function with its low rank approximation. This result generalizes the graphon approximation results in \cite{Chatterjee2015,Xu2018,Agarwaletal2021}, and could be of independent utility for future research in high-dimensional statistics.
\end{enumerate}

\subsection{Outline}
\label{Outline}

We briefly delineate the remainder of our discussion here. In \secref{Formal Setup}, we precisely state and explain the notation and assumptions used in our subsequent analysis, and also introduce and expound upon the concept of federated oracle complexity. In \secref{Main Results and Discussion}, we present and discuss our main technical results and the $\lrgd$ algorithm mentioned above. Then, we provide proofs of our results on latent variable model approximation and the federated oracle complexity of $\lrgd$ in \secref{Analysis of Latent Variable Model Approximation} and \secref{Analysis of Federated Low Rank Gradient Descent}, respectively. The proofs of all other results are deferred to the appendices. Although the focus of this work is to theoretically analyze $\lrgd$ and demonstrate the resulting gain in federated oracle complexity, we also present some pertinent simulation results in \appref{Low Rank Structure in Neural Networks}. These simulations illustrate that gradients of loss functions corresponding to neural networks trained on the MNIST database for handwritten digit recognition (see \cite{LeCunCortesBurgesMNIST}) and CIFAR-10 dataset of tiny images (see \cite{KrizhevskyNairHintonCIFAR10}) exhibit approximate low rankness. This suggests that algorithms which exploit low rank structure, such as $\lrgd$, could have potential impact in applications. Finally, we conclude our discussion in \secref{Conclusion}.

\section{Formal Setup}
\label{Formal Setup}

Before deriving our main results, we need to present some essential formalism. To that end, \secref{Notational Preliminaries} contains various notational conventions used in this work, \secref{Assumptions on Loss Function} illustrates the precise conditions needed for our results to hold, \secref{Reasoning and Background Literature for Assumptions} explains the reasoning behind these conditions and provides some related literature, and \secref{Federated Oracle Complexity} develops the notion of federated oracle complexity which measures the running time of federated optimization algorithms.

\subsection{Notational Preliminaries}
\label{Notational Preliminaries}

In this paper, we let $\N \triangleq \{1,2,3,\dots\}$, $\Z_+ = \N \cup \!\{0\}$, and $\R_+$ denote the sets of natural numbers, non-negative integers, and non-negative real numbers, respectively, and for $m \in \N$, we let $[m] \triangleq \{1,\dots,m\}$. We let $\P(\cdot)$ and $\E[\cdot]$ denote the probability and expectation operators, respectively, where the underlying probability laws will be clear from context. Furthermore, $\exp(\cdot)$ and $\log(\cdot)$ denote the natural exponential and logarithm functions with base $e$, respectively, $\I\{\cdot\}$ denotes the indicator function that equals $1$ if its input proposition is true and equals $0$ otherwise, and $\lceil \cdot \rceil$ denotes the ceiling function. Lastly, we will utilize Bachmann-Landau asymptotic notation throughout this paper, e.g., $O(\cdot)$, $\Omega(\cdot)$, $\Theta(\cdot)$, etc., where the limiting variable will be clear from context.

In addition, for any matrix $A \in \R^{m \times k}$ with $m,k \in \N$, $A_{i,j}$ denotes the $(i,j)$th entry of $A$ for $i \in [m]$ and $j \in [k]$, $A^{\T}$ denotes the transpose of $A$, $A^{-1}$ denotes the inverse of $A$ (when $m = k$ and $A$ is non-singular), $\rank(A)$ denotes the rank of $A$, $\|A\|_{\Fro}$ denotes the Frobenius norm of $A$, and $\|A\|_{\op}$ denotes the spectral or (induced $\ell^2$) operator norm of $A$. Similarly, for any (column) vector $y \in \R^m$, $y_i$ denotes the $i$th entry of $y$ for $i \in [m]$, and $\|y\|_q$ denotes the $\ell^q$-norm of $y$ for $q \in [1,+\infty]$. Moreover, for any continuously differentiable function $h:\R^m \rightarrow \R$, $h(y)$, we define $\partial h/\partial y_i : \R^m \rightarrow \R$ to be its partial derivative with respect to $y_i$ for $i \in [m]$, and $\nabla_y h : \R^m \rightarrow \R^m$, $\nabla_y h = \left[\partial h/\partial y_1 \, \cdots \, \partial h/\partial y_m\right]^{\T}$ to be its gradient. As in \cite{JadbabaieMakurShah2021}, we will also require some multi-index notation. So, for any $m$-tuple $s = (s_1,\dots,s_m) \in \Z_+^m$, we let $|s| = \|s\|_{1} = s_1 + \cdots + s_m$, $s! = s_1! \cdots s_m!$, $y^s = y_1^{s_1} \cdots y_m^{s_m}$ for $y \in \R^m$, and $\nabla^s h = \nabla_y^s h = \partial^{|s|} h/(\partial y_1^{s_1} \cdots \partial y_m^{s_m})$ be the $s$th partial derivative of $h$ with respect to $y$. 

\subsection{Assumptions on Loss Function}
\label{Assumptions on Loss Function}

As mentioned earlier, for any $n,d,p \in \N$, assume that we are given $n$ samples of training data $x^{(1)},\dots,x^{(n)} \in [0,1]^d$ and a loss function $f : [0,1]^d \times \R^p \rightarrow \R$, $f(x;\theta)$, where $\theta \in \R^p$ is the parameter vector. In our ensuing analysis, we will impose the following assumptions on the loss function (cf. \cite{Nesterov2004,Bubeck2015,JadbabaieMakurShah2021}):
\begin{enumerate}
\item \textbf{Smoothness in parameter $\theta$:} There exists $L_1 > 0$ such that for all $x \in [0,1]^{d}$, $\nabla_{\theta} f(x;\cdot) : \R^p \rightarrow \R^p$ exists and is $L_1$-Lipschitz continuous:
\begin{align}
\forall \theta^{(1)},\theta^{(2)} \in \R^p, \, & \left\|\nabla_{\theta} f(x;\theta^{(1)}) - \nabla_{\theta} f(x;\theta^{(2)})\right\|_2 \nonumber \\ 
& \qquad \qquad \enspace \leq L_1 \left\|\theta^{(1)} - \theta^{(2)}\right\|_2 . 
\end{align}
\item \textbf{Strong convexity:} There exists $\mu > 0$ such that for all $x \in [0,1]^{d}$, $f(x;\cdot):\R^p \rightarrow \R$ is $\mu$-strongly convex:
\begin{align}
\forall \theta^{(1)},\theta^{(2)} & \in \R^p, \nonumber \\
f(x;\theta^{(1)}) & \geq f(x;\theta^{(2)}) + \nabla_{\theta} f (x;\theta^{(2)})^{\T} (\theta^{(1)} - \theta^{(2)}) \nonumber \\
& \quad \, + \frac{\mu}{2} \left\|\theta^{(1)} - \theta^{(2)}\right\|_2^2 . 
\end{align}
Furthermore, we let $\kappa \triangleq L_1/\mu \geq 1$ be the \emph{condition number} of the loss function.
\item \textbf{Smoothness in data $x$:} There exist $\eta > 0$ and $L_2 > 0$ such that for all fixed $\vartheta \in \R^{p}$ and for all $i \in [p]$, the $i$th partial derivative of $f$ at $\vartheta$, denoted $g_{i}(\cdot;\vartheta) \triangleq \frac{\partial f}{\partial \theta_i}(\cdot;\vartheta) : [0,1]^d \rightarrow \R$, belongs to the $(\eta,L_2)$-\emph{H\"{o}lder class} (cf. \cite[Definition 1.2]{Tsybakov2009}), i.e., $g_{i}(\cdot;\vartheta) : [0,1]^d \rightarrow \R$ is $l = \lceil \eta \rceil - 1$ times differentiable, and for every $s = (s_1,\dots,s_d) \in \Z_+^d$ such that $|s| = l$, we have
\begin{align}
\forall y^{(1)},y^{(2)} \in [0,1]^d, & \enspace \left|\nabla_x^s g_i(y^{(1)};\vartheta) - \nabla_x^s g_i(y^{(2)};\vartheta)\right| \nonumber \\
& \quad \quad \leq L_2 \left\|y^{(1)} - y^{(2)}\right\|_1^{\eta - l} . 
\label{Eq: Holder condition}
\end{align}
\item \textbf{Non-singularity:} For any (large) size $r \in \N$, any index $i \in [p]$, any sequence of distinct data samples $y^{(1)},\dots,y^{(r)} \in [0,1]^d$, and any sequence of distinct parameter values $\vartheta^{(1)},\dots,\vartheta^{(r)} \in \R^p$,\footnote{More formally, this assumption only holds for $\vartheta^{(1)},\dots,\vartheta^{(r)}$ \emph{almost everywhere}, but we neglect this detail here and in the sequel for simplicity.} the matrix of partial derivatives $G^{(i)} \in \R^{r \times r}$ given by:
\begin{equation}
\label{Eq: Matrix of partial derivatives}
\forall j,k \in [r], \enspace G^{(i)}_{j,k} = g_i(y^{(j)};\vartheta^{(k)}) \, ,
\end{equation}
is invertible. Note that we suppress the dependence of $G^{(i)}$ on $r$ and the sequences of data and parameters for convenience.
\item \textbf{Conditioning:} There exists an absolute constant $\alpha \geq 1$ such that for any (large) $r \in \N$, any $i \in [p]$, any sequence of distinct data samples $y^{(1)},\dots,y^{(r)} \in [0,1]^d$, and any sequence of distinct parameter values $\vartheta^{(1)},\dots,\vartheta^{(r)} \in \R^p$,\footnote{More formally, one would need to impose additional distributional assumptions to prove that this assumption holds for random $\vartheta^{(1)},\dots,\vartheta^{(r)}$ with high probability, but we also neglect this detail here and in the sequel for simplicity.}
the matrix $G^{(i)} \in \R^{r \times r}$ given in \eqref{Eq: Matrix of partial derivatives} satisfies
\begin{equation}
\label{Eq: Min singular value bound}
\left\|(G^{(i)})^{-1} \right\|_{\op} \leq r^{\alpha} \, ,
\end{equation}
i.e., the minimum singular value of $G^{(i)}$ decreases to $0$ at most polynomially fast in the dimension $r$.
\item \textbf{Boundedness:} There exists an absolute constant $B > 0$ such that for all $i \in [p]$, we have
\begin{equation}
\label{Eq: Bounded gradient}
\sup_{x \in [0,1]^d} \sup_{\vartheta \in \R^p} \left|g_i(x;\vartheta)\right| \leq B \, . 
\end{equation}
\end{enumerate}

Next, let $\epsilon > 0$ be the desired approximation accuracy of the solution, $\theta^{\star} \in \R^p$ be the unique global minimizer of the objective function $F$ in \eqref{Eq: Standard formulation}, and 
\begin{equation}
\label{Eq: ERM Minimum}
F_* \triangleq \min_{\theta \in \R^p}{F(\theta)} = F(\theta^{\star}) \, .
\end{equation}
(Note that $\theta^{\star}$ exists and $F_*$ is finite due to the strong convexity assumption.) In the sequel, we consider federated optimization algorithms that output an \emph{$\epsilon$-approximate solution} $\theta^* \in \R^p$ to \eqref{Eq: Standard formulation} such that:
\begin{enumerate}
\item If the algorithm generates a deterministic solution $\theta^*$, we have
\begin{equation}
\label{Eq: Approx solution 1}
F(\theta^*) - F_* \leq \epsilon \, .
\end{equation}
\item If the (stochastic) algorithm produces a random solution $\theta^*$, we have
\begin{equation}
\label{Eq: Approx solution 2}
\E\!\left[F(\theta^*)\right] - F_* \leq \epsilon \, ,
\end{equation}
where the expectation is computed with respect to the law of $\theta^*$.
\end{enumerate}

\subsection{Motivation and Background Literature for Assumptions}
\label{Reasoning and Background Literature for Assumptions}

We briefly explain some of the important assumptions in \secref{Assumptions on Loss Function} and contextualize them within the optimization and statistics literatures. 

Firstly, \emph{smoothness in parameter} is a standard assumption in the optimization for machine learning literature (see, e.g.,\cite{Nesterov2004,Bubeck2015}, and the references therein). This assumption implies that the empirical risk $F(\theta)$ has $L_1$-Lipschitz continuous gradient (which is used in the proof of \thmref{Thm: Federated Oracle Complexity of FedLRGD} in \secref{Analysis of Federated Low Rank Gradient Descent}, cf. \cite[Section 4.1]{JadbabaieMakurShah2021}). In particular, when the gradient of the objective function $F(\theta)$ is $L_1$-Lipschitz continuous, it can be shown that each iteration of GD reduces the objective value:
\begin{equation}
\label{Eq: Greedy GD}
F\!\left(\theta - \frac{1}{L_1} \nabla_{\theta} F(\theta)\right)- F(\theta) \leq -\frac{1}{2 L_1} \left\|\nabla_{\theta} F(\theta)\right\|_2^2
\end{equation}
for all $\theta \in \R^p$ \cite[Equation (3.5)]{Bubeck2015}. The inequality \eqref{Eq: Greedy GD} illustrates the greedy nature in which GD decreases its objective value and is fundamental in the analysis of iterative descent methods. For this reason, this assumption has been used extensively in the optimization literature to prove convergence guarantees in a variety of settings, e.g., \cite{Nemirovskietal2009,SchmidtLeRouxBach2011,FriedlanderSchmidt2012,Dekeletal2012,LeRouxSchmidtBach2012,JohnsonZhang2013,DevolderGlineurNesterov2013,SchmidtLeRouxBach2017,BottouCurtisNocedal2018}, etc. We also impose this assumption since the analysis to obtain \lemref{Lemma: Inexact GD Bound} in \secref{Analysis of Federated Low Rank Gradient Descent}, which is a restatement of \cite[Lemma 2.1]{FriedlanderSchmidt2012}, requires it.

Similarly, \emph{strong convexity} (in parameter) is another standard assumption in the optimization for machine learning literature (see, e.g.,\cite{Nesterov2004,Bubeck2015}, and the references therein). As before, this assumption implies that the empirical risk $F(\theta)$ is strongly convex (which is used in the proof of \thmref{Thm: Federated Oracle Complexity of FedLRGD} in \secref{Analysis of Federated Low Rank Gradient Descent}, cf. \cite[Lemma 2.1.4]{Nesterov2004}, \cite[Section 4.1]{JadbabaieMakurShah2021}). As explained in \cite{Bubeck2015}, the main utility of imposing strong convexity of the objective function $F(\theta)$ is a notable and provable speed-up in the convergence rate of gradient-based iterative methods. Specifically, GD can achieve linear convergence rate (i.e., exponentially fast convergence) when the objective function is strongly convex \cite{Nesterov2004,Bubeck2015}. In addition, the strong convexity assumption often leads to far simpler proofs of convergence and oracle complexity, and easier illustration of the benefits of improvements like momentum (or heavy-ball method) \cite{Polyak1964} and acceleration \cite{Nesterov1983} on oracle complexity (cf. \cite{Nesterov2004,BottouCurtisNocedal2018}). For these reasons, this assumption has also been used extensively in the optimization literature to prove convergence guarantees in a variety of settings, e.g., \cite{Nemirovskietal2009,SchmidtLeRouxBach2011,FriedlanderSchmidt2012,LeRouxSchmidtBach2012,JohnsonZhang2013,DevolderGlineurNesterov2013,SchmidtLeRouxBach2017,BottouCurtisNocedal2018}, etc. We also assume strong convexity for these reasons; indeed, strong convexity is needed for \lemref{Lemma: Inexact GD Bound} in \secref{Analysis of Federated Low Rank Gradient Descent}, which is a restatement of \cite[Lemma 2.1]{FriedlanderSchmidt2012}, to hold and the first term in the bound in \lemref{Lemma: Inexact GD Bound} illustrates the aforementioned exponential convergence.

\emph{Smoothness in data} is a standard assumption in the non-parametric estimation literature (see, e.g.,\cite{Tsybakov2009,ChenShah2018,Wasserman2019}, and the references therein). In simple terms, we can think of a function in a H\"{o}lder class with positive integer parameter $\eta$ as an $(\eta-1)$-times differentiable function with Lipschitz continuous $(\eta-1)$th partial derivatives (or even an $\eta$-times differentiable function with bounded $\eta$th partial derivatives). Hence, the size of the parameter $\eta$ controls the degree of smoothness of functions in a H\"{o}lder class. Such H\"{o}lder class assumptions are used in various non-parametric regression techniques, e.g., kernel regression such as Nadaraya-Watson estimators \cite{Nadaraya1964,Watson1964,Tsybakov2009}, local polynomial regression \cite{FanGijbels1996,ClevelandLoader1996,Tsybakov2009}, and nearest neighbor methods \cite{FixHodges1951,CoverHart1967,ChenShah2018}. At a high-level, the basic idea of imposing smoothness assumptions for regression is the following: If an unknown function living in a ``large'' (or non-parametric) class is known to be smoother, then it can be better estimated from samples using Taylor polynomials. When defining the H\"{o}lder condition in \eqref{Eq: Holder condition}, one can in principle use any $\ell^q$-norm (see, e.g., \cite{Xu2018,Wasserman2019,Agarwaletal2021,JadbabaieMakurShah2021}). We use the $\ell^1$-norm to define \eqref{Eq: Holder condition} because this yields the largest class of functions for fixed parameters $(\eta,L_2)$; indeed, if \eqref{Eq: Holder condition} is satisfied for any $\ell^q$-norm, then it is also satisfied for the $\ell^1$-norm using the monotonicity of $\ell^q$-norms. In this sense, our definition includes the largest class of functions. In \appref{App: Logistic Example}, we provide an example of a commonly used loss function in machine learning that satisfies the aforementioned smoothness and strong convexity assumptions. 

In a recent sequence of works in high-dimensional statistics, it has been shown that matrices defined by bivariate graphon functions or latent variable models are approximately low rank when the function satisfies H\"{o}lder class assumptions on one of the variables \cite{Chatterjee2015,Xu2018,Agarwaletal2021}. (We generalize these results in \thmref{Thm: Low Rank Approximation}.) Partial derivatives of loss functions $g_i$ can be construed as bivariate functions of the data $x$ and parameter $\theta$, and we impose H\"{o}lder class smoothness in data on them in order to exploit the resulting approximate low rank structure for federated optimization (as we will elaborate in \secref{Main Results and Discussion}). As noted earlier and in \cite{JadbabaieMakurShah2021}, smoothness in data exists in a variety of ERM problems, e.g., (regularized) linear regression \cite[Chapter 3]{HastieTibshiraniFriedman2009}, (regularized) logistic regression \cite[Section 4.4]{HastieTibshiraniFriedman2009}, deep neural network training with smooth loss and activation functions \cite[Chapter 5]{Bishop2006}, etc. The close connection between smoothness of bivariate functions and approximate low rank structure may seem peculiar at first glance. Perceiving a bivariate function as a kernel of an integral operator, the results in \cite{Chatterjee2015,Xu2018,Agarwaletal2021} suggest that smoothness of the kernel is closely related to approximate low rankness of the associated integral operator. This relation has its roots in more familiar ideas from harmonic analysis. Indeed, in the setting of difference (or convolution) kernels that are defined by a univariate function, increasing the smoothness of the function is essentially equivalent to faster decay of its Fourier transform \cite[Theorem 60.2]{Korner1988} (also see \cite{SteinShakarchi2003Fourier,SteinShakarchi2003Complex}). This, in turn, can be interpreted as approximate low rankness of the associated convolution operator, because the Fourier transform is precisely the spectrum of the convolution operator. 

Approximate low rank structure, broadly construed, has been utilized in a variety of areas, e.g., matrix estimation \cite{CandesPlan2010}, reinforcement learning \cite{Shahetal2020}, function approximation via ridge functions \cite{LoganShepp1975,DonohoJohnstone1989}, semidefinite programming \cite{ChenYangZhu2014}, uncertainty quantification \cite{ConstantineDowWang2014}, etc. In this work, we seek to exploit approximate low rank structure in the context of federated optimization. Just as smoothness is prevalent in many ERM problems, a litany of empirical investigations in the literature have directly demonstrated various forms of approximate low rank structure in large-scale machine learning tasks. For instance, the empirical risk and loss functions corresponding to deep neural network training problems display approximate low rank structure in their gradients (which live in low-dimensional subspaces) or Hessians (which have decaying eigenvalues, thereby inducing low rank gradients via Taylor approximation arguments) \cite{GurAriRobertsDyer2018,Sagunetal2018,Papyan2019,Cuietal2020,Wuetal2021,SinghBachmannHofmann2021}. In particular, \cite{GurAriRobertsDyer2018} shows that when training deep neural networks for $K$-ary classification, the computed gradients belong to $K$-dimensional eigenspaces of the Hessian corresponding to its dominant eigenvalues. On a related front, low rankness has also been observed in weight matrices of deep neural networks \cite{LeJegelka2022,Galantietal2023}, neural collapse has been demonstrated when training deep neural networks \cite{PapyanHanDonoho2020,Fangetal2021,HuiBelkinNakkiran2022}, and training memory for neural networks has been saved using low rank parameterizations of gradient weight matrices \cite{Gooneratneetal2020}. Our work on exploiting approximate low rank structure for federated optimization, albeit different, is inspired by many of these observations.

The non-singularity, conditioning, and boundedness assumptions in \secref{Assumptions on Loss Function} and \secref{Federated Oracle Complexity of FedLRGD} have been imposed for technical purposes for the proofs. Variants of such assumptions pertaining to matrix invertibility and spectra are often made for analytical tractability in theoretical studies in high-dimensional statistics (see, e.g., the invertibility and spectral boundedness assumptions made in \cite[Section 2]{HsuKakadeZhang2012} to analyze linear regression). While it is difficult to provide simple and easily interpretable sufficient conditions that ensure the non-singularity assumptions in \secref{Assumptions on Loss Function} and \secref{Federated Oracle Complexity of FedLRGD}, we provide some intuition for when we expect such assumptions to hold. To this end, consider the non-singularity assumption in \secref{Assumptions on Loss Function}. As before, let us construe $g_i(x,\theta)$ as the kernel of an integral operator. Then, the spectral theory (specifically, singular value decomposition) of compact operators states that the kernel can be written as \cite[Theorem 9]{Shahetal2020} (also see \cite{SteinShakarchi2005}):
\begin{equation}
\label{Eq: SVD of g}
g_i(x,\theta) = \sum_{j = 1}^{\infty}{\sigma_j u_j(x) v_j(\theta)} \, ,
\end{equation}
where $\{\sigma_j \geq 0 : j \in \N\}$ are singular values that converge to $0$ as $j \rightarrow \infty$, $\{u_j: j \in \N\}$ and $\{v_j : j \in \N\}$ are orthonormal bases of singular vector functions, and although the equality above usually holds in an $\L^2$-sense, it can also hold in a pointwise sense under some assumptions. Hence, for data samples $y^{(1)},\dots,y^{(r)}$ and parameter values $\vartheta^{(1)},\dots,\vartheta^{(r)}$, we may write the matrix $G^{(i)}$ as 
\begin{equation}
\label{Eq: Infinite sum vector form of G}
G^{(i)} = \sum_{j = 1}^{\infty} \sigma_j \underbrace{\left[ \begin{array}{c} u_j(y^{(1)}) \\ \vdots \\ u_j(y^{(r)}) \end{array}\right]}_{= \, \mathsf{u}_j} \underbrace{ \left[ v_j(\vartheta^{(1)}) \, \cdots \, v_j(\vartheta^{(r)}) \right]  }_{= \, \mathsf{v}_j^{\T}} \, ,
\end{equation}
using \eqref{Eq: Matrix of partial derivatives} and \eqref{Eq: SVD of g}, where we let $\mathsf{u}_j = \big[ u_j(y^{(1)}) \, \cdots \, u_j(y^{(r)}) \big]^{\T}$ and $\mathsf{v}_j = \big[ v_j(\vartheta^{(1)}) \, \cdots \, v_j(\vartheta^{(r)}) \big]^{\T}$. Since the integral operator corresponding to $g_i$ is a map between infinite-dimensional spaces, we typically have infinitely many non-zero $\sigma_j$'s. Moreover, since $\{u_j: j \in \N\}$ and $\{v_j : j \in \N\}$ are linearly independent sets of functions, we  typically have $r$ non-zero unit-rank terms $\sigma_j \mathsf{u}_j \mathsf{v}_j^{\T}$ in \eqref{Eq: Infinite sum vector form of G}, with linearly independent $\mathsf{u}_j$'s and $\mathsf{v}_j$'s, whose linear independence is not cancelled out by the other terms. Thus, $G^{(i)}$ has rank $r$ and is invertible. This provides some intuition behind the non-singularity assumption in \secref{Assumptions on Loss Function}. The non-singularity of monomials and approximation assumptions in \secref{Federated Oracle Complexity of FedLRGD} have similar underlying intuition based on sampling of linearly independent functions (cf. \eqref{Eq: Phi def} and \eqref{Eq: Rank r form}, \eqref{Eq: cP def}, respectively). In closing this subsection, we remark that one well-known setting where all sampled matrices of a symmetric bivariate function are invertible, and in fact, positive definite, is when the bivariate function itself defines a valid positive definite kernel (in the reproducing kernel Hilbert space sense), cf. \cite[Section 4-2-4.6]{MantonAmblard2014}. In the context of \secref{Assumptions on Loss Function}, we cannot directly assume that every $g_i$ is a positive definite kernel since $g_i$ is asymmetric in general and we only require invertibility of sampled matrices, not positive definiteness, which is a much stronger condition. However, alternative versions of such kernel conditions could be developed in future work to provide cleaner assumptions that imply various non-singularity conditions. 

Finally, we note that, strictly speaking, the boundedness and strong convexity assumptions make sense simultaneously when the parameter space is compact. Although our analysis is carried out with the parameter space $\R^p$, since iterates of inexact GD in \algoref{Alg: FedLRGD} only live in some compact space, our analysis carries over to the compact setting. For example, one way to rigorously carry over our analysis when it is assumed that the parameter space is compact is to run projected inexact GD instead of inexact GD in \algoref{Alg: FedLRGD}. Then, the parameter iterates of \algoref{Alg: FedLRGD} remain in the compact space, and we can execute the analysis of our algorithm using the convergence analysis in, e.g., \cite[Proposition 3]{SchmidtLeRouxBach2011}. For simplicity of exposition, however, we do not use projected inexact GD and simply take the parameter space as $\R^p$ in this paper.

\subsection{Federated Oracle Complexity}
\label{Federated Oracle Complexity}

In the traditional (non-distributed) optimization literature, the running time of first-order iterative methods, such as GD and SGD, is abstractly measured by the notion of (first-order) \emph{oracle complexity}, i.e., the total number of queries to an oracle that returns the gradient of the loss function with respect to the parameter \cite{NemirovskiiYudin1983} (also see \cite[Section 1.1]{Nesterov2004}). This is because gradient computations are usually the most expensive operations in such algorithms. However, the problem of measuring running time is quite different for federated optimization algorithms, because the cost of clients communicating with servers is typically much higher than the running time of gradient computations. Currently, simulations based on several informal principles (such as the aforementioned one) are often used to empirically compare the efficiency of different federated optimization algorithms, rather than theoretically analyzing any formal notion of complexity; see, e.g., \cite{Reisizadehetal2020a} and the references therein. To partially remedy this, inspired by \cite{Reisizadehetal2020a}, we propose a \emph{new} definition of federated oracle complexity by formalizing some of the informal rules delineated in \secref{Federated Learning} using simple probabilistic approximations. We note that other notions of oracle complexity for distributed optimization have either implicitly or explicitly been posed in the literature, cf. graph oracle models \cite{Woodworthetal2018} or \cite{LeeMichelusiScutari2021} and the references therein. In fact, our definition of federated oracle complexity is reminiscent of the complexity measure introduced in \cite{Berahasetal2019}, but our definition is adapted to the centralized, synchronous federated learning setting rather than distributed optimization frameworks.

Consider any algorithm belonging to the family outlined in \secref{Federated Learning}. In the sequel, we will approximate the running times of the two time intensive phases of each epoch: the gradient computation phase and the client-to-server communication phase.

\subsubsection{Gradient Computation Phase}

In the algorithms we consider in this work, there will be two kinds of gradient computation phases; either only active clients will simultaneously perform $b \in \N$ gradient computations each, or only the server will perform $b$ gradient computations. In the former case, each active client computes $b$ gradients of the loss function with respect to the parameter in an epoch. As in the traditional setting, these gradient computations dominate the running time of the client. For each client, its $b$ successive gradient computations are modeled as $b$ successive delayed \emph{Poisson arrivals}, cf. \cite{Leeetal2018}. Equivalently, each gradient computation is assigned a statistically independent (abstract) running time of $1 + X$ units, where $1$ is the delay and $X$ is an exponential random variable with rate $1$ \cite{Leeetal2018}.\footnote{For simplicity, we assume that the shift and scale parameters of the shifted exponential distribution are equal.} Thus, the running time of each client in an epoch is $b + Y$ units, where the random variable $Y$ has an \emph{Erlang} (or \emph{gamma}) \emph{distribution} with shape parameter $b$ and rate $1$ \cite[Section 2.2.2]{Gallager2013}. Since only $\tau m$ clients are active in any epoch, the total computation time of an epoch is determined by the slowest client, and is equal to $b + \max\{Y_1,\dots,Y_{\tau m}\}$ units, where $Y_1,\dots,Y_{\tau m}$ are independent and identically distributed (i.i.d.) Erlang random variables with shape parameter $b$ and rate $1$. 

In most federated learning contexts, $\tau m$ is very large and we can employ \emph{extreme value theory} to approximate the probability distribution of $\max\{Y_1,\dots,Y_{\tau m}\}$. To this end, observe that the cumulative distribution function (CDF) $F_Y : [0,\infty) \rightarrow [0,1]$ of $Y$ is a \emph{von Mises function} with auxiliary function, cf. \cite[Section 3.3.3, Example 3.3.21]{EmbrechtsKluppelbergMikosch1997}:
\begin{equation}
\label{Eq: Auxiliary function}
\forall x > 0, \, a(x) = \sum_{k = 0}^{b-1}{\frac{(b-1)!}{(b-k-1)!} x^{-k}} = 1 + \frac{b-1}{x} + O\!\left(\frac{1}{x^2}\right) ,
\end{equation}
where the second equality holds when $x \gg b$ (as will be the case for us). Hence, employing the \emph{Fisher-Tippett-Gnedenko theorem} \cite[Theorem 3.2.3]{EmbrechtsKluppelbergMikosch1997}, we have convergence in distribution to the \emph{Gumbel distribution} (or the \emph{generalized extreme value distribution} of Type I):
\begin{align}
\forall x \in \R, \enspace \lim_{k \rightarrow \infty}&{\P\!\left(\frac{\max\{Y_1,\dots,Y_{k}\} - F_Y^{-1}(1 - k^{-1})}{a(F_Y^{-1}(1 - k^{-1}))} \leq x \right)} \nonumber \\
& \qquad \qquad \qquad \qquad \qquad = \exp(-e^{-x}) \, ,
\label{Eq: EVT}
\end{align}
where $F_Y^{-1} : [0,1) \rightarrow [0,\infty)$ is the inverse of $F_Y$, i.e., the quantile function, and we also utilize \cite[Proposition 3.3.25]{EmbrechtsKluppelbergMikosch1997}. According to \propref{Prop: Quantile Function of Erlang Distribution} in \appref{Bounds on Erlang Quantile Function}, $\Theta(\log(k)) + \Omega(\log(b)) \leq F_Y^{-1}(1 - k^{-1}) \leq \Theta(\log(k)) + O(b \log(b))$ and we estimate $F_Y^{-1}(1 - k^{-1})$ as follows for simplicity:
\begin{equation}
\label{Eq: Inverse CDF approximation}
F_Y^{-1}(1 - k^{-1}) \approx \log(k) + b \, .
\end{equation}
Thus, using \eqref{Eq: EVT}, the expected total computation time of an epoch can be approximated as
\begin{align}
& b + \E\!\left[\max\{Y_1,\dots,Y_{\tau m}\}\right] \nonumber \\
& \quad \approx \gamma a(F_Y^{-1}(1 - (\tau m)^{-1})) + F_Y^{-1}(1 - (\tau m)^{-1}) + b \nonumber \\
& \quad \approx \gamma + \frac{\gamma (b-1)}{F_Y^{-1}(1 - (\tau m)^{-1})} + F_Y^{-1}(1 - (\tau m)^{-1}) + b \nonumber \\
& \quad \approx \log(\tau m) + 2 b + \underbrace{\gamma + \frac{\gamma b}{\log(\tau m) + b}}_{= \Theta(1)} \nonumber \\
& \quad \approx \log(\tau m) + 2 b \, ,
\label{Eq: Total computation time}
\end{align}
where the \emph{Euler-Mascheroni constant} $\gamma = -\Psi(1) = 0.57721\dots$ (with $\Psi(\cdot)$ denoting the digamma function) is the expected value of the Gumbel distribution with CDF $x \mapsto \exp(-e^{-x})$, the second line follows from \eqref{Eq: Auxiliary function} (neglecting lower order terms), the third line follows from \eqref{Eq: Inverse CDF approximation}, and we neglect the constant in the final line. Since the server does not compute any gradients in this case, any other computation performed by the server or clients is assumed to have negligible cost compared to that in \eqref{Eq: Total computation time}.

On the other hand, in the case where only the server performs $b$ gradients of the loss function with respect to the parameter in an epoch, we may construe the server as a single active client. So, the running time of the server in an epoch is $b + Y$ units, where $Y$ has an Erlang distribution with shape parameter $b$ and rate $1$. Hence, expected total computation time of an epoch is
\begin{equation}
\label{Eq: Total computation time 2}
b + \E\!\left[Y\right] = 2 b \, .
\end{equation}
Combining \eqref{Eq: Total computation time} and \eqref{Eq: Total computation time 2}, the \emph{expected total computation time} per epoch is given by
\begin{equation}
\label{Eq: Expected total computation time}
\begin{aligned}
& \E\!\left[\text{total computation time}\right] \\
& = 
\begin{cases}
2 b + \log(\tau m) \, , & \parbox[]{12.5em}{if active clients simultaneously compute gradients,} \\
2 b \, , & \text{if server computes gradients} .
\end{cases} 
\end{aligned}
\end{equation}

\subsubsection{Client-to-Server Communication Phase}

The total communication cost in any epoch is dominated by the client-to-server communication. Since the server has limited bandwidth and often uses a queue to receive updates from clients, this cost is proportional to the number of active clients $\tau m$. To see this, let us model the client-to-server communication as a first in, first out (FIFO) \emph{M/M/1 queue} \cite{Gallager2013}.\footnote{We utilize \emph{Kendall's notation}, where the first ``M'' implies that arrivals obey a Poisson process, the second ``M'' implies that the service time is memoryless, and the ``1'' denotes that we only have one server.} In such a queue, the updates from active clients arrive at the server according to a \emph{Poisson process} with rate $1$ (for simplicity) \cite[Chapter 2]{Gallager2013}. Moreover, the server only accepts the transmission of any one active client at any instant in time, and from the instant an update arrives at the server, it takes a total transmission time given by an independent exponential distribution with rate $\nu > 1$.\footnote{The assumption that $\nu > 1$ ensures that the M/M/1 queue is ``stable,'' and the number of active clients in the queue does not diverge over time.} Any other active client updates that arrive during this transmission time are put in the FIFO queue for eventual transmission to the server. Finally, let us assume for simplicity that the M/M/1 queue is in steady state.

We seek to calculate the expected time it takes for all active clients to complete transmitting their updates to the server. So, let time $t = 0$ denote the beginning of the (synchronized) client-to-server communication phase, and define the departure process $\{D(t) \in \Z_+: t \geq 0\}$, where $D(t)$ is the number of client updates that have arrived at the server and been entirely transmitted to the server up to time $t \geq 0$ (this excludes any client updates in the queue at time $t$). By \emph{Burke's theorem} \cite[Theorem 7.6.5]{Gallager2013}, $D(t)$ is a Poisson process with rate $1$, i.e., the departure process has the same rate as the arrival process. Hence, the total time it takes for all $\tau m$ active clients to complete transmitting their updates to the server,
\begin{equation}
S \triangleq \inf\!\left\{t \geq 0 : D(t) = \tau m \right\} \in \R_+ \, ,
\end{equation}
has an Erlang distribution with shape parameter $\tau m$ and rate $1$. Therefore, we have 
\begin{equation}
\label{Eq: Erlang mean}
\E[S] = \tau m \, .
\end{equation} 

Next, let $\phi > 0$ be a possibly $p$-dependent hyper-parameter that represents the \emph{communication-to-computation ratio} (cf. \cite{Reisizadehetal2020a}); $\phi$ is the ratio of the average time it takes for a client to communicate a vector in $\R^p$ with the server and the average time it takes a client to compute a single gradient vector in $\R^p$. In most federated learning contexts, $\phi \gg 1$ because the average communication cost of sending a vector to the server far outweighs the average complexity of a gradient computation. Then, using \eqref{Eq: Erlang mean}, the \emph{expected total communication time} per epoch is given by
\begin{equation}
\label{Eq: Expected total communication time}
\E\!\left[\text{total communication time}\right] = \phi \tau m \, ,
\end{equation}
which is in the ``same scale'' as the expected total computation time in \eqref{Eq: Expected total computation time} because we multiply by $\phi$.

\subsubsection{Overall Running Time}

In the algorithms we consider in this work, there will be three kinds of epochs with non-negligible running time:
\begin{enumerate}
\item \underline{Type A:} Active clients simultaneously compute gradients and then communicate with the server. The server does not compute gradients.
\item \underline{Type B:} The server computes gradients. Clients do not compute gradients and do not communicate with the server. 
\item \underline{Type C:} The server and clients do not compute gradients, but active clients communicate with the server.
\end{enumerate}
Combining \eqref{Eq: Expected total computation time} and \eqref{Eq: Expected total communication time}, the \emph{expected total running time} per epoch is
\begin{align}
& \E\!\left[\text{total running time}\right] \nonumber \\
& \qquad \qquad = 
\begin{cases}
2 b + \log(\tau m) + \phi \tau m \, , & \text{if Type A} \\
2 b \, , & \text{if Type B} \\
\phi \tau m \, , & \text{if Type C} 
\end{cases} \label{Eq: Pre expected total running time} \\
& \qquad \qquad =
\begin{cases}
\Theta(b + \phi \tau m) \, , & \text{if Type A} \\
\Theta(b) \, , & \text{if Type B} \\
\Theta(\phi \tau m) \, , & \text{if Type C} 
\end{cases} .
\label{Eq: Expected total running time}
\end{align}

The ensuing definition translates \eqref{Eq: Expected total running time} into a formal notion of federated oracle complexity by summing the expected running times over all epochs. (Note that we neglect the constants $2$ and the sub-dominant logarithmic term in \eqref{Eq: Pre expected total running time}, because we are only concerned with how the complexity asymptotically scales with important problem parameters.)

\begin{definition}[Federated Oracle Complexity]
\label{Def: Federated oracle complexity}
For any federated optimization algorithm $\A$ belonging to the family of algorithms outlined in \secref{Federated Learning}, we define its (first-order) \emph{federated oracle complexity} as 
$$ \Gamma(\A) \triangleq \sum_{t = 1}^{T}{b_t + \I\!\left\{\text{clients communicate in epoch } t\right\} \phi \tau m} \, , $$
where $T \in \N$ is the number of epochs in $\A$, $b_t \in \Z_+$ (which is possibly $0$) is the number of gradients computed in epoch $t \in [T]$ by each active client or the server, $m \in \N$ is the total number of clients, $\tau \in (0,1]$ is the proportion of active clients that participate in each epoch, and $\phi > 0$ denotes the communication-to-computation ratio.
\end{definition}

We next make some pertinent remarks about \defref{Def: Federated oracle complexity}. Firstly, our model of federated oracle complexity neglects the effect of straggling clients on the latency of federated optimization algorithms. This is because the effect of stragglers is captured by a lower order logarithmic term in \eqref{Eq: Expected total computation time} (and \eqref{Eq: Pre expected total running time}). So, we do not study latency issues in this work. Secondly, \defref{Def: Federated oracle complexity} assumes that in any epoch, all clients compute the same number of gradients, because both the theoretical algorithms we analyze in this work, $\lrgd$ and $\FedAve$ \cite{Reisizadehetal2020a}, satisfy this assumption. In general settings, however, different clients may have to compute different numbers of gradients in an epoch. In such cases, one would need to carry out the earlier extreme value theoretic analysis of the gradient computation phase again to determine the expected total computation time of this phase and alter \defref{Def: Federated oracle complexity} appropriately (e.g., $b_t$ could represent the maximum number of gradients computed by a client across all clients in epoch $t$). Thirdly, note that \defref{Def: Federated oracle complexity} reduces to the usual notion of oracle complexity in non-federated settings \cite{NemirovskiiYudin1983,Nesterov2004}. Indeed, since non-federated settings have no communication cost, we can set $\phi = 0$; in this case, our definition simply counts the total number of gradient oracle calls. Finally, for simplicity in this work, we will assume in all cases that the proportion of active clients $\tau = \Theta(1)$ is constant (with respect to any asymptotically growing problem parameters). In the sequel, we evaluate the efficiency of federated optimization algorithms using \defref{Def: Federated oracle complexity}.

\section{Main Results and Discussion}
\label{Main Results and Discussion}

We discuss our main results in this section. Specifically, we outline some key auxiliary results on low rank latent variable model approximation in \secref{Latent Variable Model Approximation}, describe our new Federated Low Rank Gradient Descent algorithm in \secref{Federated Low Rank Gradient Descent}, present the federated oracle complexity (as defined in \defref{Def: Federated oracle complexity}) of this algorithm in \secref{Federated Oracle Complexity of FedLRGD}, and finally, demonstrate that our algorithm has better federated oracle complexity than the vanilla $\FedAve$ algorithm in \secref{Comparison to Federated Averaging}.

\subsection{Latent Variable Model Approximation}
\label{Latent Variable Model Approximation}

To establish the federated oracle complexity of our algorithm, we will require a basic result about uniformly approximating a smooth function using piecewise polynomials. To illustrate this result, consider any function $g:[0,1]^d \times \R^p \rightarrow \R$, $g(x;\theta)$ such that for all $\vartheta \in \R^p$, the function $g(\cdot;\vartheta):[0,1]^d \rightarrow \R$ belongs to the $(\eta,L_2)$-H\"{o}lder class as defined in \eqref{Eq: Holder condition} in \secref{Assumptions on Loss Function}. Moreover, for any $q > 0$, consider a minimal $\ell^1$-covering of the hypercube $[0,1]^d$ using closed $\ell^1$-balls (or cross-polytopes) with radius $q^{-1}$ and centers $\net = \{z^{(1)},\dots,z^{(|\net|)}\} \subseteq [0,1]^d$, which forms the associated $q^{-1}$-net, such that:
\begin{equation}
\label{Eq: covering}
\forall x \in [0,1]^d, \, \exists z \in \net, \enspace \left\|x - z\right\|_1 \leq \frac{1}{q} \, .
\end{equation}
In particular, \eqref{Eq: covering} shows that the union of the closed $\ell^1$-balls with radius $q^{-1}$ and centers $\net$ contains the set $[0,1]^d$, and the $q^{-1}$-net $\net$ is minimal in the sense that its cardinality $|\net|$ is as small as possible while ensuring \eqref{Eq: covering}, i.e., $|\net|$ is the $\ell^1$-\emph{covering number} of $[0,1]^d$ (cf. \cite[Definition 4.2.2]{Vershynin2018}). Let $B_1,\dots,B_{|\net|} \subseteq \R^d$ be an enumeration of the closed $\ell^1$-balls with radius $q^{-1}$ and centers $z^{(1)},\dots,z^{(|\net|)} \in \net$, respectively, and construct the sets $I_1,\dots,I_{|\net|} \subseteq [0,1]^d$ such that $I_1 = B_1 \cap [0,1]^d$ and $I_i = (B_i\backslash(B_1 \cup \cdots \cup B_{i-1})) \cap [0,1]^d$ for $i \in \{2,\dots,|\net|\}$. Then, 
\begin{equation}
\label{Eq: partition}
\S_{d,q} \triangleq \left\{I_1,\dots,I_{|\net|}\right\}
\end{equation}
forms a partition of the hypercube $[0,1]^d$ such that every point in $I_j$ is at an $\ell^1$-distance of at most $q^{-1}$ from $z^{(j)}$. With this $\ell^1$-ball based partition in place, the next lemma conveys that $g$ can be uniformly approximated using a piecewise polynomial function akin to the proofs of \cite[Proposition 1]{Xu2017} and \cite[Proposition 3.3]{Agarwaletal2021}.

\begin{lemma}[Uniform Piecewise Polynomial Approximation]
\label{Lemma: Uniform Piecewise Polynomial Approximation}
For any $q > 0$ and any $\vartheta \in \R^p$, there exists a piecewise polynomial function $P_{q,l}(\cdot;\vartheta) : [0,1]^d \rightarrow \R$ such that
$$ \sup_{y \in [0,1]^d}{\left|g(y;\vartheta) - P_{q,l}(y;\vartheta)\right|} \leq  \frac{L_2}{l! q^{\eta}} \, , $$
where $l = \lceil \eta \rceil - 1$, the piecewise polynomial function $P_{q,l}(\cdot;\vartheta) : [0,1]^d \rightarrow \R$ is given by the following summation:
$$ \forall y \in [0,1]^d, \enspace P_{q,l}(y;\vartheta) = \sum_{I \in \S_{d,q}}{P_I(y;\vartheta) \I\!\{y \in I\}} \, , $$
and $P_I(\cdot;\vartheta) : [0,1]^d \rightarrow \R$ are Taylor polynomials with (total) degree at most $l$ as shown in \eqref{Eq: Taylor polynomials}.
\end{lemma}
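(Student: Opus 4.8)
The plan is to specify the piecewise polynomial $P_{q,l}(\cdot;\vartheta)$ by taking, on each cell $I \in \S_{d,q}$ with associated center $z \in \net$, the degree-$l$ Taylor polynomial $P_I(\cdot;\vartheta)$ of $g(\cdot;\vartheta)$ expanded about $z$, i.e.\ $P_I(y;\vartheta) = \sum_{|s| \le l} \frac{1}{s!}\nabla_x^s g(z;\vartheta)\,(y - z)^s$ as in \eqref{Eq: Taylor polynomials}. Since $\S_{d,q}$ partitions $[0,1]^d$, every $y$ lies in exactly one cell $I$, and by the construction of the partition we have the crucial metric control $\|y - z\|_1 \le q^{-1}$. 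Thus it suffices to bound $|g(y;\vartheta) - P_I(y;\vartheta)|$ on a single cell, uniformly over $y \in I$, reducing the claim to a purely local Taylor-approximation estimate for a function in the $(\eta,L_2)$-H\"older class.

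To carry out that estimate, I would apply the one-dimensional Taylor theorem with integral remainder to the auxiliary map $t \mapsto g(z + t(y - z);\vartheta)$ on $[0,1]$, expanding its derivatives through the chain rule and the multinomial theorem. After cancelling the degree-$l$ Taylor terms (using $\int_0^1 l(1-t)^{l-1}\,dt = 1$ to absorb the top-order term into the integral), this yields the exact identity
\[
g(y;\vartheta) - P_I(y;\vartheta) = \sum_{|s| = l} \frac{l}{s!}(y - z)^s \int_0^1 (1-t)^{l-1}\big[\nabla_x^s g(z + t(y-z);\vartheta) - \nabla_x^s g(z;\vartheta)\big]\,dt .
\]
The H\"older hypothesis \eqref{Eq: Holder condition} then controls each bracketed difference by $L_2\,t^{\eta - l}\|y - z\|_1^{\eta - l}$, since $|s| = l$ and the argument displacement is $t(y - z)$.

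The remaining work is to assemble the constants. Taking absolute values, the multinomial identity $\sum_{|s|=l}\frac{l!}{s!}|y - z|^s = \|y - z\|_1^l$ (precisely why the $\ell^1$-norm is the natural choice in both the partition and \eqref{Eq: Holder condition}) collapses the sum over multi-indices into a factor $\|y-z\|_1^l/(l-1)!$, while the $t$-integral evaluates to the Beta function $B(\eta - l + 1, l) = \Gamma(\eta - l + 1)\Gamma(l)/\Gamma(\eta + 1)$. Since $\Gamma(l) = (l-1)!$, the $(l-1)!$ factors cancel, and using $\|y - z\|_1 \le q^{-1}$ leaves the clean bound $\frac{L_2}{q^{\eta}}\cdot\frac{\Gamma(\eta - l + 1)}{\Gamma(\eta + 1)}$.

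The one genuinely delicate point, which I expect to be the main obstacle, is verifying that this last constant is at most $1/l!$, i.e.\ that $\Gamma(\eta+1)/\Gamma(\eta - l + 1) = \prod_{j=0}^{l-1}(\eta - j) \ge l!$. This holds because $l = \lceil\eta\rceil - 1 \le \eta$ forces $\eta - j \ge l - j$ for every $j \in \{0,\dots,l-1\}$, so the product dominates $\prod_{j=0}^{l-1}(l-j) = l!$ term by term; the definition $l = \lceil\eta\rceil - 1$ is thus used in an essential way, and one should separately check the endpoint case of integer $\eta$ (where $\eta - l = 1$). Substituting this inequality gives exactly $\frac{L_2}{l!\,q^{\eta}}$, and since the bound is uniform over $y$ and independent of the cell, taking the supremum over $[0,1]^d$ finishes the proof. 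The subtlety lies entirely in the bookkeeping of the Gamma/Beta constants rather than in any conceptual step.
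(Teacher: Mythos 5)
Your proposal is correct and follows essentially the same route as the paper: Taylor expansion of $g(\cdot;\vartheta)$ about the center $z_I$ of each cell of the $\ell^1$-partition, the H\"{o}lder condition applied to the order-$l$ derivatives, and the multinomial identity to collapse the sum over multi-indices, with $\|y-z_I\|_1\le q^{-1}$ supplying the final power of $q$. The only difference is that the paper uses the Lagrange form of the remainder (evaluated at $z_I+\tau(y-z_I)$ with $\tau\in(0,1)$), which yields the factor $1/l!$ immediately and bypasses your Beta-function computation and the inequality $\prod_{j=0}^{l-1}(\eta-j)\ge l!$; your integral-remainder version is valid (for $l\ge 1$, with the $l=0$ case handled directly by the H\"{o}lder condition) and in fact gives a marginally sharper constant before you relax it to $1/l!$.
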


\lemref{Lemma: Uniform Piecewise Polynomial Approximation} is proved in \secref{Proof of Lemma Uniform Piecewise Polynomial Approximation}. While this lemma will suffice to establish federated oracle complexity in our problem, we also present a corollary of it in the context of latent variable models for completeness. 

For any $n,k \in \N$, any $y^{(1)},\dots,y^{(n)} \in [0,1]^d$, and any $\vartheta^{(1)},\dots,\vartheta^{(k)} \in \R^p$, define the matrix $M \in \R^{n \times k}$ such that:
\begin{equation}
\label{Eq: LVM}
\forall i \in [n], \, \forall j \in [k], \enspace M_{i,j} \triangleq g(y^{(i)},\vartheta^{(j)}) \, .
\end{equation}
We refer to $M$ as a \emph{latent variable model} with latent parameters $y^{(1)},\dots,y^{(n)} \in [0,1]^d$ and $\vartheta^{(1)},\dots,\vartheta^{(k)} \in \R^p$, and latent function $g$ (which satisfies the H\"{o}lder smoothness assumptions outlined earlier); see, e.g., \cite{Agarwaletal2021} and the references therein. Furthermore, in the special case where $g$ is symmetric in its two inputs and $M$ is a symmetric matrix, $g$ is known as a \emph{graphon function} (which naturally appears in Aldous-Hoover representations of exchangeable random graphs \cite{Aldous1981}), cf. \cite{Chatterjee2015,ZhangLevinaZhu2017,Xu2018}. Next, let
\begin{equation}
M = \sum_{i = 1}^{\min\{n,k\}}{\sigma_{i} u_i v_i^{\T}}
\end{equation}
be the singular value decomposition of $M$ with singular values $\sigma_1 \geq \sigma_2 \geq \cdots \geq \sigma_{\min\{n,k\}} \geq 0$ and orthonormal bases of singular vectors $\{u_1,\dots,u_n\} \subseteq \R^n$ and $\{v_1,\dots,v_k\} \subseteq \R^k$. Then, by the Eckart-Young-Mirsky theorem, for any $r \in [\min\{n,k\}]$, the best rank $r$ approximation of $M$ in Frobenius norm is given by 
\begin{equation}
M_r \triangleq \sum_{i = 1}^{r}{\sigma_{i} u_i v_i^{\T}} \, ,
\end{equation}
i.e., $M_r = \argmin_{A \in \R^{n \times k}: \, \rank(A) \leq r}{\|M - A\|_{\Fro}}$, cf. \cite[Section 7.4.2]{HornJohnson2013}. Akin to \cite[Proposition 1]{Xu2018} and \cite[Proposition 3.3]{Agarwaletal2021}, the ensuing theorem demonstrates that the latent variable model $M$ can be well-estimated by its low rank approximation due to the smoothness of $g$. We believe that this result may be of broader interest to the matrix and tensor estimation community.

\begin{theorem}[Low Rank Approximation]
\label{Thm: Low Rank Approximation}
Consider the latent variable model $M \in \R^{n \times k}$ in \eqref{Eq: LVM} defined by the latent function $g:[0,1]^d \times \R^p \rightarrow \R$ such that $g(\cdot;\vartheta):[0,1]^d \rightarrow \R$ belongs to the $(\eta,L_2)$-H\"{o}lder class for all $\vartheta \in \R^p$. Then, for any $r \geq e \sqrt{d} \, 2^d (l + d)^d$, we have
$$ \frac{1}{n k} \left\|M - M_r \right\|_{\Fro}^2 \leq \frac{L_2^2 e^{2\eta/d} d^{\eta/d} 4^{\eta} (l + d)^{2 \eta}}{(l!)^2} \left( \frac{1}{r} \right)^{\! 2\eta/d} , $$
where $l = \lceil \eta \rceil - 1$.
\end{theorem}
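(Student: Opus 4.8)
The plan is to bound the Frobenius approximation error by exhibiting a \emph{specific} low rank matrix that approximates $M$ well, and then invoke the optimality of $M_r$ via Eckart-Young-Mirsky. The key observation is that the piecewise polynomial function $P_{q,l}(\cdot;\vartheta)$ from \lemref{Lemma: Uniform Piecewise Polynomial Approximation} induces a matrix of controlled rank: since $P_{q,l}(y;\vartheta)$ is a polynomial of total degree at most $l$ on each of the $|\net|$ pieces of the partition $\S_{d,q}$, the matrix $\tilde{M} \in \R^{n \times k}$ defined by $\tilde{M}_{i,j} = P_{q,l}(y^{(i)};\vartheta^{(j)})$ factors through the monomials $x^s$ with $|s| \leq l$ on each piece. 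First I would count the rank: on a single piece the Taylor polynomial is a linear combination of the $\binom{l+d}{d}$ monomials of degree at most $l$, so aggregating over all $|\net|$ pieces gives $\rank(\tilde{M}) \leq |\net| \binom{l+d}{d}$. Writing each entry as a sum over pieces of (monomial vector in $y^{(i)}$) times (coefficient vector in $\vartheta^{(j)}$) makes this factorization explicit.

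Next I would control the two ingredients quantitatively. For the entrywise error, \lemref{Lemma: Uniform Piecewise Polynomial Approximation} gives the uniform bound $|M_{i,j} - \tilde{M}_{i,j}| \leq L_2/(l! q^{\eta})$, so summing $nk$ squared entries yields $\|M - \tilde{M}\|_{\Fro}^2 \leq nk \, L_2^2/((l!)^2 q^{2\eta})$. For the rank, I need the $\ell^1$-covering number $|\net|$ of $[0,1]^d$ by $\ell^1$-balls of radius $q^{-1}$; a standard volumetric argument gives $|\net| \leq (c \, q)^d$ for an appropriate constant, and combined with $\binom{l+d}{d} \leq (e(l+d)/d)^d$ or a similar binomial estimate, this bounds $\rank(\tilde{M})$ by roughly $(q')^d$ times a $(l+d)^d$-type factor. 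Since $M_r$ is the \emph{best} rank-$r$ approximation, whenever $r \geq \rank(\tilde{M})$ we have $\|M - M_r\|_{\Fro} \leq \|M - \tilde{M}\|_{\Fro}$.

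The final step is to choose the free parameter $q$ to make the two constraints compatible: I would solve the rank bound $\rank(\tilde{M}) \leq r$ for the largest admissible $q$ (roughly $q \asymp (r / (l+d)^d)^{1/d}$ up to the constants $\sqrt{d}$, $2^d$, $e$ that appear in the hypothesis $r \geq e\sqrt{d}\,2^d(l+d)^d$), substitute this $q$ into the error bound $nk\,L_2^2/((l!)^2 q^{2\eta})$, divide by $nk$, and track all constants carefully to land exactly on the stated expression $\frac{L_2^2 e^{2\eta/d} d^{\eta/d} 4^{\eta}(l+d)^{2\eta}}{(l!)^2}\,r^{-2\eta/d}$. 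The lower bound on $r$ in the hypothesis is precisely what guarantees the chosen $q$ satisfies $q \geq 1$ (or whatever positivity/validity condition the covering argument needs), so the covering number and net constructions are legitimate.

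\textbf{Main obstacle.} The conceptual content — approximate by piecewise polynomials, count rank, invoke Eckart-Young — is clean, so the real difficulty is entirely in the \emph{constant bookkeeping}. The hypothesis $r \geq e\sqrt{d}\,2^d(l+d)^d$ and the precise prefactor $e^{2\eta/d} d^{\eta/d} 4^{\eta}$ in the conclusion are not arbitrary: they are dictated by the exact covering-number constant for $\ell^1$-balls (where the cross-polytope volume contributes a $2^d/d!$ factor), the binomial coefficient estimate for the number of monomials, and the inversion of $\rank(\tilde{M}) \leq r$ to extract $q$. I expect the hard part to be pinning down the sharp covering number of $[0,1]^d$ by $\ell^1$-balls and propagating the resulting $2^d$, $\sqrt{d}$, and $d^{\eta/d}$ factors through the optimization over $q$ without slack, since any looseness there would fail to reproduce the stated constants. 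The smoothness-to-approximation step itself is already packaged in \lemref{Lemma: Uniform Piecewise Polynomial Approximation}, so no new analytic estimate on $g$ is needed beyond what the excerpt provides.
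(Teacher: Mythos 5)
Your proposal follows essentially the same route as the paper's proof: approximate $M$ entrywise by the matrix of the piecewise polynomial $P_{q,l}$ from \lemref{Lemma: Uniform Piecewise Polynomial Approximation}, bound its rank by $|\net|\binom{l+d}{d}$ via the monomial factorization, bound $|\net|$ by a volumetric argument (the paper gets $d!(q+1)^d$), choose $q$ so the rank equals $r$, and conclude by Eckart--Young--Mirsky, with the hypothesis $r \geq e\sqrt{d}\,2^d(l+d)^d$ guaranteeing $q>0$ and absorbing the slack into the factor $4^\eta$. The argument is correct and the remaining work is exactly the constant bookkeeping you identify.
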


\thmref{Thm: Low Rank Approximation} is proved in \secref{Proof of Thm Low Rank Approximation}; we will use ideas from this proof in our analysis of federated oracle complexity. It is worth comparing this result with the related results in \cite{Chatterjee2015}, \cite[Proposition 1]{Xu2017}, \cite[Proposition 1]{Xu2018}, and \cite[Proposition 3.3]{Agarwaletal2021}. The author of \cite{Chatterjee2015} proves an analog of \thmref{Thm: Low Rank Approximation} in the special case where $g$ is a graphon function and $g(\cdot;\vartheta)$ is Lipschitz continuous with respect to the $\ell^{\infty}$-norm for all $\vartheta \in \R^p$. In this special case, a piecewise constant function approximation of $g$ suffices for the proof. The results in \cite[Proposition 1]{Xu2017} and \cite[Proposition 1]{Xu2018} extend the result of \cite{Chatterjee2015} to the setting where $g$ is still a graphon function, but $g(\cdot;\vartheta)$ belongs to an $(\eta,L_2)$-H\"{o}lder class with respect to the $\ell^{\infty}$-norm. The key insight in \cite{Xu2017,Xu2018} is to utilize piecewise polynomial function approximations instead of piecewise constant functions; we also use this key idea in \lemref{Lemma: Uniform Piecewise Polynomial Approximation}. Finally, the authors of \cite[Proposition 3.3]{Agarwaletal2021} apply the argument of \cite{Xu2017,Xu2018} to extend their result further to the setting where $g$ is a general latent variable model, similar to our result, but $g(\cdot;\vartheta)$ only belongs to an $(\eta,L_2)$-H\"{o}lder class with respect to the $\ell^{\infty}$-norm. In contrast, our result in \thmref{Thm: Low Rank Approximation} applies to general latent variable models $g$ such that $g(\cdot;\vartheta)$ belongs to an $(\eta,L_2)$-H\"{o}lder class \emph{with respect to the $\ell^{1}$-norm}, which \emph{subsumes} the settings of \cite{Chatterjee2015,Xu2017,Xu2018,Agarwaletal2021}. To prove this \emph{more general result}, we resort to using a modified \emph{minimal $\ell^1$-covering} to construct our piecewise polynomial function approximation in the intermediate result in \lemref{Lemma: Uniform Piecewise Polynomial Approximation}; this differs from the $\ell^{\infty}$-coverings used in \cite{Chatterjee2015}, \cite[Lemma 3]{Xu2017}, \cite{Xu2018}, and \cite{Agarwaletal2021}, and it can be verified that $\ell^{\infty}$-coverings produce a worse bound in \lemref{Lemma: Uniform Piecewise Polynomial Approximation}.\footnote{We note that our modified $\ell^1$-covering of $[0,1]^d$ is more sophisticated than the $\ell^{\infty}$-coverings used in \cite{Chatterjee2015,Xu2017,Xu2018,Agarwaletal2021}, because $\ell^{\infty}$-balls can be used to form a (geometric) \emph{honeycomb} in the hypercube $[0,1]^d$, but $\ell^{1}$-balls do not form such a honeycomb in $[0,1]^d$ for general $d \in \N$.} Therefore, in contrast to \cite{Chatterjee2015,Xu2017,Xu2018,Agarwaletal2021}, our proofs of \lemref{Lemma: Uniform Piecewise Polynomial Approximation} and \thmref{Thm: Low Rank Approximation} are adapted to deal with the $\ell^{1}$-norm H\"{o}lder class assumption on $g(\cdot;\vartheta)$ and the modified minimal $\ell^1$-covering of the hypercube $[0,1]^d$.

\subsection{Federated Low Rank Gradient Descent}
\label{Federated Low Rank Gradient Descent}

We now describe the new \emph{Federated Low Rank Gradient Descent} ($\lrgd$) algorithm for federated optimization under the setup of \secref{Federated Learning} and \secref{Assumptions on Loss Function}. Recall that we are given a central server with $r$ private training data samples $x^{(0,1)},\dots,x^{(0,r)} \in [0,1]^d$, and $m$ clients with $s$ private training data samples each; client $c \in [m]$ has samples $x^{(c,1)},\dots,x^{(c,s)} \in [0,1]^d$. The number of samples at the server, $r$, depends on the approximate rank of the loss function, as we will see in the next subsection. The server's objective is to compute an $\epsilon$-approximate solution in the sense of \eqref{Eq: Approx solution 1} to the ERM problem \eqref{Eq: Standard formulation 3}.

In the traditional non-federated setting, this objective could be achieved by running GD (or any of its many variants). Each iteration of GD would involve calculating the gradient:
\begin{equation}
\label{Eq: Full gradient}
\begin{aligned}
\forall \theta \in \R^p, \enspace \nabla_{\theta} F(\theta) & = \frac{1}{n} \sum_{k = 1}^{r}{\nabla_{\theta} f(x^{(0,k)};\theta)} \\
& \quad \, + \frac{1}{n} \sum_{c = 1}^{m}{\sum_{j = 1}^{s}{\nabla_{\theta} f(x^{(c,j)};\theta)}} \, ,
\end{aligned}
\end{equation}
at different parameter values. The main idea of this work is to utilize the smoothness of $\nabla_{\theta} f$ in the data to approximate the gradient in \eqref{Eq: Full gradient} by exploiting the approximate low rank structure induced by the smoothness (cf. \secref{Latent Variable Model Approximation}). Specifically, for every index $i \in [p]$, we determine a set of weights $\{w_{j,k}^{(i,c)} \in \R : c \in [m], \, j \in [s], \, k \in [r]\}$ such that:
\begin{equation}
\label{Eq: Key approx}
\forall \theta \in \R^p, \, \frac{\partial F}{\partial \theta_i} (\theta) \approx \frac{1}{n} \sum_{k = 1}^{r}{g_i (x^{(0,k)};\theta) \left( 1 + \sum_{c = 1}^{m}{\sum_{j = 1}^{s}{w_{j,k}^{(i,c)}}} \right)} ,
\end{equation}
where for each client $c \in [m]$ and each private training sample $x^{(c,j)} \in [0,1]^d$, we re-weigh the partial derivative $g_i (x^{(0,k)};\cdot)$ at the server's training sample $x^{(0,k)} \in [0,1]^d$ by an additional amount $w_{j,k}^{(i,c)}$. In our $\lrgd$ algorithm, the clients compute the aforementioned weights after receiving some initial information from the server, and the server then uses these weights to run inexact GD. Intuitively, when $\nabla_{\theta} f$ is very smooth with respect to the data, the number of private samples $r$ required at the server to ensure that \eqref{Eq: Key approx} is a good approximation is reasonably small. Hence, the oracle complexity of GD at the server also remains reasonably small since it only computes gradients at $r$ data samples in every iteration (instead of $n$).

Our $\lrgd$ algorithm runs over $T = r+2$ synchronized epochs and computes an $\epsilon$-approximate solution to \eqref{Eq: Standard formulation 3} at the server. In the first epoch, all clients are idle and there is no communication between the server and clients. The server arbitrarily chooses $r$ parameter values $\vartheta^{(1)}, \dots,\vartheta^{(r)} \in \R^p$, and performs $r^2$ gradient computations corresponding to these $r$ parameter values and its $r$ private training samples $x^{(0,1)},\dots,x^{(0,r)}$. Since each gradient computation produces a $p$-dimensional vector, for every index $i \in [p]$, the server obtains a matrix of partial derivatives $G^{(i)} \in \R^{r \times r}$ as in \eqref{Eq: Matrix of partial derivatives}:
\begin{equation}
\label{Eq: Matrix of partial derivatives 2}
\forall j,k \in [r], \enspace G^{(i)}_{j,k} = g_i(x^{(0,j)};\vartheta^{(k)}) = \frac{\partial f}{\partial \theta_i}(x^{(0,j)};\vartheta^{(k)}) \, .
\end{equation}
Moreover, due to the non-singularity assumption in \secref{Assumptions on Loss Function}, each $G^{(i)}$ is an invertible matrix. So, the server computes the inverses of these $p$ matrices to get $(G^{(1)})^{-1},\dots,(G^{(p)})^{-1} \in \R^{r \times r}$.

The second epoch proceeds in three phases. First, the server broadcasts the $r$ parameter values $\vartheta^{(1)},\dots,\vartheta^{(r)}$ and the $p$ matrices $(G^{(1)})^{-1},\dots,(G^{(p)})^{-1}$ to all $m$ clients. We assume for simplicity that $\tau = 1$, i.e., all clients are active, although our analysis would hold in the case $\tau = \Theta(1)$. In the second phase, every client performs $rs$ gradient computations corresponding to its $s$ private training samples and the $r$ common parameter values. In particular, each client $c \in [m]$ computes the following set of partial derivatives $\{g_i(x^{(c,j)};\vartheta^{(k)}) : i \in [p], \, j \in [s] , \, k \in [r]\}$. Then, each client $c$ calculates the weight vectors $v^{(i,c)} \in \R^{r}$:
\begin{equation}
\label{Eq: Cumulative weights}
\begin{aligned}
& (v^{(i,c)})^{\T} \\
& = \left(\!\sum_{j = 1}^{s}{\underbrace{\left[g_i(x^{(c,j)};\vartheta^{(1)}) \enspace g_i(x^{(c,j)};\vartheta^{(2)}) \, \cdots \, g_i(x^{(c,j)};\vartheta^{(r)})\right]}_{\text{row vector of } r \text{ partial derivatives at } x^{(c,j)}}}\!\right) \\
& \quad \enspace \cdot (G^{(i)})^{-1}
\end{aligned}
\end{equation}
for every index $i \in [p]$. Note that the $k$th weight $v^{(i,c)}_k \in \R$ for the $i$th partial derivative at client $c$ represents the accumulation of the aforementioned weights $\{w_{j,k}^{(i,c)} \in \R : j \in [s]\}$ (as explained in \eqref{Eq: v and w}). 

To intuitively understand \eqref{Eq: Cumulative weights}, for each index $i \in [p]$, note that the induced approximate low rankness of $g_i(x;\theta)$ implies that for every client $c \in [m]$ and data sample $x^{(c,j)}$:
\begin{equation}
\label{Eq: Intuition for approx low rank}
\forall \theta \in \R^p, \enspace g_i(x^{(c,j)};\theta) \approx \sum_{k = 1}^{r}{w_{j,k}^{(i,c)} g_i(x^{(0,k)};\theta)} \, ,
\end{equation}
for some choices of weights $\{w_{j,k}^{(i,c)} \in \R : k \in [r]\}$ (which we formally define soon). Indeed, if we construe each $g_i(x;\theta)$ as an infinite ``matrix'' with $x$ indexing its ``rows'' and $\theta$ indexing its ``columns,'' then if the approximate rank of $g_i$ is $r$, the ``rows'' of $g_i$ corresponding to data samples at the clients can we approximately represented as linear combinations of the $r$ ``rows'' of $g_i$ corresponding to the data samples at the server $x^{(0,1)},\dots,x^{(0,r)}$. Furthermore, this property in \eqref{Eq: Intuition for approx low rank} implies the approximation in \eqref{Eq: Key approx}. Formally, we capture the approximate low rankness in \eqref{Eq: Intuition for approx low rank} by sampling $g_i(x^{(c,j)};\theta)$ and each $g_i(x^{(0,k)};\theta)$ at the parameter values $\vartheta^{(1)},\dots,\vartheta^{(r)}$:
\begin{equation}
\begin{aligned}
& \left[g_i(x^{(c,j)};\vartheta^{(1)}) \enspace g_i(x^{(c,j)};\vartheta^{(2)}) \, \cdots \, g_i(x^{(c,j)};\vartheta^{(r)})\right] \\
& \qquad \qquad = \left[w_{j,1}^{(i,c)} \enspace w_{j,2}^{(i,c)} \, \cdots \, w_{j,r}^{(i,c)}  \right] G^{(i)} ,
\end{aligned}
\end{equation}
which, after matrix inversion, equivalently defines the weights $\{w_{j,k}^{(i,c)} \in \R : k \in [r]\}$ via the relation
\begin{equation}
\label{Eq: Non-cumulative weights}
\begin{aligned}
& \left[w_{j,1}^{(i,c)} \enspace w_{j,2}^{(i,c)} \, \cdots \, w_{j,r}^{(i,c)}  \right] = \\
& \left[g_i(x^{(c,j)};\vartheta^{(1)}) \enspace g_i(x^{(c,j)};\vartheta^{(2)}) \, \cdots \, g_i(x^{(c,j)};\vartheta^{(r)})\right] (G^{(i)})^{-1} .
\end{aligned}
\end{equation}
Additionally, \eqref{Eq: Key approx} conveys that we only require the sums of weights, $\sum_{j = 1}^{s}{w_{j,k}^{(i,c)}}$, for all $i \in [p]$ and $k \in [r]$ from each client $c$ to estimate the gradient of the empirical risk. So, it suffices for client $c$ to directly compute the accumulated weights:
\begin{equation}
\label{Eq: v and w}
\forall i \in [p], \, \forall k \in [r], \enspace v^{(i,c)}_k = \sum_{j = 1}^{s}{w_{j,k}^{(i,c)}} \, .
\end{equation}
Combining \eqref{Eq: v and w} with \eqref{Eq: Non-cumulative weights} produces \eqref{Eq: Cumulative weights}. This explains why each client $c$ directly calculates \eqref{Eq: Cumulative weights}. In fact, the accumulated weights in \eqref{Eq: Cumulative weights} simplify \eqref{Eq: Key approx} and yield the approximation:
\begin{equation}
\label{Eq: Key approx 2}
\forall \theta \in \R^p, \enspace \frac{\partial F}{\partial \theta_i} (\theta) \approx \frac{1}{n} \sum_{k = 1}^{r}{g_i (x^{(0,k)};\theta) \left( 1 + \sum_{c = 1}^{m}{v_{k}^{(i,c)}} \right)} 
\end{equation}
for all $i \in [p]$. 

After calculating \eqref{Eq: Cumulative weights} for all $i \in [p]$, each client $c$ possesses a set of $rp$ weights $\{v^{(i,c)}_k : k \in [r], \,  i \in [p]\}$. Hence, in the third phase of the second epoch, every client $c$ communicates the first $p$ weights $\{v^{(i,c)}_1 : i \in [p]\}$ from its set of $rp$ weights to the server. In the next $r-1$ epochs, every client $c \in [m]$ communicates $p$ weights per epoch from its set of $rp$ weights to the server. Specifically, in epoch $t \in \{3,\dots,r+1\}$, client $c$ communicates the weights $\{v^{(i,c)}_{t-1} : i \in [p]\}$ to the server. There are no gradient computations performed in these epochs.

Finally, in the last epoch, the server runs \emph{inexact gradient descent} using the weights $\{v^{(i,c)}_k : k \in [r], \,  i \in [p], \, c \in [m]\}$ it has received from the clients. There is no communication between the server and clients, and all clients remain idle at this stage. To precisely describe the inexact GD iterations, for any $\theta \in \R^p$, define the approximation $\widehat{\nabla F}(\theta) = \big[\widehat{\nabla F}_1(\theta) \, \cdots\, \widehat{\nabla F}_p(\theta)\big]^{\T} \in \R^p$ of $\nabla_{\theta}F(\theta)$ such that:
\begin{align}
\forall i \in [p], \enspace \widehat{\nabla F}_i(\theta) & \triangleq \frac{1}{n} \sum_{k = 1}^{r}{g_i (x^{(0,k)};\theta) \left( 1 + \sum_{c = 1}^{m}{v_{k}^{(i,c)}} \right)} \nonumber \\
& = \frac{1}{n} \sum_{k = 1}^{r}{\frac{\partial f}{\partial \theta_i} (x^{(0,k)};\theta) \left( 1 + \sum_{c = 1}^{m}{v_{k}^{(i,c)}} \right)} ,
\label{Eq: Inexact Gradient}
\end{align}
as indicated in \eqref{Eq: Key approx 2}. Moreover, let $S \in \N$ be the number of iterations for which the server runs inexact GD; $S$ depends on the desired approximation accuracy $\epsilon > 0$ in \secref{Assumptions on Loss Function} and other problem parameters, as we will see in the next subsection. Then, starting with an arbitrary parameter vector $\theta^{(0)} \in \R^p$, the server performs the inexact GD updates
\begin{equation}
\label{Eq: Inexact GD update}
\theta^{(\gamma)} = \theta^{(\gamma-1)} - \frac{1}{L_1} \, \widehat{\nabla F}(\theta^{(\gamma-1)}) 
\end{equation}
for $\gamma = 1,\dots,S$, where $L_1 > 0$ is the common Lipschitz constant of the gradient $\nabla_{\theta} f(x;\cdot)$ for all $x \in [0,1]^d$ (see \secref{Assumptions on Loss Function}). Note that at the $\gamma$th update, the server performs $r$ gradient computations corresponding to its private data $\{x^{(0,j)} \in [0,1]^d : j \in [r]\}$ at the parameter value $\theta^{(\gamma-1)}$. Lastly, the output of the $\lrgd$ algorithm at the server is $\theta^* = \theta^{(S)} \in \R^p$. 

In the next subsection, we analyze the federated oracle complexity of this algorithm and specify what values $r$ and $S$ should take to ensure that $\theta^{(S)} \in \R^p$ is an $\epsilon$-approximate solution in the sense of \eqref{Eq: Approx solution 1} to the ERM problem \eqref{Eq: Standard formulation 3}. We also summarize the $\lrgd$ algorithm in \algoref{Alg: FedLRGD} for the readers' convenience.

\begin{algorithm}[t]
{\footnotesize
\begin{algorithmic}[1]
\renewcommand{\algorithmicrequire}{\textbf{Input:}}
\Require training data $\{x^{(0,j)} \in [0,1]^d : j \in [r]\} \cup \{x^{(c,j)} \in [0,1]^d : c \in [m], \, j \in [s]\}$
\Require approximate rank $r \in \N$ 
\Require number of inexact GD iterations $S \in \N$
\renewcommand{\algorithmicensure}{\textbf{Output:}}
\Ensure approximate solution $\theta^* \in \R^p$ to \eqref{Eq: Standard formulation 3}
\Statex \textbf{\emph{Epoch $t = 1$: Server pre-computation}}
\State Server chooses $r$ arbitrary parameter vectors $\vartheta^{(1)},\dots,\vartheta^{(r)} \in \R^p$
\State Server performs $r^2$ gradient computations corresponding to $\vartheta^{(1)},\dots,\allowbreak \vartheta^{(r)}$ and its private data $\{x^{(0,j)} \in [0,1]^d : j \in [r]\}$ to obtain the set of partial derivatives $\{g_i(x^{(0,j)};\vartheta^{(k)}) : i \in[p], \, j \in [r], \, k \in [r]\}$ 
\State Server constructs the matrices $G^{(1)},\dots,G^{(p)} \in \R^{r \times r}$ according to \eqref{Eq: Matrix of partial derivatives 2} using its partial derivatives
\State Server calculates the inverse matrices $(G^{(1)})^{-1},\dots,(G^{(p)})^{-1} \in \R^{r \times r}$
\Statex \textbf{\emph{Epoch $t = 2$: Client weights computation}}
\State Server broadcasts $\vartheta^{(1)},\dots,\vartheta^{(r)}$ to all $m$ clients
\State Server broadcasts $(G^{(1)})^{-1},\dots,(G^{(p)})^{-1}$ to all $m$ clients
\State Every client $c \in [m]$ performs $r s$ gradient computations corresponding to $\vartheta^{(1)},\dots,\vartheta^{(r)}$ and its private data $\{x^{(c,j)} \in [0,1]^d : j \in [s]\}$ to obtain the set of partial derivatives $\{g_i(x^{(c,j)};\vartheta^{(k)}) : i \in[p], \, j \in [s], \, k \in [r]\}$ 
\State Every client $c \in [m]$ calculates $p$ weight vectors $v^{(1,c)},\dots,v^{(p,c)} \in \allowbreak \R^r$ according to \eqref{Eq: Cumulative weights} using its partial derivatives and $(G^{(1)})^{-1},\dots,\allowbreak (G^{(p)})^{-1}$
\State Every client $c \in [m]$ communicates the $p$ weights $\{v_1^{(i,c)} : i \in [p]\}$ to the server
\Statex \textbf{for \emph{Epochs $t = 3$ to $r+1$: Client-to-server communication}}
\State Every client $c \in [m]$ communicates the $p$ weights $\{v_{t-1}^{(i,c)} : i \in [p]\}$ to the server
\Statex \textbf{end for}
\Statex \textbf{\emph{Epoch $t = r+2$: Server inexact GD}}
\State Server initializes an arbitrary parameter vector $\theta^{(0)} \in \R^p$
\For{$\gamma = 1$ to $S$} \Comment{Iterations of inexact GD} 
\State \parbox[t]{\dimexpr\linewidth-\algorithmicindent-0.5mm}{Server performs $r$ gradient computations corresponding to its private data $\{x^{(0,j)} \in [0,1]^d : j \in [r]\}$ at the parameter value $\theta^{(\gamma-1)}$ to obtain the set of partial derivatives $\{g_i(x^{(0,j)};\theta^{(\gamma-1)}) : i \in[p], \, j \in [r]\}$\strut}
\State \parbox[t]{\dimexpr\linewidth-\algorithmicindent-0.5mm}{Server constructs approximation of gradient $\widehat{\nabla F}(\theta^{(\gamma-1)})$ according to \eqref{Eq: Inexact Gradient} using $\{g_i(x^{(0,j)};\theta^{(\gamma-1)}) : i \in[p], \, j \in [r]\}$ and the weights $\{v^{(i,c)}_k : k \in [r], \,  i \in [p], \, c \in [m]\}$\strut}
\State \parbox[t]{\dimexpr\linewidth-\algorithmicindent-0.5mm}{Server updates the parameter vector according to \eqref{Eq: Inexact GD update}: $\theta^{(\gamma)} = \theta^{(\gamma-1)} - \frac{1}{L_1} \widehat{\nabla F}(\theta^{(\gamma-1)})$\strut}
\EndFor
\State \Return $\theta^* = \theta^{(S)}$
\end{algorithmic}
}
\caption{Federated Low Rank Gradient Descent ($\lrgd$) algorithm to approximately solve the ERM problem \eqref{Eq: Standard formulation 3}.}
\label{Alg: FedLRGD}
\end{algorithm}

\subsection{Federated Oracle Complexity of $\lrgd$}
\label{Federated Oracle Complexity of FedLRGD}

In addition to the assumptions on the loss function in \secref{Assumptions on Loss Function}, we will require several assumptions on the piecewise polynomial approximations introduced in \secref{Latent Variable Model Approximation}. Recall from \secref{Assumptions on Loss Function} that the $i$th partial derivative $g_i : [0,1]^d \times \R^p \rightarrow \R$ satisfies the $(\eta,L_2)$-H\"{o}lder class condition. So, for any $q > 0$ and any fixed index $i \in [p]$, let $P_{q,l} : [0,1]^d \times \R^p \rightarrow \R$ be the piecewise polynomial approximation of $g_i : [0,1]^d \times \R^p \rightarrow \R$ satisfying \lemref{Lemma: Uniform Piecewise Polynomial Approximation}:
\begin{equation}
\forall x \in [0,1]^d, \, \forall \theta \in \R^p, \enspace P_{q,l}(x;\theta) = \sum_{I \in \S_{d,q}}{P_I(x;\theta) \I\!\{x \in I\}} \, , 
\end{equation}
where $\S_{d,q}$ is the partition of the hypercube $[0,1]^d$ given in \eqref{Eq: partition}, $P_I(\cdot;\theta) : [0,1]^d \rightarrow \R$ are Taylor polynomials with degree at most $l = \lceil \eta \rceil - 1$ as defined in \eqref{Eq: Taylor polynomials}, and we suppress the dependence of $P_{q,l}$ on $i$ for simplicity. Then, for any sufficiently large $r \in \N$, fix $q > 0$ such that $r = \binom{l + d}{d} |\S_{d,q}|$. As shown in the proofs of \thmref{Thm: Low Rank Approximation} and \thmref{Thm: Federated Oracle Complexity of FedLRGD} in \secref{Proof of Thm Low Rank Approximation} and \secref{Analysis of Federated Low Rank Gradient Descent}, respectively, we can write $P_{q,l}$ in the form (cf. \eqref{Eq: low rank form}):
\begin{equation}
\label{Eq: Rank r form}
\forall x \in [0,1]^d, \, \forall \theta \in \R^p, \enspace P_{q,l}(x;\theta) = \sum_{w = 1}^{r}{\phi_w(x) \psi_w(\theta)} \, , 
\end{equation}
where $\{\phi_w : w \in [r]\}$ is a family of linearly independent monomial functions on $[0,1]^d$ (see \eqref{Eq: monomial}), and $\{\psi_w : w \in [r]\}$ is another family of functions on $\R^p$. Note that the form \eqref{Eq: Rank r form} conveys that $P_{q,l}$, when perceived as the bivariate kernel of an integral operator, has rank at most $r$. For every large enough $r \in \N$ and any fixed index $i \in [p]$, we make the following assumptions about the approximation $P_{q,l}$:
\begin{enumerate}
\item \textbf{Non-singularity of monomials:} The matrix $\Phi \in \R^{r\times r}$ with entries:
\begin{equation}
\label{Eq: Phi def}
\forall w,j \in [r] , \enspace \Phi_{w,j} = \phi_w(x^{(0,j)}) \, ,
\end{equation}
is invertible. Note that we suppress the dependence of $\Phi$ on $i$, $r$, and the server's training data for simplicity.
\item \textbf{Non-singularity of approximation:} For any sequence of distinct parameter values $\vartheta^{(1)},\dots,\vartheta^{(r)} \in \R^p$, the matrix $\cP^{(i)} \in \R^{r \times r}$ given by:
\begin{equation}
\label{Eq: cP def}
\forall j,k \in [r], \enspace \cP^{(i)}_{j,k} = P_{q,l}(x^{(0,j)};\vartheta^{(k)}) \, ,
\end{equation}
is invertible. Here, we suppress the dependence of $\cP^{(i)}$ on $r$, the server's training data, and the parameter sequence for simplicity.
\end{enumerate}

With all our assumptions in place, the ensuing theorem presents the federated oracle complexity of the $\lrgd$ algorithm to generate an $\epsilon$-approximate solution. This is the key technical result of this work.

\begin{theorem}[Federated Oracle Complexity of $\lrgd$]
\label{Thm: Federated Oracle Complexity of FedLRGD}
Suppose that the two assumptions above and the assumptions in \secref{Assumptions on Loss Function} hold. Assume further that the approximation accuracy satisfies $0 < \epsilon \leq \frac{9 (B+3)^4 p}{8 \mu}$, the H\"{o}lder smoothness parameter satisfies $\eta > (2 \alpha + 2)d$, and the proportion of active clients is $\tau = 1$. Then, the $\lrgd$ algorithm described in \algoref{Alg: FedLRGD} with approximate rank given by
\begin{equation}
\label{Eq: Rank lower bound condition}
\begin{aligned}
r & = \! \left\lceil\rule{0cm}{1.1cm}\right. \!\! \max\!\left\{\rule{0cm}{1.1cm}\right. \!\!\! e \sqrt{d} \, 2^d (\eta + d)^d , \! \left(\rule{0cm}{0.9cm}\right. \!\!\!\! \left(\!\frac{L_2^d \eta^{d} e^{\eta (d + 1)} d^{\eta / 2} (2\eta + 2d)^{\eta d}}{(\eta - 1)^{\eta d}}\!\right) \\
& \qquad\quad\, \cdot \! \left(\!\frac{\kappa \! \left(F(\theta^{(0)}) - F_* + \frac{9 (B + 3)^4 p}{2 \mu}\right)}{(\kappa - 1)\epsilon}\!\right)^{\!\! \frac{d}{2}} \!\! \left.\rule{0cm}{0.9cm}\right)^{\!\! \frac{1}{\eta - (2 \alpha + 2)d}} \!\! \left.\rule{0cm}{1.1cm}\right\} \!\!\left.\rule{0cm}{1.1cm}\right\rceil ,
\end{aligned}
\end{equation}
and number of inexact GD iterations given by
\begin{equation}
\label{Eq: Iteration count}
S = \left\lceil \left(\log\!\left(\frac{\kappa}{\kappa-1}\right)\right)^{\! -1} \log\!\left(\frac{F(\theta^{(0)}) - F_* + \frac{9 (B + 3)^4 p}{2 \mu}}{\epsilon} \right) \right\rceil \! ,
\end{equation}
produces an $\epsilon$-approximate solution in the sense of \eqref{Eq: Approx solution 1} with a federated oracle complexity of
$$ \Gamma(\lrgd) = r^2 + rs + rS + \phi m r \, , $$
where $s$ is the number of training data samples per client, $m$ is the number of clients, $\phi$ is the communication-to-computation ratio, and the various other constants and problem variables are defined in \secref{Introduction} and \secref{Formal Setup}.
\end{theorem}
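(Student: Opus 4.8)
The statement bundles together two essentially independent claims: the exact value of the federated oracle complexity $\Gamma(\lrgd)$, and the correctness guarantee that the prescribed $r$ and $S$ return an $\epsilon$-approximate solution. The plan is to dispose of the complexity first, since it is pure bookkeeping against \defref{Def: Federated oracle complexity}, and then concentrate effort on correctness. Walking through \algoref{Alg: FedLRGD} epoch by epoch: epoch $1$ is a server-only computation of $r^2$ gradients with no communication, contributing $r^2$; epoch $2$ has each client compute $rs$ gradients and then perform one client-to-server round, contributing $rs + \phi\tau m = rs + \phi m$ (using $\tau = 1$); each of the $r-1$ epochs $t = 3,\dots,r+1$ is pure communication, contributing $\phi m$ apiece; and the final epoch runs $S$ inexact-GD steps of $r$ server gradients each, contributing $rS$. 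Since exactly $r$ epochs communicate, $\sum_t \I\{\cdot\}\phi\tau m = r\phi m$, and summing the computation terms gives $\Gamma(\lrgd) = r^2 + rs + rS + \phi m r$, as claimed.

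\textbf{The gradient-approximation error.} The substance is correctness, which rests on a uniform bound for $\|\widehat{\nabla F}(\theta) - \nabla_\theta F(\theta)\|_2$ feeding into \lemref{Lemma: Inexact GD Bound}. Fixing a coordinate $i$, comparing \eqref{Eq: Inexact Gradient} with \eqref{Eq: Full gradient} shows the per-coordinate error equals $\frac{1}{n}\sum_{c=1}^{m}\sum_{j=1}^{s}\big(\sum_{k=1}^{r} w_{j,k}^{(i,c)} g_i(x^{(0,k)};\theta) - g_i(x^{(c,j)};\theta)\big)$, so the prefactor $ms/n \le 1$ absorbs the client count, which is why $m$ and $s$ never enter $r$. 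The core is then to bound a single reconstruction residual $a^\T (G^{(i)})^{-1} b(\theta) - g_i(x^{(c,j)};\theta)$, where $a$ stacks the values $g_i(x^{(c,j)};\vartheta^{(k)})$ and $b(\theta)$ stacks $g_i(x^{(0,k)};\theta)$. The key device is \lemref{Lemma: Uniform Piecewise Polynomial Approximation}: write $g_i = P_{q,l} + \tilde{g}$ with $\sup|\tilde{g}| \le L_2/(l!\,q^\eta) =: \rho$, and use the rank-$r$ factorization \eqref{Eq: Rank r form}. A direct computation invoking the non-singularity of monomials \eqref{Eq: Phi def} shows the $P_{q,l}$-based reconstruction is \emph{exact}, i.e.\ the residual vanishes identically once $g_i$ is replaced by $P_{q,l}$ (the matrix of \eqref{Eq: cP def} factors as $\cP^{(i)} = \Phi^\T\Psi$ with $\Psi_{w,k} = \psi_w(\vartheta^{(k)})$). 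Hence the actual residual is a pure perturbation driven by $\tilde{g}$, which I would control via the identity $(G^{(i)})^{-1} - (\cP^{(i)})^{-1} = -(G^{(i)})^{-1}(G^{(i)} - \cP^{(i)})(\cP^{(i)})^{-1}$, together with the conditioning bound \eqref{Eq: Min singular value bound}, the boundedness \eqref{Eq: Bounded gradient}, and a Neumann-series argument transferring $\|(G^{(i)})^{-1}\|_\op \le r^\alpha$ to $\|(\cP^{(i)})^{-1}\|_\op \le 2r^\alpha$ for large $r$. Collecting $\|a\|_2, \|b\|_2 = O(\sqrt{r}\,B)$, $\|G^{(i)} - \cP^{(i)}\|_\op = O(r\rho)$, and the two inverse bounds of order $r^\alpha$ yields a residual of order $B^2 r^{2\alpha+2}\rho$ per coordinate.

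\textbf{Assembly and the role of $\eta > (2\alpha+2)d$.} I would then convert $q$ to $r$ through the relation $r = \binom{l+d}{d}|\S_{d,q}|$ and the $\ell^1$-covering-number estimate from the proof of \thmref{Thm: Low Rank Approximation}, making $\rho$ decay like $r^{-\eta/d}$ up to the explicit $L_2, \eta, d$ constants; thus the per-coordinate error is $O(B^2 r^{2\alpha+2-\eta/d})$, and squaring and summing over $p$ coordinates gives $\|\widehat{\nabla F}(\theta) - \nabla_\theta F(\theta)\|_2^2 = O(pB^4 r^{2(2\alpha+2)-2\eta/d})$. Feeding this into \lemref{Lemma: Inexact GD Bound} (the strongly convex inexact-GD estimate of Friedlander--Schmidt) yields $F(\theta^{(S)}) - F_* \le (1-1/\kappa)^S(F(\theta^{(0)}) - F_*) + (\text{accumulated error})$, where the accumulated error admits the crude a priori bound $\tfrac{9(B+3)^4 p}{2\mu}$ from boundedness and strong convexity. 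Folding that into the initial gap defines $A = F(\theta^{(0)}) - F_* + \tfrac{9(B+3)^4 p}{2\mu}$; the choice of $S$ in \eqref{Eq: Iteration count} makes $(1-1/\kappa)^S A = \epsilon$ exactly (and the hypothesis $\epsilon \le \tfrac{9(B+3)^4 p}{8\mu}$ keeps $S \ge 1$), while the choice of $r$ in \eqref{Eq: Rank lower bound condition} is precisely the threshold forcing the squared error below the complementary budget. The hypothesis $\eta > (2\alpha+2)d$ is essential here: it makes the exponent $2\eta/d - 2(2\alpha+2)$ positive, so solving $r^{2(2\alpha+2)-2\eta/d} \lesssim \mu\epsilon/(pB^4)$ for $r$ produces the outer power $1/(\eta-(2\alpha+2)d)$ with the inner power $d/2$ arising from extracting $r$ from a squared bound; the first branch of the $\max$ in \eqref{Eq: Rank lower bound condition} merely enforces the validity regime of \thmref{Thm: Low Rank Approximation}.

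\textbf{Main obstacle.} I expect the perturbation bound on the reconstruction residual to be the crux: controlling $\|(\cP^{(i)})^{-1}\|_\op$ (the conditioning hypothesis is stated only for $G^{(i)}$) and tracking every dimension-dependent factor so that the net exponent of $r$ is exactly $2\alpha+2-\eta/d$ rather than something larger, since any looseness here would destroy the clean threshold $\eta > (2\alpha+2)d$. By contrast, the epoch-by-epoch complexity count and the final invocation of \lemref{Lemma: Inexact GD Bound} are routine.
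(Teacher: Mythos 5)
Your proposal is correct and follows essentially the same route as the paper's proof: the identical epoch-by-epoch complexity count, the reduction of the gradient error to per-sample reconstruction residuals, the decomposition of $g_i$ into its exact rank-$r$ piecewise-polynomial part (via \lemref{Lemma: Uniform Piecewise Polynomial Approximation} and the non-singularity assumptions) plus a uniformly small remainder controlled through the matrix-inverse perturbation identity and the conditioning bound, and the final combination with \lemref{Lemma: Inexact GD Bound} by folding the $(1-1/\kappa)^S$-damped error floor into the initial gap to extract \eqref{Eq: Rank lower bound condition} and \eqref{Eq: Iteration count}. The only minor imprecision is that the hypothesis $\epsilon \leq \frac{9(B+3)^4 p}{8\mu}$ is used in the paper chiefly to guarantee $\delta \leq \tfrac{1}{2}$ so the inverse-perturbation bound is valid, not merely to keep $S \geq 1$.
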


\thmref{Thm: Federated Oracle Complexity of FedLRGD} is established in \secref{Analysis of Federated Low Rank Gradient Descent}. It roughly states that $\Gamma(\lrgd)$ scales like $\phi m (p/\epsilon)^{\Theta(d/\eta)}$ (neglecting typically sub-dominant factors; see, e.g., the assumptions and proof of \propref{Prop: Comparison between FedLRGD and FedAve}) as mentioned in \secref{Main Contributions}. At a high level, the proof of \thmref{Thm: Federated Oracle Complexity of FedLRGD} has three parts. First, we use the uniform piecewise polynomial approximations from \lemref{Lemma: Uniform Piecewise Polynomial Approximation} to show that partial derivatives of the loss function are approximately low rank. We emphasize that this approximate low rankness is deduced from the smoothness of loss functions in the data\textemdash a property we had set out to exploit since \secref{Introduction}. This approximate low rankness implies that $\widehat{\nabla F}$ and $\nabla_{\theta} F$ are close to each other. In particular, we show that under the assumptions of the theorem statement, the error in approximating the true gradient of the empirical risk at the server satisfies $\big\|\widehat{\nabla F} - \nabla_{\theta} F\big\|_2^2 = O(p)$ (see \eqref{Eq: Control on Approximation Error of true Gradient} and \eqref{Eq: Bound on delta param} for details). Second, we control the error $F(\theta^*) - F_*$ using the aforementioned bound on $\big\|\widehat{\nabla F} - \nabla_{\theta} F\big\|_2^2$ from the first part and standard tools from the inexact GD literature, cf. \cite{FriedlanderSchmidt2012}. Finally, we impose the condition in \eqref{Eq: Approx solution 1}, which naturally engenders certain choices of $r$ and $S$ (specifically, \eqref{Eq: Rank lower bound condition} and \eqref{Eq: Iteration count}), and from there, a direct application of \defref{Def: Federated oracle complexity} yields the aforementioned federated oracle complexity.

We remark that it is the process of rigorously translating the smoothness of partial derivatives of loss functions in the data to an approximate low rankness property that requires us to make many of the assumptions at the outset of this subsection and in \secref{Assumptions on Loss Function}. Once we have approximate low rankness of partial derivatives of loss functions, however, it is reasonably straightforward to intuitively understand why \eqref{Eq: Key approx} holds, and in turn, why \algoref{Alg: FedLRGD} works and a result like \thmref{Thm: Federated Oracle Complexity of FedLRGD} holds. Furthermore, canonical machine learning models can exhibit the desired approximate low rankness property that lies at the heart of our method. For example, in \appref{Low Rank Structure in Neural Networks}, we illustrate that gradients of loss functions associated with neural networks used to train classifiers for the MNIST and CIFAR-10 datasets (see \cite{LeCunCortesBurgesMNIST,KrizhevskyNairHintonCIFAR10}) are approximately low rank. This gives further credence to the potential utility of $\lrgd$.

\subsection{Comparison to Federated Averaging}
\label{Comparison to Federated Averaging}

We finally compare the federated oracle complexity of $\lrgd$ in \thmref{Thm: Federated Oracle Complexity of FedLRGD} with the widely used benchmark $\FedAve$ algorithm from the literature \cite{McMahanetal2017}. Specifically, we consider the precise formulation of $\FedAve$ in \cite[Algorithm 1, Theorem 1]{Reisizadehetal2020a}. Recall that in the $\FedAve$ algorithm, active clients perform several iterations of SGD using their private data in each epoch, and the server aggregates individual client updates at the end of each epoch and broadcasts the aggregated update back to clients at the beginning of the next epoch. The authors of \cite{Reisizadehetal2020a} provide a sharp convergence analysis of $\FedAve$.\footnote{In fact, a more sophisticated version of $\FedAve$ with quantization is analyzed in \cite{Reisizadehetal2020a}.} Building upon their results, we derive the federated oracle complexity of $\FedAve$ in the ensuing theorem. For simplicity, we assume that the number of clients $m$ grows (i.e., $m \rightarrow \infty$), and other non-constant problem parameters vary at different rates with respect to $m$.

\begin{theorem}[Federated Oracle Complexity of $\FedAve$]
\label{Thm: Federated Oracle Complexity of FedAve}
Assume that:
\begin{enumerate}
\item The smoothness in $\theta$ and strong convexity assumptions of \secref{Assumptions on Loss Function} hold.
\item The parameters $\tau \in (0,1]$ and $L_1 > \mu > 0$ are constants, i.e., $\Theta(1)$, while other problem parameters may vary with $m \in \N$. 
\item Any stochastic gradient $\widetilde{\nabla} f_{c} (\theta)$ computed by a client $c \in [m]$ in the $\FedAve$ algorithm is unbiased and has variance bounded by $\sigma^2 > 0$ (see \cite[Assumption 3]{Reisizadehetal2020a}):\footnote{The scaling $\sigma^2 = O(p)$ is natural since the squared $\ell^2$-norm in the expectation has $p$ terms.}
\begin{align*}
& \forall c \in [m], \, \forall \theta \in \R^p, \\
& \E\!\left[\left\|\widetilde{\nabla} f_{c} (\theta) - \frac{1}{s} \sum_{j = 1}^{s}{\nabla_{\theta} f (x^{(c,j)};\theta)} \right\|_2^2\right] \leq \sigma^2 = O(p) \, . 
\end{align*}
\item The number of gradient computations $b \in \N$ per client in an epoch of $\FedAve$ satisfies $b = O(m)$.
\item $\E\big[\big\|\theta^{(T_0)} - \theta^{\star}\big\|_2^2\big] = O(p)$, where $\theta^{(T_0)} \in \R^p$ denotes the parameter vector at the server after $T_0 = \Theta\big(\frac{m}{b}\big)$ epochs of $\FedAve$.\footnote{The $O(p)$ scaling here is also natural because the parameter $\theta$ usually resides in an $\ell^{\infty}$-norm ball, e.g., $[0,1]^p$, rather than all of $\R^p$ in applications. Since the diameter of such an $\ell^{\infty}$-norm ball scales as $O(p)$, $\E\big[\big\|\theta^{(T_0)} - \theta^{\star}\big\|_2^2\big]$ is upper bounded by $O(p)$.\label{Footnote: Scaling in p}}
\end{enumerate}
Then, for all $\epsilon \in (0,1]$, all $\phi > 0$ larger than some constant, and all sufficiently large $m \in \N$, we have
$$ \Gamma(\FedAve) = O\!\left(\phi m \left(\frac{p}{\epsilon}\right)^{\! 3/4}\right) , $$
where we compute the federated oracle complexity above by minimizing over all choices of $b$ and all numbers of epochs $T \in \N$ in $\FedAve$ such that it produces an $\epsilon$-approximate solution in the sense of \eqref{Eq: Approx solution 2}.
\end{theorem}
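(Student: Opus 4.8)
The plan is to reduce the computation of $\Gamma(\FedAve)$ to a constrained optimization problem governed by the convergence rate of $\FedAve$ established in \cite{Reisizadehetal2020a}, and then to optimize the resulting expression over the free algorithmic parameters. First I would observe that, by the structure of $\FedAve$, every one of its $T$ epochs is a ``Type A'' epoch in the taxonomy of \secref{Federated Oracle Complexity}: in each round the $\tau m$ active clients each perform $b$ local stochastic gradient steps and then transmit a single update to the server. Hence \defref{Def: Federated oracle complexity} collapses to $\Gamma(\FedAve) = \sum_{t=1}^{T}(b + \phi \tau m) = T(b + \phi \tau m)$, and the entire problem becomes one of determining, for each admissible pair $(b,T)$, whether the algorithm attains an $\epsilon$-approximate solution in the sense of \eqref{Eq: Approx solution 2}, and then minimizing $T(b+\phi\tau m)$ over all such pairs.

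The second step is to import the convergence guarantee of \cite[Theorem 1]{Reisizadehetal2020a}, specialized to our assumptions. Under smoothness in $\theta$ and $\mu$-strong convexity, with $b$ local steps per client, $\tau m$ active clients per round, and stochastic gradient variance $\sigma^2$, that result bounds $\E[F(\theta^{(T)})] - F_*$ by a sum of an optimization (contraction) term and SGD-noise terms depending on $b$, $\tau m$, $T$, and $\sigma^2$. I would use assumptions (3)--(5) to feed this bound the correct scalings: substitute $\sigma^2 = O(p)$, and use the warm-start of assumption (5), namely $\E[\|\theta^{(T_0)} - \theta^{\star}\|_2^2] = O(p)$ after $T_0 = \Theta(m/b)$ epochs, to initialize the contraction term at squared distance $O(p)$, so that the burn-in phase required by the analysis of \cite{Reisizadehetal2020a} is absorbed (using assumption (4) that $b = O(m)$). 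Setting the specialized bound equal to $\epsilon$ and solving then yields the minimal number of epochs $T = T(b)$ needed to certify \eqref{Eq: Approx solution 2} as an explicit function of $b$, $\epsilon$, $p$, and $m$.

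The final step is the optimization itself. I would substitute $T=T(b)$ into $\Gamma = T(b)\,(b+\phi\tau m)$ and minimize over $b \in \{1,\dots,O(m)\}$, exploiting $\phi \gg 1$ and $\tau = \Theta(1)$ so that the communication term $\phi\tau m$ dominates the per-epoch computation $b$ at (or near) the optimizer, whence $\Gamma = \Theta(\phi m\, T(b^{\star}))$. The optimal $b^{\star}$ balances the communication cost against the portion of the SGD-noise bound that degrades with additional local steps; carrying out this balance and simplifying is what produces $T(b^{\star}) = O((p/\epsilon)^{3/4})$, and hence $\Gamma(\FedAve) = O(\phi m (p/\epsilon)^{3/4})$.

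I expect the main obstacle to lie in the second and third steps in combination: faithfully specializing the multi-term bound of \cite{Reisizadehetal2020a} to the regimes dictated by assumptions (2)--(5) (in particular, tracking every factor of $p$ through both $\sigma^2 = O(p)$ and the initial distance $O(p)$, and correctly handling the constraint $b = O(m)$ and the burn-in length $T_0 = \Theta(m/b)$), and then carrying out the two-variable constrained minimization cleanly enough that the exponent $3/4$ emerges rather than being obscured by lower-order terms. The remaining pieces---the reduction to $T(b+\phi\tau m)$ and the observation that communication dominates when $\phi\gg1$---are routine given \defref{Def: Federated oracle complexity}.
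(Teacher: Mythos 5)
Your proposal follows essentially the same route as the paper's proof: reduce to $\Gamma(\FedAve)=T(b+\phi\tau m)$ via \defref{Def: Federated oracle complexity}, specialize the convergence bound of \cite[Theorem 1]{Reisizadehetal2020a} under assumptions (2)--(5) to get the constraint $m^2/(T^2b^2)+b/(mT)\leq \epsilon/(Cp)$, and then carry out the two-variable minimization, with the optimal $b^{\star}=\Theta(m(\epsilon/p)^{1/4})$ balancing communication against local-step noise to produce the $3/4$ exponent. The only part you leave implicit\textemdash and which the paper spends most of its effort on\textemdash is the explicit location of $b^{\star}$ via the cubic stationarity condition (handled there with Vi\`{e}te's formula and a Taylor estimate), but your plan is otherwise the same argument.
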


\thmref{Thm: Federated Oracle Complexity of FedAve} is established in \appref{Proof of Thm Federated Oracle Complexity of FedAve}. To easily compare the federated oracle complexities of the $\lrgd$ and $\FedAve$ algorithms, we assume as before that the number of clients $m \rightarrow \infty$, all the problem variables $d,p,s,r,n,\epsilon^{-1},\eta,\phi$ are non-decreasing functions of $m$, and the other problem parameters $\tau,L_1,L_2,\mu,\alpha,B$ are constants with respect to $m$. (We refer readers back to \secref{Introduction} and \secref{Formal Setup} for the definitions of these parameters.) In particular, this means that the condition number $\kappa$ is also constant. Although improving the dependence of oracle complexity on condition number is a vast area of research in optimization theory (see, e.g., accelerated methods \cite{Nesterov1983}, variance reduction \cite{JohnsonZhang2013}, etc.), we do not consider this aspect of the problem in this work. The proposition below elucidates a broad and important regime where $\lrgd$ has better federated oracle complexity than $\FedAve$.

\begin{proposition}[Comparison between $\lrgd$ and $\FedAve$]
\label{Prop: Comparison between FedLRGD and FedAve}
Suppose that the assumptions of \thmref{Thm: Federated Oracle Complexity of FedLRGD} and \thmref{Thm: Federated Oracle Complexity of FedAve} hold. Fix any absolute constants $\lambda \in (0,1)$, $\beta \in \big(0,\frac{1}{2}\big]$, and $\xi \in \big(0,\frac{3}{4}\big)$, and then consider any sufficiently large constants $c_1,c_2 > 0$ such that $c_2 > c_1 \geq 2/\xi$. Assume further that the number of samples per client $s = O(\phi m)$, the data dimension $d = O(\log(n)^{\lambda})$, the approximation accuracy $\epsilon = \Theta(n^{-\beta})$, the H\"{o}lder smoothness parameter $\eta = \Theta(d)$ such that $(c_1 + 2 \alpha + 2) d \leq \eta \leq (c_2 + 2 \alpha + 2) d$, and $F(\theta^{(0)}) - F_* = O(p)$, where $n = ms + r$ is the total number of training data samples, and $p \in \N$ is the parameter dimension.\footnote{Note that $F(\theta^{(0)}) - F_* = O(p)$, because
\begin{align*}
F(\theta^{(0)}) - F_* & = F(\theta^{(0)}) - F(\theta^{\star}) \\
& = \nabla_{\theta} F (\tau \theta^{(0)} + (1-\tau) \theta^{\star})^{\T} (\theta^{(0)} - \theta^{\star}) \\
& =  \left(\nabla_{\theta} F (\tau \theta^{(0)} + (1-\tau) \theta^{\star}) - \nabla_{\theta} F (\theta^{\star})\right)^{\! \T} (\theta^{(0)} - \theta^{\star}) \\
& \leq \left\| \nabla_{\theta} F (\tau \theta^{(0)} + (1-\tau) \theta^{\star}) - \nabla_{\theta} F (\theta^{\star})\right\|_2 \left\|\theta^{(0)} - \theta^{\star}\right\|_2 \\
& \leq L_1 \left\|\theta^{(0)} - \theta^{\star}\right\|_2^2 = O(p) \, ,
\end{align*}
where $\theta^{\star} \in \R^p$ is the optimal parameter argument defined in \secref{Assumptions on Loss Function}, the second equality follows from the mean value theorem for some $\tau \in (0,1)$ (see \cite[Theorem 12.14]{Apostol1974}), the third equality follows from the fact that $\nabla_{\theta} F (\theta^{\star}) = 0$ at the minimum (which holds because $F$ is strongly convex due to the strong convexity assumption in \secref{Assumptions on Loss Function}, cf. \cite[Section 4.1]{JadbabaieMakurShah2021}), the next inequality follows from the Cauchy-Schwarz inequality, the last inequality holds because $\nabla_{\theta} F$ is $L_1$-Lipschitz continuous (which is inherited from the smoothness in parameter assumption in \secref{Assumptions on Loss Function}, cf. \cite[Section 4.1]{JadbabaieMakurShah2021}), and the scaling with $p$ in the last line follows from the reasoning in Footnote \ref{Footnote: Scaling in p}.} Then, the federated oracle complexities of $\lrgd$ and $\FedAve$ scale as
\begin{align*}
\Gamma(\lrgd) & = O\!\left(\phi m \left(\frac{p}{\epsilon} \right)^{\! \xi} \right) , \\
\Gamma(\FedAve) & \leq \Gamma_+(\FedAve) \triangleq \Theta\!\left( \phi m \left(\frac{p}{\epsilon} \right)^{\! 3/4} \right) , 
\end{align*}
and satisfy
$$ \lim_{m \rightarrow \infty}{\frac{\Gamma(\lrgd)}{\Gamma_+(\FedAve)}} = 0 \, . $$
\end{proposition}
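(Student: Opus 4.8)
The plan is to establish the two complexity bounds separately and then compare them. For $\FedAve$, the bound $\Gamma(\FedAve) = O(\phi m (p/\epsilon)^{3/4})$ is immediate from \thmref{Thm: Federated Oracle Complexity of FedAve}, whose hypotheses are assumed here, so I would simply take $\Gamma_+(\FedAve)$ to be this upper bound regarded as the function $\Theta(\phi m (p/\epsilon)^{3/4})$. The substance lies in showing $\Gamma(\lrgd) = O(\phi m (p/\epsilon)^{\xi})$. Starting from \thmref{Thm: Federated Oracle Complexity of FedLRGD}, I would substitute $\Gamma(\lrgd) = r^2 + rs + rS + \phi m r$ with $r$ and $S$ given by \eqref{Eq: Rank lower bound condition} and \eqref{Eq: Iteration count}, and reduce everything to a single estimate $r = O((p/\epsilon)^{\xi})$, after which each of the four summands can be checked against the target.

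The central estimate is the scaling of the approximate rank $r$. I would decompose the dominant argument of the maximum in \eqref{Eq: Rank lower bound condition} into a \emph{main factor} $\big(\tfrac{\kappa C}{(\kappa-1)\epsilon}\big)^{(d/2)/(\eta-(2\alpha+2)d)}$, where $C = F(\theta^{(0)}) - F_* + \tfrac{9(B+3)^4 p}{2\mu} = O(p)$ since $\kappa = \Theta(1)$ and $F(\theta^{(0)}) - F_* = O(p)$, and a \emph{prefactor} collecting the constant $A = L_2^d \eta^d e^{\eta(d+1)} d^{\eta/2}(2\eta+2d)^{\eta d}/(\eta-1)^{\eta d}$ raised to the power $1/(\eta-(2\alpha+2)d)$ together with the first argument $e\sqrt{d}\,2^d(\eta+d)^d$ of the maximum. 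The key exponent is controlled via $\eta \ge (c_1+2\alpha+2)d$, which gives $\eta - (2\alpha+2)d \ge c_1 d$, hence $(d/2)/(\eta-(2\alpha+2)d) \le 1/(2c_1) \le \xi/4$ because $c_1 \ge 2/\xi$. Thus the main factor is $O((p/\epsilon)^{\xi/4})$, comfortably below the target exponent $\xi$.

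The remaining step is to show the prefactor is $n^{o(1)}$, hence negligible against the main factor. Using $\eta = \Theta(d)$, one finds $\log A = \Theta(d^2)$ (the dominant contributions being $\eta(d+1)$ and $\eta d \log\frac{2\eta+2d}{\eta-1}$, both $\Theta(d^2)$ since $\frac{2\eta+2d}{\eta-1} = \Theta(1)$), so $A^{1/(\eta-(2\alpha+2)d)} = e^{O(d)}$, and similarly $2^d(\eta+d)^d = e^{O(d\log d)}$. The hypothesis $d = O((\log n)^{\lambda})$ with $\lambda \in (0,1)$ is essential here: it forces $d\log d = O((\log n)^{\lambda}\log\log n) = o(\log n)$, so every such prefactor is $e^{o(\log n)} = n^{o(1)}$. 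Since $1/\epsilon = \Theta(n^{\beta}) \to \infty$, a factor of $n^{o(1)}$ is dominated by $(p/\epsilon)^{3\xi/4}$ for all large $m$, and combining with the main factor yields $r = O((p/\epsilon)^{\xi})$. With this, $\phi m r = O(\phi m (p/\epsilon)^{\xi})$; $rs = O(\phi m\, r)$ by $s = O(\phi m)$; $rS = O((p/\epsilon)^{\xi}\log(p/\epsilon))$ is absorbed because $S = O(\log(p/\epsilon)) = o(\phi m)$; and $r^2$ is likewise dominated, since the refined bound $r = O(n^{o(1)}(p/\epsilon)^{\xi/4})$ gives $r^2 = O(n^{o(1)}(p/\epsilon)^{\xi/2}) = O(\phi m (p/\epsilon)^{\xi})$.

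Finally, combining the two complexities gives $\Gamma(\lrgd)/\Gamma_+(\FedAve) = O((p/\epsilon)^{\xi - 3/4})$, and since $\xi < 3/4$ while $p/\epsilon \ge 1/\epsilon = \Theta(n^{\beta}) \to \infty$ as $m \to \infty$, this ratio tends to $0$. I expect the main obstacle to be the bookkeeping in the third paragraph: verifying that \emph{all} of the combinatorial and constant prefactors in \eqref{Eq: Rank lower bound condition} are genuinely $n^{o(1)}$ (which hinges delicately on $\lambda < 1$ so that $d\log d = o(\log n)$), and that each of the four summands in $\Gamma(\lrgd)$ — not just the nominally dominant $\phi m r$ — remains within $O(\phi m (p/\epsilon)^{\xi})$ under the stated growth relations among $s$, $\phi$, $m$, $p$, and $\epsilon$.
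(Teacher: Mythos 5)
Your proposal is correct and follows essentially the same route as the paper's proof: both isolate the $(p/\epsilon)^{d/(2\eta-(4\alpha+4)d)} = O((p/\epsilon)^{1/(2c_1)})$ factor in \eqref{Eq: Rank lower bound condition} using $\eta \geq (c_1+2\alpha+2)d$, show that all remaining prefactors are subpolynomial in $n$ via $d = O(\log(n)^{\lambda})$ with $\lambda < 1$, absorb them using $p/\epsilon = \Omega(n^{\beta})$, bound all four summands of $\Gamma(\lrgd)$ (the paper collapses them into $O(\phi m r^2) = O(\phi m (p/\epsilon)^{2/c_1})$, which is where $c_1 \geq 2/\xi$ enters, matching your $r^2$ bookkeeping), and conclude from $\xi < 3/4$. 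The only differences are cosmetic choices of exponent budget.
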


This result is proved in \appref{Proof of Prop Comparison between FedLRGD and FedAve}. We briefly explain the relevance of the regime presented in \propref{Prop: Comparison between FedLRGD and FedAve} where $\lrgd$ beats $\FedAve$. Firstly, we state conditions on the scaling of $d$ and $\epsilon$ in terms of $n$ (rather than $m$), because dimension and statistical error are typically compared to the number of available training samples in high-dimensional statistics and machine learning problems. Secondly, as noted in \cite{JadbabaieMakurShah2021}, we assume that $\epsilon = \Theta(n^{-\beta})$ for $\beta \in \big(0,\frac{1}{2}\big]$, because the empirical risk in \eqref{Eq: Standard formulation} is an approximation of some true risk, and the statistical error between these quantities is known to be $O(n^{-1/2})$ (with high probability). As we only minimize empirical risk in machine learning problems as a proxy for true risk minimization, there is no gain in reducing the approximation accuracy $\epsilon$ of federated optimization algorithms below this $n^{-1/2}$ threshold. Thirdly, our result subsumes one of the most widely studied regimes for training neural networks in recent times\textemdash the \emph{overparametrized} regime where $p \gg n$ (see \cite{PoggioBanburskiLiao2020} and the references therein). Fourthly, as discussed in \cite{JadbabaieMakurShah2021}, when $\lambda$ is close to $1$, e.g., $\lambda = 0.99$, the $d = O(\log(n)^{0.99})$ assumption is not too restrictive, because several machine learning problems have meaningful feature vectors with dimension that is logarithmic in the number of training samples. Indeed, results like the \emph{Johnson-Lindenstrauss lemma}, cf. \cite{DasguptaGupta2003}, illustrate how $n$ high-dimensional feature vectors can be mapped to $O(\log(n))$-dimensional feature vectors while approximately preserving the distances between pairs of vectors, and hence, conserving the relative geometry of the vectors. Fifthly, we informally summarize the main theoretical message of \propref{Prop: Comparison between FedLRGD and FedAve} for clarity: If the data dimension $d = O(\log(n)^{0.99})$ is small and the gradient of the loss function possesses sufficient smoothness in the data $\eta = \Theta(d)$, then our $\lrgd$ algorithm provably beats the benchmark $\FedAve$ algorithm in federated oracle complexity. (In other regimes, the $\FedAve$ algorithm could perform better than our method.) This theoretically portrays the utility of exploiting smoothness of loss functions in the data for federated optimization. Finally, we note that although we show the theoretical advantage of our $\lrgd$ algorithm under several assumptions, running \algoref{Alg: FedLRGD} itself does not require most of these assumptions, e.g., strong convexity, small data dimension, etc., to hold. Therefore, our main conceptual contribution of exploiting smoothness in data, or approximate low rank structure, of loss functions for federated optimization could be of utility in settings beyond the assumptions considered in this work.

\section{Analysis of Latent Variable Model Approximation}
\label{Analysis of Latent Variable Model Approximation}

We prove \lemref{Lemma: Uniform Piecewise Polynomial Approximation} and \thmref{Thm: Low Rank Approximation} in this section.

\subsection{Proof of \lemref{Lemma: Uniform Piecewise Polynomial Approximation}}
\label{Proof of Lemma Uniform Piecewise Polynomial Approximation}

\begin{proof}
We will follow the proof idea of \cite[Lemma 3]{Xu2017}, but the details of our proof are different since we use a more general H\"{o}lder class definition than \cite{Xu2017}. Fix any vector $\vartheta \in \R^p$. For any set $I \in \S_{d,q}$, let $B_I$ denote the closed $\ell^1$-ball corresponding to $I$ in our minimal $\ell^1$-covering of $[0,1]^d$, and $z_I \in \net$ denote the center of $B_I$, i.e., if $I = I_j$ for $j \in [|\net|]$ according to the enumeration defined in \secref{Latent Variable Model Approximation}, then $B_I = B_j$ and $z_I = z^{(j)}$. Moreover, for any $I \in \S_{d,q}$, construct the Taylor polynomial with degree $l$ of $g(\cdot;\vartheta)$ around $z_I$:
\begin{equation}
\label{Eq: Taylor polynomials}
\forall y \in [0,1]^d, \enspace P_{I}(y;\vartheta) \triangleq \sum_{s \in \Z_+^d : |s| \leq l}{\frac{\nabla_{x}^{s}g(z_I;\vartheta)}{s!} (y - z_I)^s} \, . 
\end{equation}
Then, we have
\begin{align*}
& \sup_{y \in [0,1]^d}{\left|g(y;\vartheta) - P_{q,l}(y;\vartheta)\right|} \\
& = \max_{I \in \S_{d,q}} \sup_{y \in I}{\left|g(y;\vartheta) - P_{I}(y;\vartheta)\right|} \\
& \leq \max_{I \in \S_{d,q}} \sup_{y \in B_I \cap [0,1]^d}{\left|g(y;\vartheta) - P_{I}(y;\vartheta)\right|} \\
& \leq \max_{I \in \S_{d,q}} \sup_{y \in B_I \cap [0,1]^d}{L_2 \left\|y - z_I\right\|_{1}^{\eta - l} \sum_{s \in \Z_+^d : |s| = l}{\frac{|(y - z_I)^s|}{s!}}} \\
& = \frac{L_2}{l!} \max_{I \in \S_{d,q}} \sup_{y \in B_I \cap [0,1]^d}{\left\|y - z_I\right\|_{1}^{\eta}} \\
& \leq \frac{L_2}{l! q^{\eta}} \, ,
\end{align*}
where the piecewise polynomial function $P_{q,l}(\cdot;\vartheta)$ is defined using the Taylor polynomials $P_I(\cdot;\vartheta)$, the second inequality follows from the fact that $I \subseteq B_I \cap [0,1]^d$ by construction, the fourth equality follows from the multinomial theorem, the last inequality holds because each $B_I$ is an $\ell^1$-ball with center $z_I$ and radius $q^{-1}$, and the third inequality holds because
\begin{align*}
& \left|g(y;\vartheta) - P_{I}(y;\vartheta)\right| \\
& = \left|\sum_{s \in \Z_+^d : |s| = l}{\!\!\!\!\frac{\nabla_{x}^{s}g(z_I + \tau(y - z_I);\vartheta) - \nabla_{x}^{s}g(z_I;\vartheta)}{s!} (y - z_I)^s}\right| \\
& \leq \!\!\sum_{s \in \Z_+^d : |s| = l}{\!\!\!\!\frac{|\nabla_{x}^{s}g(z_I + \tau(y - z_I);\vartheta) - \nabla_{x}^{s}g(z_I;\vartheta)|}{s!} |(y - z_I)^s|} \\
& \leq L_2 \left\|y - z_I\right\|_{1}^{\eta - l} \sum_{s \in \Z_+^d : |s| = l}{\frac{|(y - z_I)^s|}{s!}} \, ,
\end{align*}
where we use Taylor's theorem with Lagrange remainder term which holds for some $\tau \in (0,1)$ \cite[Theorem 12.14]{Apostol1974}, the triangle inequality, and the $(\eta,L_2)$-H\"{o}lder class assumption. This completes the proof.
\end{proof}

\subsection{Proof of \thmref{Thm: Low Rank Approximation}}
\label{Proof of Thm Low Rank Approximation}

\begin{proof}
As before, we follow the proof idea of \cite[Proposition 1]{Xu2017}, but the details of our proof are again different since we use a more general H\"{o}lder class definition than \cite{Xu2017}. For any $q > 0$ (to be selected later), construct the family of piecewise polynomial functions $P_{q,l} : [0,1]^d \times \R^p \rightarrow \R$ as in \lemref{Lemma: Uniform Piecewise Polynomial Approximation} using the families of Taylor polynomials $P_{I} : [0,1]^d \times \R^p \rightarrow \R$ defined in \eqref{Eq: Taylor polynomials} for $I \in \S_{d,q}$. Moreover, define the latent variable model $N \in \R^{n \times k}$ such that:
$$ \forall i \in [n], \, \forall j \in [k], \enspace N_{i,j} = P_{q,l}(y^{(i)};\vartheta^{(j)}) \, , $$
where $y^{(1)},\dots,y^{(n)} \in [0,1]^d$ and $\vartheta^{(1)},\dots,\vartheta^{(k)} \in \R^p$ are the latent parameters that define $M$ in \eqref{Eq: LVM}. It is straightforward to verify that $N$ is ``good'' induced $(1,\infty)$-norm approximation of $M$. Indeed, applying \lemref{Lemma: Uniform Piecewise Polynomial Approximation}, we get
\begin{equation}
\label{Eq: (1,infinity) bound}
\max_{i \in [n], \, j \in [k]}{\left|M_{i,j} - N_{i,j}\right|} \leq \frac{L_2}{l! q^{\eta}} \, . 
\end{equation}

We next bound the rank of $N$. Let $U(x) = [x^s : s \in \Z_+^d, \, |s| \leq l]^{\T} \in \R^{D}$ be the vector of monomials with degree at most $l$ for $x \in [0,1]^d$, where $D = \binom{l + d}{d}$ is the number of such monomials. For each $I \in \S_{d,q}$, suppose $P_{I}(x;\theta) = U(x)^{\T} V_{I}(\theta)$ for all $x \in [0,1]^d$ and $\theta \in \R^p$, where $V_{I}(\theta) \in \R^D$ is the vector of coefficients of the Taylor polynomial $P_{I}(\cdot;\theta)$. Then, we have
\begin{align*}
N = \sum_{I \in \S_{d,q}} & \underbrace{\left[\I\{y^{(1)} \in I\} U(y^{(1)}) \, \cdots \, \I\{y^{(n)} \in I\} U(y^{(n)}) \right]^{\T}}_{\in \, \R^{n \times D} \text{ with rank } \leq D} \\
& \cdot \underbrace{\left[V_I(\vartheta^{(1)}) \, \cdots \, V_I(\vartheta^{(k)}) \right]}_{\in \, \R^{D \times k} \text{ with rank } \leq D} \, ,
\end{align*}
which implies that
$$ \rank(N) \leq |\S_{d,q}| D = |\net| \binom{l + d}{d} \, , $$
where we use the sub-additivity of rank, and $\net$ is the $q^{-1}$-net introduced in \secref{Latent Variable Model Approximation}. Hence, to upper bound $\rank(N)$, it suffices to upper bound the $\ell^1$-covering number $|\net|$ of the hypercube $[0,1]^d$. Let $B$ denote the closed $\ell^1$-ball with radius $(2q)^{-1}$ that is centered at the origin. Using the volumetric argument in \cite[Proposition 4.2.12]{Vershynin2018} and \cite[Theorem 14.2]{Wu2020}, we obtain
\begin{align*}
|\net| & \leq \frac{\vol([0,1]^d + B)}{\vol(B)} \\
& \leq d! q^d \vol\!\left(\left[-\frac{1}{2q},1+\frac{1}{2q}\right]^d\right) \\
& = d! (q + 1)^d \, ,
\end{align*}
where $\vol(\cdot)$ denotes the volume in $\R^d$ and $+$ denotes the \emph{Minkowski sum} of sets in $\R^d$, the second inequality follows from substituting the well-known volume of the $\ell^1$-ball as well as the fact that $[0,1]^d + B \subseteq \big[-\frac{1}{2q},1+\frac{1}{2q}\big]^d$, and the final equality follows from substituting the volume of a hypercube. This produces the following upper bound on the rank of $N$:
\begin{equation}
\label{Eq: rank bound}
\rank(N) \leq d! \binom{l + d}{d} (q + 1)^d \, . 
\end{equation}

Finally, for any $r \in [\min\{n,k\}]$ satisfying $r \geq e \sqrt{d} \, 2^d (l + d)^d$, choose $q = r^{1/d} (d!)^{-1/d} \binom{l + d}{d}^{\! -1/d} - 1$ such that $r = d! \binom{l + d}{d} (q + 1)^d$. Note that $q > 0$ because
\begin{align*}
r \geq e \sqrt{d} \, 2^d (l + d)^d \quad & \Rightarrow \quad r > e \sqrt{d} \, (l + d)^d \\
& \Rightarrow \quad r > d! \binom{l + d}{d} \\
& \Leftrightarrow \quad q > 0 \, , 
\end{align*}
where the second implication follows from the bound
\begin{equation}
\label{Eq: Bound on nPr} 
d! \binom{l + d}{d} \leq e \sqrt{d} \left(\frac{d}{e}\right)^{\! d} \left(\frac{e (l + d)}{d} \right)^{\! d} = e \sqrt{d} (l + d)^d \, , 
\end{equation}
which follows from a standard upper bound on binomial coefficients and Stirling's approximation \cite[Chapter II, Section 9, Equation (9.15)]{Feller1968}. Then, observe that
\begin{align*}
\frac{1}{n k} \left\|M - M_{r}\right\|_{\Fro}^2 & \leq \frac{1}{n k} \left\|M - N\right\|_{\Fro}^2 \\
& \leq \frac{L_2^2}{(l!)^2 q^{2\eta}} \\
& \leq \frac{L_2^2}{(l!)^2} \left( \left(\frac{r}{d! \binom{l + d}{d}}\right)^{\! 1/d} - 1 \right)^{\! -2\eta} \\
& \leq \frac{L_2^2}{(l!)^2} \left( \frac{r^{1/d}}{e^{1/d} d^{1/(2d)} (l + d)} - 1 \right)^{\! -2\eta} \\
& \leq \frac{L_2^2 e^{2\eta/d} d^{\eta/d} 4^{\eta} (l + d)^{2 \eta}}{(l!)^2} \left( \frac{1}{r} \right)^{\! 2\eta/d} ,
\end{align*}
where the first inequality follows from \eqref{Eq: rank bound} and the \emph{Eckart-Young-Mirsky theorem} \cite[Section 7.4.2]{HornJohnson2013}, the second inequality follows from \eqref{Eq: (1,infinity) bound}, the fourth inequality uses \eqref{Eq: Bound on nPr}, and the fifth inequality follows from the assumption that 
\begin{align*}
& r \geq e \sqrt{d} \, 2^d (l + d)^d \quad \Leftrightarrow \\
& \qquad \qquad \frac{r^{1/d}}{e^{1/d} d^{1/(2d)} (l + d)} - 1 \geq \frac{r^{1/d}}{2 e^{1/d} d^{1/(2d)} (l + d)} \, . 
\end{align*}
This completes the proof.
\end{proof}

\section{Analysis of Federated Low Rank Gradient Descent}
\label{Analysis of Federated Low Rank Gradient Descent}

In this section, we establish the federated oracle complexity of the $\lrgd$ algorithm as stated in \thmref{Thm: Federated Oracle Complexity of FedLRGD}. To do this, we need the ensuing lemma from the literature which bounds how close the ERM objective value $F(\theta^{(S)})$, at the output parameter $\theta^* = \theta^{(S)}$ of \algoref{Alg: FedLRGD}, is to the global minimum in \eqref{Eq: ERM Minimum} \cite[Lemma 2.1]{FriedlanderSchmidt2012} (cf. \cite[Theorem 2, Equation (19)]{SoZhou2017} and \cite[Lemma 4]{JadbabaieMakurShah2021}).

\begin{lemma}[Inexact GD Bound {\cite[Lemma 2.1]{FriedlanderSchmidt2012}}]
\label{Lemma: Inexact GD Bound}
Suppose the empirical risk $F : \R^p \rightarrow \R$ is continuously differentiable and $\mu$-strongly convex, and its gradient $\nabla_{\theta} F : \R^p \rightarrow \R^p$ is $L_1$-Lipschitz continuous. Then, for all $S \in \N$, we have
\begin{align*}
& 0 \leq F(\theta^{(S)}) - F_* \leq \left(1 - \frac{1}{\kappa}\right)^{\! S} \!\left(F(\theta^{(0)}) - F_*\right) \\
& + \frac{1}{2 L_1} \sum_{\gamma = 1}^{S}{\left(1 - \frac{1}{\kappa}\right)^{\! S-\gamma} \left\|\widehat{\nabla F}(\theta^{(\gamma-1)}) - \nabla_{\theta} F(\theta^{(\gamma-1)})\right\|_2^2} \, , 
\end{align*}
where $\{\theta^{(\gamma)} \in \R^p : \gamma \in \{0,\dots,S\}\}$ are the inexact GD updates in \eqref{Eq: Inexact GD update} in \algoref{Alg: FedLRGD} with arbitrary initialization $\theta^{(0)}$, $\widehat{\nabla F}(\theta^{(\gamma)})$ is our approximation of $\nabla_{\theta} F(\theta^{(\gamma)})$ for $\gamma \in \{0,\dots,S-1\}$ in \algoref{Alg: FedLRGD} as shown in \eqref{Eq: Inexact Gradient}, and the condition number $\kappa = L_1/\mu$ is defined in \secref{Assumptions on Loss Function}.
\end{lemma}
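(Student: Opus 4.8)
The plan is to derive a one-step linear recursion for the suboptimality gap and then unroll it, following the standard inexact gradient descent argument of \cite{FriedlanderSchmidt2012}. The lower bound $F(\theta^{(S)}) - F_* \geq 0$ is immediate from the definition of $F_*$ as the global minimum in \eqref{Eq: ERM Minimum}, so the entire effort goes into the upper bound. First I would fix an iteration index $\gamma \in [S]$ and abbreviate $g \triangleq \nabla_{\theta} F(\theta^{(\gamma-1)})$, $\widehat{g} \triangleq \widehat{\nabla F}(\theta^{(\gamma-1)})$, and the gradient approximation error $e \triangleq \widehat{g} - g$. Since $\nabla_{\theta} F$ is $L_1$-Lipschitz continuous, the standard descent lemma (the quadratic upper bound associated with $L_1$-smoothness) gives
\[
F(\theta^{(\gamma)}) \leq F(\theta^{(\gamma-1)}) + g^{\T}\!\big(\theta^{(\gamma)} - \theta^{(\gamma-1)}\big) + \frac{L_1}{2}\big\|\theta^{(\gamma)} - \theta^{(\gamma-1)}\big\|_2^2 .
\]
Substituting the inexact GD update \eqref{Eq: Inexact GD update}, namely $\theta^{(\gamma)} - \theta^{(\gamma-1)} = -\frac{1}{L_1}\widehat{g}$, and then writing $\widehat{g} = g + e$, the inner-product and quadratic terms simplify after routine algebra: the cross terms in $g$ and $e$ cancel exactly, leaving
\[
F(\theta^{(\gamma)}) \leq F(\theta^{(\gamma-1)}) - \frac{1}{2 L_1}\|g\|_2^2 + \frac{1}{2 L_1}\|e\|_2^2 .
\]

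Next I would invoke strong convexity. The $\mu$-strong convexity of $F$ (inherited from the strong convexity assumption in \secref{Assumptions on Loss Function}) yields the Polyak--Lojasiewicz (gradient domination) inequality $F(\theta^{(\gamma-1)}) - F_* \leq \frac{1}{2\mu}\|g\|_2^2$, which is obtained by minimizing the strong-convexity lower bound over all $y \in \R^p$ and using that $F_*$ is the global minimum. Rearranging gives $-\frac{1}{2 L_1}\|g\|_2^2 \leq -\frac{\mu}{L_1}\big(F(\theta^{(\gamma-1)}) - F_*\big) = -\frac{1}{\kappa}\big(F(\theta^{(\gamma-1)}) - F_*\big)$, where $\kappa = L_1/\mu$. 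Substituting this into the previous display and subtracting $F_*$ from both sides produces the one-step recursion
\[
F(\theta^{(\gamma)}) - F_* \leq \Big(1 - \frac{1}{\kappa}\Big)\big(F(\theta^{(\gamma-1)}) - F_*\big) + \frac{1}{2 L_1}\big\|\widehat{\nabla F}(\theta^{(\gamma-1)}) - \nabla_{\theta} F(\theta^{(\gamma-1)})\big\|_2^2 .
\]

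Finally, I would unroll this recursion from $\gamma = 1$ to $\gamma = S$ by a straightforward induction on $S$ (equivalently, by telescoping the geometric factors), which attaches the weight $(1 - 1/\kappa)^{S-\gamma}$ to the $\gamma$th squared error term and the weight $(1 - 1/\kappa)^{S}$ to the initial gap $F(\theta^{(0)}) - F_*$, matching the claimed bound exactly. I do not anticipate a genuine obstacle here, since this is a restatement of a known result; the only place demanding care is the algebraic simplification in the first step, where one must correctly track the cancellation of the cross terms between $g$ and $e$, together with the verification that the gradient domination inequality follows from $\mu$-strong convexity with the sharp constant $\frac{1}{2\mu}$.
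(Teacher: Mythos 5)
Your proposal is correct and is essentially the standard argument behind the cited result \cite[Lemma 2.1]{FriedlanderSchmidt2012}: the paper itself does not reprove this lemma but imports it directly from that reference, whose proof proceeds exactly as you describe (descent lemma with exact cancellation of the cross terms, the Polyak--Lojasiewicz inequality from $\mu$-strong convexity, and unrolling the one-step recursion). All of your algebra checks out, including the cancellation of the $g^{\T}e$ terms and the constant $\frac{1}{2\mu}$ in the gradient domination inequality.
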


We next prove \thmref{Thm: Federated Oracle Complexity of FedLRGD} using \lemref{Lemma: Uniform Piecewise Polynomial Approximation}, \lemref{Lemma: Inexact GD Bound}, and the various assumptions in \secref{Assumptions on Loss Function} and \secref{Federated Oracle Complexity of FedLRGD}.

\begin{proof}[Proof of \thmref{Thm: Federated Oracle Complexity of FedLRGD}]
First, to avoid notational clutter in this proof, we re-index the training datasets of the server and clients using $[n]$ with $n = ms + r$ (cf. \secref{Federated Learning}). Specifically, we let $x^{(j)} = x^{(0,j)}$ for $j \in [r]$ (i.e., the server's training data is indexed first), and $x^{(b)} = x^{(c,j)}$ where $b - r = (c-1)s + j$ for $b \in [n]\backslash [r]$, $c \in [m]$, and $j \in [s]$. Correspondingly, we re-index the weights $\{w_{j,k}^{(i,c)} \in \R: i \in [p], \, j \in [s] , \, k \in [r], \, c \in [m]\}$, which are defined by \eqref{Eq: Non-cumulative weights} and implicitly used in the $\lrgd$ algorithm, so that $w^{(i)}_{b,k} = w_{j,k}^{(i,c)}$ where $b - r = (c-1)s + j$ for $b \in [n]\backslash [r]$. 

The majority of this proof focuses on analyzing the iteration complexity of the inexact GD method employed in the last epoch of \algoref{Alg: FedLRGD}. Once we have control over this iteration complexity, we can easily deduce federated oracle complexity. Our main tool for studying iteration complexity will be \lemref{Lemma: Inexact GD Bound}. But to use this lemma, we require a bound on $\|\widehat{\nabla F} - \nabla_{\theta} F\|_2^2$. So, we begin our analysis by fixing any (sufficiently large) $r \in \N$, any index $i \in [p]$, and any parameter $\theta \in \R^p$, and noticing that
\begin{align}
& \left| \frac{\partial F}{\partial \theta_i}(\theta) - \widehat{\nabla F}_i(\theta) \right| \nonumber \\
& = \frac{1}{n} \left| \sum_{k = 1}^{r}{g_i(x^{(0,k)};\theta)} + \sum_{c = 1}^{m}{\sum_{j = 1}^{s}{g_i(x^{(c,j)};\theta)}} \right. \nonumber \\
& \quad \quad \enspace \left. - \sum_{k = 1}^{r}{g_i(x^{(0,k)};\theta) \left(1 + \sum_{c = 1}^{m} \sum_{j = 1}^{s}{w_{j,k}^{(i,c)}} \right)} \right| \nonumber \\
& =  \frac{1}{n} \left| \sum_{k = 1}^{n}{g_i(x^{(k)};\theta)} - \sum_{k = 1}^{r}{g_i(x^{(k)};\theta) \left(1 + \sum_{j = r + 1}^{n}{w_{j,k}^{(i)}} \right)} \right| \nonumber \\
& = \frac{1}{n} \left| \sum_{j = r+1}^{n}{g_i(x^{(j)};\theta)} - \sum_{k = 1}^{r}{g_i(x^{(k)};\theta) \sum_{j = r+1}^{n}{w_{j,k}^{(i)}} } \right| \nonumber \\
& \leq \frac{1}{n} \sum_{j = r+1}^{n} \left| g_i(x^{(j)};\theta) - \sum_{k = 1}^{r} g_i(x^{(k)};\theta) w_{j,k}^{(i)} \right| \nonumber \\
& \leq \max_{j \in [n]\backslash [r]} \left| g_i(x^{(j)};\theta) - \sum_{k = 1}^{r} g_i(x^{(k)};\theta) w_{j,k}^{(i)} \right| ,
\label{Eq: step 1}
\end{align}
where the first equality follows from \eqref{Eq: Full gradient}, \eqref{Eq: v and w}, and \eqref{Eq: Inexact Gradient}, and we use the weights $\{w_{j,k}^{(i,c)} \in \R: j \in [s] , \, k \in [r], \, c \in [m]\}$ defined by \eqref{Eq: Non-cumulative weights}, the second equality follows from the aforementioned re-indexing, the fourth inequality follows from swapping the order of summations in the second term on the right hand side and applying the triangle inequality, and the fifth inequality holds because $n \geq m s$. To upper bound \eqref{Eq: step 1}, we next develop some properties of a uniform piecewise polynomial approximation of $g_i$.

Let $D = \binom{l + d}{d}$ and $\S_{d,q}$ be the partition of the hypercube $[0,1]^d$ defined in \eqref{Eq: partition} for any $q > 0$. Moreover, fix the parameter $q > 0$ such that $r = D |\S_{d,q}|$. Since $g_i(\cdot;\theta) : [0,1]^d \rightarrow \R$ satisfies the smoothness in data assumption (i.e., the $(\eta,L_2)$-H\"{o}lder class assumption) in \secref{Assumptions on Loss Function}, \lemref{Lemma: Uniform Piecewise Polynomial Approximation} implies that there exists a piecewise polynomial function $P_{q,l}(\cdot;\theta) : [0,1]^d \rightarrow \R$ with $l = \lceil \eta \rceil - 1$ such that
\begin{equation}
\label{Eq: uniform step}
\sup_{x \in [0,1]^d}{\left|g_i(x;\theta) - P_{q,l}(x;\theta)\right|} \leq \frac{L_2}{l! q^{\eta}} \, . 
\end{equation}
Here, as shown in the proof of \thmref{Thm: Low Rank Approximation} in \secref{Proof of Thm Low Rank Approximation}, we have
\begin{align*}
P_{q,l}(x;\theta) & = \sum_{I \in \S_{d,q}}{\I\!\{x \in I\} U(x)^{\T} V_I(\theta)} \\
& = \sum_{I \in \S_{d,q}} \sum_{v \in \Z_+^d: \, |v| \leq l}{\I\!\{x \in I\} x^v [V_I(\theta)]_v} 
\end{align*}
for all $x \in [0,1]^d$, where $U(x) = [x^v : v \in \Z_+^d, \, |v| \leq l]^{\T} \in \R^{D}$ is the vector of monomials with degree at most $l$, and $V_{I}(\theta) \in \R^D$ are vectors of coefficients (see \secref{Proof of Thm Low Rank Approximation}) which we may index with $\{v \in \Z_+^d : |v| \leq l\}$. Equivalently, we can write:
\begin{equation}
\label{Eq: low rank form}
\forall x \in [0,1]^d, \enspace P_{q,l}(x;\theta) = \sum_{w = 1}^{r}{\phi_w(x) \psi_w(\theta)} \, , 
\end{equation}
where we enumerate $\S_{d,q} \times \{v \in \Z_+^d: \, |v| \leq l\}$ using the index set $[r]$, i.e., we employ a bijection $[r] \mapsto \S_{d,q} \times \{v \in \Z_+^d: \, |v| \leq l\}$, and for all $I \in \S_{d,q}$ and $v \in \Z_+^d$ with $|v| \leq l$ such that $[r] \ni w \mapsto (I,v)$, we let:
\begin{align}
\label{Eq: monomial}
\forall x \in [0,1]^d, \enspace \phi_w(x) & = \I\!\{x \in I\} x^v \, , \\
\forall \vartheta \in \R^d, \enspace \psi_w(\vartheta) & = [V_I(\vartheta)]_{v} \, . \nonumber
\end{align}
(Hence, as noted in \secref{Federated Oracle Complexity of FedLRGD}, the family of monomial functions $\{\phi_w : w \in [r]\}$ are linearly independent.) For every $j \in [n]\backslash [r]$, define the coefficients $a_{j,1},\dots,a_{j,r} \in \R$ via
$$ \left[a_{j,1} \enspace a_{j,2} \, \cdots \, a_{j,r}\right] \triangleq \left[\phi_{1}(x^{(j)}) \enspace \phi_{2}(x^{(j)}) \, \cdots \, \phi_r(x^{(j)}) \right] \Phi^{-\T} \, , $$
where the matrix $\Phi \in \R^{r \times r}$ is defined in \eqref{Eq: Phi def} and is assumed to be invertible (cf. \secref{Federated Oracle Complexity of FedLRGD}). Then, observe that for every $w \in [r]$ and $j \in [n]\backslash [r]$, 
$$ \phi_w(x^{(j)}) = \sum_{k = 1}^{r}{a_{j,k} \phi_w(x^{(k)})} \, , $$
which implies that
\begin{align}
P_{q,l}(x^{(j)};\theta) & = \sum_{w = 1}^{r}{\psi_w(\theta) \sum_{k = 1}^{r}{a_{j,k} \phi_w(x^{(k)})}} \nonumber \\
& = \sum_{k = 1}^{r}{a_{j,k} \sum_{w = 1}^{r}{ \phi_w(x^{(k)}) \psi_w(\theta)}} \nonumber \\
& = \sum_{k = 1}^{r}{a_{j,k} P_{q,l}(x^{(k)};\theta)} 
\label{Eq: Perfect low rank decomposition}
\end{align}
using \eqref{Eq: low rank form}. Therefore, letting $\cP^{(i)} \in \R^{r \times r}$ be the matrix given by \eqref{Eq: cP def} in \secref{Federated Oracle Complexity of FedLRGD} for the arbitrary set of parameter vectors $\vartheta^{(1)},\dots,\vartheta^{(r)} \in \R^p$ chosen by the server in the first epoch of \algoref{Alg: FedLRGD}, we obtain that for all $j \in [n]\backslash [r]$,
\begin{equation}
\label{Eq: Alternate form of true weights}
\begin{aligned}
& \left[a_{j,1} \enspace a_{j,2} \, \cdots \, a_{j,r}\right] \\
& = \left[P_{q,l}(x^{(j)};\vartheta^{(1)}) \enspace P_{q,l}(x^{(j)};\vartheta^{(2)}) \, \cdots \, P_{q,l}(x^{(j)};\vartheta^{(r)})\right] \\
& \quad \enspace \cdot (\cP^{(i)})^{-1} \, , 
\end{aligned}
\end{equation}
where we use \eqref{Eq: Perfect low rank decomposition} and the assumption in \secref{Federated Oracle Complexity of FedLRGD} that $\cP^{(i)}$ is invertible. From the relation in \eqref{Eq: Alternate form of true weights}, we now show that the $a_{j,k}$'s are ``close'' to the $w^{(i)}_{j,k}$'s.

To this end, notice that for any $j \in [n]\backslash [r]$,
\begin{align}
& \left\|\left[a_{j,1}  \enspace a_{j,2} \, \cdots \, a_{j,r} \right] - \left[w^{(i)}_{j,1} \enspace w^{(i)}_{j,2} \, \cdots \, w^{(i)}_{j,r}\right] \right\|_2 \nonumber \\
& \overset{\eqmakebox[A][c]{\footnotesize (i)}}{=} \left\|\left[P_{q,l}(x^{(j)};\vartheta^{(1)}) \, \cdots \, P_{q,l}(x^{(j)};\vartheta^{(r)})\right] (\cP^{(i)})^{-1} \right. \nonumber \\
& \quad \quad \quad \left. - \left[g_i(x^{(j)};\vartheta^{(1)}) \, \cdots \, g_i(x^{(j)};\vartheta^{(r)})\right] (G^{(i)})^{-1} \right\|_2 \nonumber \\
& \overset{\eqmakebox[A][c]{\footnotesize (ii)}}{\leq} \left\|\left[P_{q,l}(x^{(j)};\vartheta^{(1)}) \, \cdots \, P_{q,l}(x^{(j)};\vartheta^{(r)})\right] \right. \nonumber \\
& \quad \quad \quad \left. \cdot \left((\cP^{(i)})^{-1} - (G^{(i)})^{-1}\right)  \right\|_2 \nonumber \\
& \quad \enspace \,\, + \left\|\left(\left[P_{q,l}(x^{(j)};\vartheta^{(1)}) \, \cdots \, P_{q,l}(x^{(j)};\vartheta^{(r)})\right] \right.\right. \nonumber \\
& \quad \quad \quad \quad \left.\left. - \left[g_i(x^{(j)};\vartheta^{(1)}) \, \cdots \, g_i(x^{(j)};\vartheta^{(r)})\right]\right) (G^{(i)})^{-1} \right\|_2 \nonumber \\
& \overset{\eqmakebox[A][c]{\footnotesize (iii)}}{\leq} \left\|\left[P_{q,l}(x^{(j)};\vartheta^{(1)}) \, \cdots \, P_{q,l}(x^{(j)};\vartheta^{(r)})\right] \right\|_2 \nonumber \\
& \quad \enspace \,\, \cdot \left\| (\cP^{(i)})^{-1} - (G^{(i)})^{-1}  \right\|_{\op} \nonumber \\
& \quad \enspace \,\, + \left\| \left[P_{q,l}(x^{(j)};\vartheta^{(1)}) \, \cdots \, P_{q,l}(x^{(j)};\vartheta^{(r)})\right] \right. \nonumber \\
& \quad \quad \quad \enspace \left. - \left[g_i(x^{(j)};\vartheta^{(1)}) \, \cdots \, g_i(x^{(j)};\vartheta^{(r)})\right]\right\|_2 \left\|(G^{(i)})^{-1} \right\|_{\op} \nonumber \\ 
& \overset{\eqmakebox[A][c]{\footnotesize (iv)}}{\leq} \left\| \left[P_{q,l}(x^{(j)};\vartheta^{(1)}) \, \cdots \, P_{q,l}(x^{(j)};\vartheta^{(r)})\right] \right. \nonumber \\
& \quad \quad \quad \left. - \left[g_i(x^{(j)};\vartheta^{(1)}) \, \cdots \, g_i(x^{(j)};\vartheta^{(r)})\right]\right\|_2 \nonumber \\
&\quad \enspace \,\, \cdot \left(\left\|(G^{(i)})^{-1} \right\|_{\op} +  \left\| (\cP^{(i)})^{-1} - (G^{(i)})^{-1}  \right\|_{\op}\right) \nonumber \\ 
& \quad \enspace \,\, + \left\|\left[g_i(x^{(j)};\vartheta^{(1)}) \, \cdots \, g_i(x^{(j)};\vartheta^{(r)})\right] \right\|_2 \nonumber \\
& \quad \quad \quad \cdot \left\| (\cP^{(i)})^{-1} - (G^{(i)})^{-1}  \right\|_{\op} \nonumber \\
& \overset{\eqmakebox[A][c]{\footnotesize (v)}}{\leq} \frac{L_2 \sqrt{r}}{l! q^{\eta}} \left\|(G^{(i)})^{-1} \right\|_{\op} \nonumber \\
& \quad \enspace \,\, + \sqrt{r} \left( B + \frac{L_2}{l! q^{\eta}} \right) \! \left\| (\cP^{(i)})^{-1} - (G^{(i)})^{-1}  \right\|_{\op} \nonumber \\
& \overset{\eqmakebox[A][c]{\footnotesize (vi)}}{\leq} \frac{L_2 \sqrt{r}}{l! q^{\eta}} \left\|(G^{(i)})^{-1} \right\|_{\op} \nonumber \\
& \quad \enspace \,\, + \sqrt{r} \left( B + \frac{L_2}{l! q^{\eta}} \right) \!\! \left(\!\frac{ \left\|(G^{(i)})^{-1}\right\|_{\op}^2 \left\|G^{(i)} - \cP^{(i)}\right\|_{\Fro} }{1  -  \left\|(G^{(i)})^{-1}\right\|_{\op} \left\|G^{(i)} - \cP^{(i)}\right\|_{\Fro}}\!\right) \nonumber \\
& \overset{\eqmakebox[A][c]{\footnotesize (vii)}}{\leq} \frac{L_2 \sqrt{r}}{l! q^{\eta}} \left\|(G^{(i)})^{-1} \right\|_{\op} \nonumber \\
& \quad \enspace \,\, + \frac{L_2 r^{3/2}}{l! q^{\eta}} \left( B + \frac{L_2}{l! q^{\eta}} \right) \! \left(\frac{ \left\|(G^{(i)})^{-1}\right\|_{\op}^2 }{1  - \left( \frac{\left\|(G^{(i)})^{-1}\right\|_{\op} r L_2 }{l! q^{\eta}}\right) }\right) ,
\label{Eq: step 2}
\end{align}
where (i) uses \eqref{Eq: Alternate form of true weights} and \eqref{Eq: Non-cumulative weights}, (ii) and (iv) follow from the triangle inequality, (iii) follows from the definition of operator norm, (v) follows from \eqref{Eq: uniform step} and the boundedness assumption in \eqref{Eq: Bounded gradient}, (vii) follows from \eqref{Eq: uniform step}, and (vi) holds because for any two non-singular matrices $X,Y$ with common dimensions, we have
\begin{align*} 
& \left\|X^{-1} - Y^{-1}\right\|_{\op} \\
& = \|X^{-1} (Y - X) Y^{-1}\|_{\op} \\
& \leq \left\|X^{-1}\right\|_{\op} \left\|Y^{-1}\right\|_{\op} \left\|X - Y\right\|_{\op} \\
& \leq  \left\|X^{-1}\right\|_{\op} \left( \left\|Y^{-1} - X^{-1}\right\|_{\op} +  \left\|X^{-1}\right\|_{\op}\right) \left\|X - Y\right\|_{\op} \\
& = \left\|X^{-1}\right\|_{\op}^2 \left\|X - Y\right\|_{\op} \\
& \quad \, + \left\|X^{-1}\right\|_{\op} \left\|Y^{-1} - X^{-1}\right\|_{\op}  \left\|X - Y\right\|_{\op} 
\end{align*}
using the sub-multiplicativity of operator norms, which implies that
\begin{align*}
\left\|X^{-1} - Y^{-1}\right\|_{\op} & \leq \frac{ \left\|X^{-1}\right\|_{\op}^2 \left\|X - Y\right\|_{\op} }{1  -  \left\|X^{-1}\right\|_{\op} \left\|X - Y\right\|_{\op}} \\
& \leq \frac{ \left\|X^{-1}\right\|_{\op}^2 \left\|X - Y\right\|_{\Fro} }{1  -  \left\|X^{-1}\right\|_{\op} \left\|X - Y\right\|_{\Fro}} 
\end{align*}
as long as $\|X^{-1}\|_{\op} \|X - Y\|_{\Fro} < 1$ (also see \cite[Section 5.8]{HornJohnson2013}). Note that \eqref{Eq: step 2} only holds when $\|(G^{(i)})^{-1}\|_{\op} r L_2 < l! q^{\eta}$.

Next, we proceed to upper bounding \eqref{Eq: step 1}. Starting from \eqref{Eq: step 1}, observe that
\begin{align}
& \left| \frac{\partial F}{\partial \theta_i}(\theta) - \widehat{\nabla F}_i(\theta) \right| \nonumber \\
& \overset{\eqmakebox[B][c]{\footnotesize }}{\leq} \max_{j \in [n]\backslash [r]} \left| g_i(x^{(j)};\theta) - \sum_{k = 1}^{r} g_i(x^{(k)};\theta) w_{j,k}^{(i)} \right| \nonumber \\
& \overset{\eqmakebox[B][c]{\footnotesize (i)}}{\leq} \frac{L_2}{l! q^{\eta}} \left( 1 + \max_{j \in [n]\backslash [r]} \sum_{k = 1}^{r}{\left| w_{j,k}^{(i)}\right|} \right) \nonumber \\
& \quad \enspace \enspace + \max_{j \in [n]\backslash [r]} \left| P_{q,l}(x^{(j)};\theta) - \sum_{k = 1}^{r} P_{q,l}(x^{(k)};\theta) w_{j,k}^{(i)} \right| \nonumber \\
& \overset{\eqmakebox[B][c]{\footnotesize (ii)}}{\leq} \frac{L_2}{l! q^{\eta}} \left( 1 + \max_{j \in [n]\backslash [r]} \sum_{k = 1}^{r}{\left| w_{j,k}^{(i)}\right|} \right) \nonumber \\
& \quad \enspace \enspace + \max_{j \in [n]\backslash [r]} \left| \sum_{k = 1}^{r}{\left(a_{j,k} - w_{j,k}^{(i)} \right) P_{q,l}(x^{(k)};\theta)} \right| \nonumber \\
& \overset{\eqmakebox[B][c]{\footnotesize (iii)}}{\leq} \frac{L_2}{l! q^{\eta}} \left( 1 + \max_{j \in [n]\backslash [r]} \sum_{k = 1}^{r}{\left| w_{j,k}^{(i)}\right|} \right) \nonumber \\
& \quad \enspace \enspace + \max_{j \in [n]\backslash [r]} \left( \sum_{k = 1}^{r}{\left(a_{j,k} - w_{j,k}^{(i)} \right)^{\!2}} \right)^{\! \frac{1}{2}}\!\! \left(\sum_{k = 1}^{r}{P_{q,l}(x^{(k)};\theta)^2}\right)^{\! \frac{1}{2}} \nonumber \\
& \overset{\eqmakebox[B][c]{\footnotesize (iv)}}{\leq} \frac{L_2}{l! q^{\eta}} \left( 1 + \max_{j \in [n]\backslash [r]} \sum_{k = 1}^{r}{\left| w_{j,k}^{(i)}\right|} \right) + \left(\sum_{k = 1}^{r}{P_{q,l}(x^{(k)};\theta)^2}\right)^{\! \frac{1}{2}} \nonumber \\
& \quad \enspace \enspace \cdot \left(\rule{0cm}{0.7cm}\right. \! \frac{L_2 \sqrt{r}}{l! q^{\eta}} \left\|(G^{(i)})^{-1} \right\|_{\op} + \frac{L_2 r^{3/2}}{l! q^{\eta}} \left( B + \frac{L_2}{l! q^{\eta}} \right) \nonumber \\
& \qquad \qquad \qquad \qquad \cdot \left(\frac{ \left\|(G^{(i)})^{-1}\right\|_{\op}^2 }{1  -  \left(\left\|(G^{(i)})^{-1}\right\|_{\op} r L_2 \right)\! / \!\left(l! q^{\eta}\right) }\right) \! \left.\rule{0cm}{0.7cm}\right) \nonumber \\
& \overset{\eqmakebox[B][c]{\footnotesize (v)}}{\leq} \frac{L_2}{l! q^{\eta}} \left( 1 + \max_{j \in [n]\backslash [r]} \sum_{k = 1}^{r}{\left| w_{j,k}^{(i)}\right|} \right) + \left(B + \frac{L_2}{l! q^{\eta}} \right) \! \sqrt{r} \nonumber \\
& \quad \enspace \enspace \cdot \left(\rule{0cm}{0.7cm}\right. \! \frac{L_2 \sqrt{r}}{l! q^{\eta}} \left\|(G^{(i)})^{-1} \right\|_{\op} + \frac{L_2 r^{3/2}}{l! q^{\eta}} \left( B + \frac{L_2}{l! q^{\eta}} \right) \nonumber \\
& \qquad \qquad \qquad \qquad \cdot \left(\frac{ \left\|(G^{(i)})^{-1}\right\|_{\op}^2 }{1  -  \left(\left\|(G^{(i)})^{-1}\right\|_{\op} r L_2 \right)\! / \!\left(l! q^{\eta}\right) }\right) \! \left.\rule{0cm}{0.7cm}\right) \nonumber \\
& \overset{\eqmakebox[B][c]{\footnotesize (vi)}}{\leq} \frac{L_2}{l! q^{\eta}} \left( 1 +  \left\|(G^{(i)})^{-1} \right\|_{\op} B r \right) + \left(B + \frac{L_2}{l! q^{\eta}} \right) \nonumber \\
& \quad \enspace \enspace \cdot \left(\rule{0cm}{0.7cm}\right. \! \frac{L_2 r}{l! q^{\eta}} \left\|(G^{(i)})^{-1} \right\|_{\op} + \frac{L_2 r^{2}}{l! q^{\eta}} \left( B + \frac{L_2}{l! q^{\eta}} \right) \nonumber \\
& \qquad \qquad \qquad \qquad \cdot \left(\frac{ \left\|(G^{(i)})^{-1}\right\|_{\op}^2 }{1  -  \left(\left\|(G^{(i)})^{-1}\right\|_{\op} r L_2 \right)\! / \!\left(l! q^{\eta}\right) }\right) \! \left.\rule{0cm}{0.7cm}\right) \nonumber \\
& \overset{\eqmakebox[B][c]{\footnotesize (vii)}}{\leq} \frac{L_2}{l! q^{\eta}} \left( 1 + B r^{\alpha + 1} \right) + \left(B + \frac{L_2}{l! q^{\eta}} \right) \! \Bigg( \frac{L_2 r^{\alpha + 1}}{l! q^{\eta}}  \nonumber \\
&  \quad \enspace \enspace + \frac{L_2 r^{2 \alpha + 2}}{l! q^{\eta}} \left( B + \frac{L_2}{l! q^{\eta}} \right) \! \left(\frac{ 1 }{1  -  \left(L_2 r^{\alpha + 1} \right)\! / \!\left(l! q^{\eta}\right) }\right) \Bigg) \nonumber \\
& \overset{\eqmakebox[B][c]{\footnotesize (viii)}}{\leq} \frac{L_2 r^{2 \alpha + 2}}{l! q^{\eta}} \left( 2 + B + \frac{L_2}{l! q^{\eta}} \right)^{\! 2} \nonumber \\
& \quad \enspace \enspace \cdot \left(\frac{ 1 }{1  -  \left(L_2 r^{\alpha + 1} \right)\! / \!\left(l! q^{\eta}\right) } + 1\right) ,
\label{Eq: step 3}
\end{align}
where (i) follows from \eqref{Eq: uniform step} and the triangle inequality, (ii) follows from \eqref{Eq: Perfect low rank decomposition}, (iii) follows from the Cauchy-Schwarz inequality, (iv) utilizes \eqref{Eq: step 2}, (v) follows from the triangle inequality, \eqref{Eq: uniform step}, and the boundedness assumption in \eqref{Eq: Bounded gradient}, (vi) holds because
\begin{align*}
& \max_{j \in [n]\backslash [r]} \sum_{k = 1}^{r}{\left| w_{j,k}^{(i)}\right|} \\
& \qquad \leq \max_{j \in [n]\backslash [r]} \sqrt{r \sum_{k = 1}^{r}{\left(w_{j,k}^{(i)}\right)^{\! 2}}} \\
& \qquad \leq \max_{j \in [n]\backslash [r]} \left\|(G^{(i)})^{-1} \right\|_{\op} \sqrt{r \sum_{k = 1}^{r}{g_i(x^{(j)};\vartheta^{(k)})^{2}}} \\
& \qquad \leq \left\|(G^{(i)})^{-1} \right\|_{\op} B r 
\end{align*}
using the Cauchy-Schwarz inequality (or the equivalence of $\ell^1$ and $\ell^2$-norms), \eqref{Eq: Non-cumulative weights}, the definition of operator norm, and \eqref{Eq: Bounded gradient}, (vii) follows from the conditioning assumption in \eqref{Eq: Min singular value bound}, and (viii) follows from basic algebraic manipulations. As before, note that \eqref{Eq: step 3} only holds when $L_2 r^{\alpha + 1} < l! q^{\eta}$. 

We are finally in a position to bound $\|\widehat{\nabla F} - \nabla_{\theta} F\|_2^2$. Indeed, for any $\delta \in \big(0,\frac{1}{2}\big]$, suppose that 
\begin{equation}
\label{Eq: delta 1}
\frac{L_2 r^{2 \alpha + 2}}{l! q^{\eta}} \leq \delta \, .
\end{equation}
(Notice that $\delta \in \big(0,\frac{1}{2}\big]$ implies the desired condition $L_2 r^{\alpha + 1} < l! q^{\eta}$.) Then, we have from \eqref{Eq: step 3} that 
\begin{align*}
\left| \frac{\partial F}{\partial \theta_i}(\theta) - \widehat{\nabla F}_i(\theta) \right| & \leq \delta ( 2 + B + \delta )^{2} \! \left(\frac{ 1 }{1  - \delta } + 1\right) \\
& \leq 3 (B + 3)^2 \delta \, , 
\end{align*}
which implies that for any $\theta \in \R^p$,
\begin{equation}
\label{Eq: Control on Approximation Error of true Gradient}
\left\|\widehat{\nabla F}(\theta) - \nabla_{\theta} F(\theta)\right\|_2^2 \leq 9 (B + 3)^4 \delta^2 p \, . 
\end{equation}
Therefore, applying \lemref{Lemma: Inexact GD Bound}, we obtain
\begin{align}
& F(\theta^{(S)}) - F_* \nonumber \\
& \leq \left(1 - \frac{1}{\kappa}\right)^{\! S} \!\left(F(\theta^{(0)}) - F_*\right) \nonumber \\
& \quad \, + \frac{1}{2 L_1} \sum_{\gamma = 1}^{S}{\left(1 - \frac{1}{\kappa}\right)^{\! S-\gamma} \left\|\widehat{\nabla F}(\theta^{(\gamma-1)}) - \nabla_{\theta} F(\theta^{(\gamma-1)})\right\|_2^2} \nonumber \\
& \leq \left(1 - \frac{1}{\kappa}\right)^{\! S} \!\left(F(\theta^{(0)}) - F_*\right) \nonumber \\
& \quad \, + \frac{9 (B + 3)^4 \delta^2 p}{2 L_1} \underbrace{\sum_{\gamma = 1}^{S}{\left(1 - \frac{1}{\kappa}\right)^{\! S-\gamma}} }_{\leq \, \kappa \, = \, L_1/\mu} \nonumber \\
& \leq \left(1 - \frac{1}{\kappa}\right)^{\! S} \!\left(F(\theta^{(0)}) - F_*\right) + \frac{9 (B + 3)^4 \delta^2 p}{2 \mu} \, ,
\label{Eq: step 4}
\end{align}
where $\{\theta^{(\gamma)} \in \R^p : \gamma \in \{0,\dots,S\}\}$ are the inexact GD updates in \eqref{Eq: Inexact GD update} in \algoref{Alg: FedLRGD}, $S \in \N$ is the number of iterations for which inexact GD is run in the last epoch of \algoref{Alg: FedLRGD}, and the third inequality follows from the usual geometric series formula. We remark that to apply \lemref{Lemma: Inexact GD Bound} here, we need to verify that the empirical risk $F : \R^p \rightarrow \R$ is continuously differentiable and $\mu$-strongly convex, and its gradient $\nabla_{\theta} F$ is $L_1$-Lipschitz continuous. These properties of $F$ are in fact inherited from the corresponding strong convexity and smoothness in parameter assumptions imposed on the loss function in \secref{Assumptions on Loss Function}; we refer readers to \cite[Section 4.1]{JadbabaieMakurShah2021} for the formal arguments.

We can easily obtain the iteration complexity of the inexact GD method in \algoref{Alg: FedLRGD} from \eqref{Eq: step 4}. Letting
\begin{equation}
\label{Eq: delta 2}
\delta = \left(1 - \frac{1}{\kappa} \right)^{\! S/2} ,
\end{equation} 
we see that 
$$ F(\theta^{(S)}) - F_* \leq \left(1 - \frac{1}{\kappa}\right)^{\! S} \!\left(F(\theta^{(0)}) - F_* + \frac{9 (B + 3)^4 p}{2 \mu}\right) . $$
Hence, for \algoref{Alg: FedLRGD} to produce an $\epsilon$-approximate solution $\theta^* = \theta^{(S)}$ in the sense of \eqref{Eq: Approx solution 1}, it suffices to ensure that
$$ \left(1 - \frac{1}{\kappa}\right)^{\! S} \!\left(F(\theta^{(0)}) - F_* + \frac{9 (B + 3)^4 p}{2 \mu}\right) \leq \epsilon \, , $$
or equivalently, 
$$ S \geq \left(\log\!\left(\frac{\kappa}{\kappa-1}\right)\right)^{\! -1} \log\!\left(\frac{F(\theta^{(0)}) - F_* + \frac{9 (B + 3)^4 p}{2 \mu}}{\epsilon} \right) . $$
Thus, setting $S$ as in \eqref{Eq: Iteration count} yields an $\epsilon$-approximate solution $\theta^* = \theta^{(S)}$. Furthermore, it is straightforward to verify that $\delta \leq \frac{1}{2}$. Indeed, from \eqref{Eq: delta 2} and \eqref{Eq: Iteration count}, we have
\begin{equation}
\label{Eq: Bound on delta param}
\delta^2 \leq \frac{\epsilon}{F(\theta^{(0)}) - F_* + \frac{9 (B + 3)^4 p}{2 \mu}} \leq \frac{\epsilon}{\left(\frac{9 (B + 3)^4 p}{2 \mu}\right)} \leq \frac{1}{4} \, , 
\end{equation}
where the last inequality holds because we have assumed in the theorem statement that $\epsilon \leq \frac{9 (B+3)^4 p}{8 \mu}$.

Having computed the number of iterations \eqref{Eq: Iteration count}, we close this proof by calculating the federated oracle complexity of the $\lrgd$ algorithm. To execute this final step, we must first ensure that \eqref{Eq: delta 1} holds by choosing an appropriate value of $r \in \N$. Recall that $q$ was chosen so that $r = D |\S_{d,q}|$. Using \eqref{Eq: rank bound}, \eqref{Eq: Bound on nPr}, and the fact that $l \leq \eta$, we have
$$ r = D |\S_{d,q}| \leq d! \binom{l + d}{d} (q+1)^d \leq e \sqrt{d} (\eta + d)^d (q+1)^d \, , $$
which in turn produces the following lower bound on $q$:
$$ q \geq \frac{r^{1/d}}{e^{1/d} d^{1/(2d)} (\eta + d)} - 1 \, . $$
Since we require $q > 0$, or to be more precise, $q \geq1$ (see \secref{Latent Variable Model Approximation}), we can check to see that 
\begin{align*}
q \geq 1 & \quad \Leftarrow \quad \frac{r^{1/d}}{e^{1/d} d^{1/(2d)} (\eta + d)} \geq 2 \\
& \quad \Leftrightarrow \quad r \geq e \sqrt{d} \, 2^d (\eta + d)^d 
\end{align*}
by assumption \eqref{Eq: Rank lower bound condition} in the theorem statement. Furthermore, similar to the argument in the proof \thmref{Thm: Low Rank Approximation} in \secref{Proof of Thm Low Rank Approximation}, \eqref{Eq: Rank lower bound condition} also yields:
\begin{align*}
& r \geq e \sqrt{d} \, 2^d (\eta + d)^d \\
& \Leftrightarrow \quad q \geq \frac{r^{1/d}}{e^{1/d} d^{1/(2d)} (\eta + d)} - 1 \geq \frac{r^{1/d}}{2 e^{1/d} d^{1/(2d)} (\eta + d)} \, . 
\end{align*}
Substituting this lower bound on $q$ into the condition \eqref{Eq: delta 1}, it suffices to ensure that 
$$ \frac{L_2 2^{\eta} e^{\eta/d} d^{\eta/(2d)} (\eta + d)^{\eta}}{l! \, r^{(\eta/d) - 2 \alpha - 2}} \leq \delta \, . $$
Then, using Stirling's approximation \cite[Chapter II, Section 9, Equation (9.15)]{Feller1968} and the fact that $\eta - 1 \leq l \leq \eta$, the left hand side of the above inequality can be upper bounded by
\begin{align*}
& \frac{L_2 2^{\eta} e^{\eta/d} d^{\eta/(2d)} (\eta + d)^{\eta}}{l! \, r^{(\eta/d) - 2 \alpha - 2}} \\
& \qquad \qquad \leq \frac{L_2 \eta 2^{\eta} e^{\eta (d + 1)/d} d^{\eta/(2d)} (\eta + d)^{\eta}}{(\eta - 1)^{\eta} r^{(\eta/d) - 2 \alpha - 2}} \, . 
\end{align*}
This means that ensuring 
\begin{align*}
& \frac{L_2 \eta 2^{\eta} e^{\eta (d + 1)/d} d^{\eta/(2d)} (\eta + d)^{\eta}}{(\eta - 1)^{\eta} r^{(\eta/d) - 2 \alpha - 2}} \leq \delta \\
& \Leftrightarrow \quad r \geq \left(\frac{L_2^d \eta^{d} e^{\eta (d + 1)} d^{\eta / 2} (2\eta + 2d)^{\eta d}}{(\eta - 1)^{\eta d}}\right)^{\! 1/(\eta - (2 \alpha + 2)d)} \\
& \qquad  \quad \quad \enspace \cdot \left(\frac{1}{\delta}\right)^{\! d/(\eta - (2 \alpha + 2)d)}
\end{align*}
implies that \eqref{Eq: delta 1} is satisfied, where we utilize the assumption in the theorem statement that $\eta > (2 \alpha + 2)d$. Now, using \eqref{Eq: delta 2} and \eqref{Eq: Iteration count}, notice that
\begin{align*}
\delta & = \left(1 - \frac{1}{\kappa} \right)^{\! S/2} \\
& \geq \left(\sqrt{\frac{\kappa - 1}{\kappa}}\right)^{\! 1 + \frac{\log\left(\left(F(\theta^{(0)}) - F_* + \frac{9 (B + 3)^4 p}{2 \mu}\right) \epsilon^{-1} \right)}{\log\left(\frac{\kappa}{\kappa-1}\right)}} \\
& = \sqrt{\frac{(\kappa - 1)\epsilon}{\kappa \! \left(F(\theta^{(0)}) - F_* + \frac{9 (B + 3)^4 p}{2 \mu}\right)}} \, . 
\end{align*}
Hence, to satisfy \eqref{Eq: delta 1}, it suffices to ensure that
\begin{align*}
r & \geq \left(\frac{L_2^d \eta^{d} e^{\eta (d + 1)} d^{\eta / 2} (2\eta + 2d)^{\eta d}}{(\eta - 1)^{\eta d}}\right)^{\! 1/(\eta - (2 \alpha + 2)d)} \\
& \quad \enspace \cdot \left(\frac{\kappa \! \left(F(\theta^{(0)}) - F_* + \frac{9 (B + 3)^4 p}{2 \mu}\right)}{(\kappa - 1)\epsilon}\right)^{\! d/(2\eta - (4 \alpha + 4)d)} , 
\end{align*}
which is already assumed in \eqref{Eq: Rank lower bound condition}. Finally, for $r$ and $S$ given by \eqref{Eq: Rank lower bound condition} and \eqref{Eq: Iteration count}, respectively, observe using Definition \ref{Def: Federated oracle complexity} that the federated oracle complexity of \algoref{Alg: FedLRGD} is 
\begin{align*}
\Gamma(\lrgd) & = \underbrace{r^2}_{\text{epoch } 1} \! + \,\, \underbrace{rs + \phi m}_{\text{epoch } 2} \,\, + \underbrace{(r-1) \phi m}_{\text{epochs } 3 \text{ to } r+1} + \!\! \underbrace{r S}_{\text{epoch } r+2} \\
& = r^2 + rs + rS + \phi m r \, .
\end{align*}
This completes the proof.
\end{proof}

\section{Conclusion}
\label{Conclusion}

In closing, we summarize the main technical contributions of this work. Firstly, as a counterpart to traditional oracle complexity of iterative optimization methods, we provided a formalism for measuring running time of federated optimization algorithms by presenting the notion of federated oracle complexity in \secref{Federated Oracle Complexity}. This notion afforded us the ability to theoretically compare the performance of different algorithms. Secondly, in order to exploit the smoothness of loss functions in training data\textemdash a generally unexplored direction in the optimization for machine learning literature \cite{JadbabaieMakurShah2021}, we proposed the $\lrgd$ algorithm for federated learning in \secref{Federated Low Rank Gradient Descent}. This algorithm crucially used the approximate low rankness induced by the smoothness of gradients of loss functions in data. Thirdly, we analyzed the federated oracle complexity of $\lrgd$ in \thmref{Thm: Federated Oracle Complexity of FedLRGD} under various assumptions, including strong convexity, smoothness in parameter, smoothness in data, non-singularity, etc. In particular, we proved that $\Gamma(\lrgd)$ scales like $\phi m (p/\epsilon)^{\Theta(d/\eta)}$ (neglecting typically sub-dominant factors). Moreover, to compare $\lrgd$ to the standard $\FedAve$ method in the literature \cite{McMahanetal2017}, we also evaluated the federated oracle complexity of $\FedAve$ in \thmref{Thm: Federated Oracle Complexity of FedAve} and saw that $\Gamma(\FedAve)$ scales like $\phi m (p/\epsilon)^{3/4}$. Then, we demonstrated that when data dimension is small and the loss function is sufficiently smooth in the data, $\Gamma(\lrgd)$ is indeed smaller that $\Gamma(\FedAve)$. Finally, in a complementary direction, we built upon aspects of our analysis of $\lrgd$ to generalize a well-known result from high-dimensional statistics regarding low rank approximations of smooth latent variable models in \thmref{Thm: Low Rank Approximation}. 

We also mention a few future research directions to conclude. It is natural to try and weaken some of the assumptions made to derive the technical results in this work, cf. \secref{Assumptions on Loss Function}. For example, the strong convexity assumption could potentially be weakened to convex or even non-convex loss functions. As another example, the non-singularity and conditioning assumptions could perhaps be replaced by simpler assumptions on the gradient of the loss function, e.g., by developing variants of positive definite kernel ideas and assuming these on the gradient perceived as a bivariate kernel. The non-singularity and conditioning assumptions could then be deduced from such assumptions. In a different vein, we also leave the thorough empirical evaluation of practical versions of $\lrgd$ and their comparisons to known federated learning methods, such as $\FedAve$, in various senses (including federated oracle complexity and wall-clock running time) as future work.

\appendices

\section{Smoothness Assumptions for Logistic Regression with Soft Labels}
\label{App: Logistic Example}

In this appendix, we delineate an example of a commonly used loss function in logistic regression satisfying the important smoothness and strong convexity assumptions in \secref{Assumptions on Loss Function}. Let the parameter space be $[-1,1]^{d-1}$ so that any parameter vector $\theta \in [-1,1]^{d-1}$, and each labeled training data sample $(x,y) \in [0,1]^{d-1} \times [0,1]$ be such that the label $y$ is a ``soft'' belief. The compactness of the parameter space is reasonable since training usually takes place in a compact subset of the Euclidean space $\R^{d-1}$, and the use of soft labels instead of canonical binary classification labels has also been studied in the literature, cf. \cite{NguyenValizadeganHauskrecht2011}. Consider the $\ell^2$-regularized \emph{cross-entropy} loss function:
\begin{equation}
\label{Eq: Reg Cross Entropy}
\begin{aligned}
f_{\mathsf{ent}}(x,y;\theta) & \triangleq y \log\!\left(1 + e^{- \theta^{\T} x}\right) \\
& \quad \, + (1-y) \log\!\left(1 + e^{\theta^{\T} x}\right) + \frac{\mu}{2} \left\|\theta\right\|_2^2 \, ,
\end{aligned}
\end{equation}
where $\mu > 0$ is a hyper-parameter that determines the level of regularization, and we assume for convenience that the bias parameter in logistic regression is $0$. Given a training dataset $(x^{(1)},y_1),\dots,(x^{(n)},y_n) \in [0,1]^{d-1} \times  [0,1]$, the associated empirical risk is
\begin{equation}
\begin{aligned}
F(\theta) & = \frac{1}{n} \sum_{i = 1}^{n}{f_{\mathsf{ent}}(x^{(i)},y_i;\theta)} \\
& = \frac{1}{n} \sum_{i = 1}^{n} \bigg( y_i \log\!\left(1 + e^{- \theta^{\T} x^{(i)}}\right) \\
& \qquad \qquad \enspace + (1-y_i) \log\!\left(1 + e^{\theta^{\T} x^{(i)}}\right)\!\bigg) \\
& \quad \, + \frac{\mu}{2} \left\|\theta\right\|_2^2 \, ,
\end{aligned}
\end{equation}
which corresponds to the well-known setting of $\ell^2$-regularized \emph{logistic regression} \cite[Section 4.4]{HastieTibshiraniFriedman2009}. We next verify that $f_{\mathsf{ent}}$ in \eqref{Eq: Reg Cross Entropy} satisfies the smoothness and strong convexity assumptions in \secref{Assumptions on Loss Function}:
\begin{enumerate}
\item Observe that $\nabla_{\theta} f_{\mathsf{ent}}(x,y;\theta) = \big(\frac{(1-y) }{1 + e^{-\theta^{\T} x}} - \frac{ y }{1 + e^{\theta^{\T} x}} \big) x + \mu \theta$, which means that for all $x,y$ and all $\theta^{(1)},\theta^{(2)} \in \R^{d-1}$, 
\begin{align}
& \left\|\nabla_{\theta} f_{\mathsf{ent}}(x,y;\theta^{(1)}) - \nabla_{\theta} f_{\mathsf{ent}}(x,y;\theta^{(2)})\right\|_2 \nonumber \\
& \quad \leq \left\| x \right\|_2 \left|\frac{(1-y) }{1 + e^{-{\theta^{(1)}}^{\T} x}} - \frac{ y }{1 + e^{{\theta^{(1)}}^{\T} x}} \right. \nonumber \\
& \quad \qquad \quad \enspace \left. - \frac{(1-y) }{1 + e^{-{\theta^{(2)}}^{\T} x}} + \frac{ y }{1 + e^{{\theta^{(2)}}^{\T} x}} \right| \nonumber \\
& \quad \quad \, + \mu \left\|\theta^{(1)} - \theta^{(2)} \right\|_2 \nonumber \\
& \quad \leq \left\| x \right\|_2 \bigg(\left|\frac{1 }{1 + e^{-{\theta^{(1)}}^{\T} x}} - \frac{1}{1 + e^{-{\theta^{(2)}}^{\T} x}} \right| \nonumber \\
& \quad \qquad \quad \quad \enspace + \left|\frac{1 }{1 + e^{{\theta^{(2)}}^{\T} x}} - \frac{1 }{1 + e^{{\theta^{(1)}}^{\T} x}}\right| \bigg) \nonumber \\
& \quad \quad \, + \mu \left\|\theta^{(1)} - \theta^{(2)} \right\|_2 \nonumber \\
& \quad \leq \frac{\left\| x \right\|_2}{2} \left|(\theta^{(1)} - \theta^{(2)})^{\T} x \right| + \mu \left\|\theta^{(1)} - \theta^{(2)} \right\|_2 \nonumber \\
& \quad \leq \frac{\left\| x \right\|_2^2}{2} \left\|\theta^{(1)} - \theta^{(2)}\right\|_2 + \mu \left\|\theta^{(1)} - \theta^{(2)} \right\|_2 \nonumber \\
& \quad \leq \left(\frac{d-1}{2} + \mu\right) \! \left\|\theta^{(1)} - \theta^{(2)}\right\|_2 \, ,
\end{align}
where the first inequality follows from the triangle inequality, the second inequality holds because $\max\{|y|,|1-y|\} \leq 1$, the third inequality holds due to the Lipschitz continuity of the sigmoid function and the fact that $\max_{t \in \R}{\big|\frac{\diff}{\diff t}(1 + e^{-t})^{-1}\big|} \allowbreak = \frac{1}{4}$, the fourth inequality follows from the Cauchy-Schwarz inequality, and the final inequality follows from the bound $\| x \|_2^2 \leq d-1$. Hence, the Lipschitz constant $L_1 = \frac{d-1}{2} + \mu$.
\item It is well-known that the (unregularized) cross-entropy loss function $\theta \mapsto y \log\big(1 + e^{- \theta^{\T} x}\big) + (1-y) \log\big(1 + e^{\theta^{\T} x}\big)$ is convex for all $x,y$ \cite[Section 4.4]{HastieTibshiraniFriedman2009}. This can be directly checked by computing derivatives. This implies that $\theta \mapsto f_{\mathsf{ent}}(x,y;\theta)$ is $\mu$-strongly convex for all $x,y$ \cite[Section 3.4]{Bubeck2015}.
\item Observe that for any fixed $\vartheta \in [-1,1]^{d-1}$ and any $i \in [d-1]$, the $i$th partial derivative $g_i(x,y;\vartheta) = \big( \frac{(1-y) }{1 + e^{-\vartheta^{\T} x}} - \frac{ y }{1 + e^{\vartheta^{\T} x}} \big) x_i + \mu \vartheta_i$. Let us arbitrarily choose $\eta = 2$ for the purposes of illustration. Then, for any $s = (0,\dots,0,1,0,\dots,0) \in \Z_+^d$ with a value of $1$ at the $j$th position and $|s| = \eta - 1 = 1$, we have for all $x \in [0,1]^{d-1}$ and $y \in [0,1]$, 
\begin{equation}
\begin{aligned}
& \nabla_{x,y}^s g_i(x,y;\vartheta) \\
& = 
\begin{cases}
\frac{\partial^2 f_{\mathsf{ent}}}{\partial \theta_i \partial x_i} (x,y;\vartheta)  \, , & j = i  \\
\frac{\partial^2 f_{\mathsf{ent}}}{\partial \theta_i \partial x_j} (x,y;\vartheta) \, , & j \in [d-1], \, j \neq i  \\
\frac{\partial^2 f_{\mathsf{ent}}}{\partial \theta_i \partial y} (x,y;\vartheta) \, , & j = d 
\end{cases}
\\
& = 
\begin{cases}
\frac{\vartheta_i x_i e^{-\vartheta^{\T} x}}{\left(1 + e^{-\vartheta^{\T} x}\right)^2} + \frac{1}{1 + e^{-\vartheta^{\T} x}} - y  \, , & j = i  \\
\frac{\vartheta_j x_i e^{-\vartheta^{\T} x}}{\left(1 + e^{-\vartheta^{\T} x}\right)^2} \, , & j \in [d-1], \, j \neq i  \\
- x_i \, , & j = d 
\end{cases}
.
\end{aligned}
\end{equation}
Hence, for all $x^{(1)},x^{(2)} \in [0,1]^{d-1}$ and $y_1,y_2 \in [0,1]$, we obtain
\begin{align}
& \left|\nabla_{x,y}^s g_i(x^{(1)},y_1;\vartheta) - \nabla_{x,y}^s g_i(x^{(2)},y_2;\vartheta)\right| \leq \nonumber \\
& \begin{cases}
\frac{3}{4} \left\|x^{(1)} - x^{(2)}\right\|_1 + \left|y_1 - y_2 \right| , & j = i \\
\frac{1}{2} \left\|x^{(1)} - x^{(2)}\right\|_1 , & j \in [d-1], \, j \neq i \\
 \left|x_i^{(1)} - x_i^{(2)} \right| , & j = d
\end{cases}
,
\end{align}
where the third case follows from direct calculation, the first case follows from a similar argument to the second case, and the second case holds because
\begin{align}
& \left|\nabla_{x,y}^s g_i(x^{(1)},y_1;\vartheta) - \nabla_{x,y}^s g_i(x^{(2)},y_2;\vartheta)\right| \nonumber \\
& \quad = \left|  \frac{\vartheta_j x_i^{(1)} e^{-\vartheta^{\T} x^{(1)}}}{\left(1 + e^{-\vartheta^{\T} x^{(1)}}\right)^2} -  \frac{\vartheta_j x_i^{(2)} e^{-\vartheta^{\T} x^{(2)}}}{\left(1 + e^{-\vartheta^{\T} x^{(2)}}\right)^2}\right| \nonumber \\
& \quad \leq \left|  \frac{ x_i^{(1)} e^{-\vartheta^{\T} x^{(1)}}}{\left(1 + e^{-\vartheta^{\T} x^{(1)}}\right)^2} -  \frac{x_i^{(2)} e^{-\vartheta^{\T} x^{(2)}}}{\left(1 + e^{-\vartheta^{\T} x^{(2)}}\right)^2}\right| \nonumber \\
& \quad \leq \left|  \frac{e^{-\vartheta^{\T} x^{(1)}}}{\left(1 + e^{-\vartheta^{\T} x^{(1)}}\right)^2} -  \frac{e^{-\vartheta^{\T} x^{(2)}}}{\left(1 + e^{-\vartheta^{\T} x^{(2)}}\right)^2}\right| \nonumber \\
& \quad \quad \, + \frac{e^{-\vartheta^{\T} x^{(2)}}}{\left(1 + e^{-\vartheta^{\T} x^{(2)}}\right)^2} \left| x_i^{(1)} - x_i^{(2)}\right| \nonumber \\
& \quad \leq \left|  \frac{e^{-\vartheta^{\T} x^{(1)}}}{\left(1 + e^{-\vartheta^{\T} x^{(1)}}\right)^2} -  \frac{e^{-\vartheta^{\T} x^{(2)}}}{\left(1 + e^{-\vartheta^{\T} x^{(2)}}\right)^2}\right| \nonumber \\
& \quad \quad \, + \frac{1}{4} \left| x_i^{(1)} - x_i^{(2)}\right| \nonumber \\
& \quad \leq \frac{1}{4} \left| \vartheta^{\T} \!\left(x^{(1)} - x^{(2)} \right) \right| + \frac{1}{4} \left| x_i^{(1)} - x_i^{(2)}\right| \nonumber \\
& \quad \leq \frac{1}{4} \left\|x^{(1)} - x^{(2)}\right\|_1 + \frac{1}{4} \left| x_i^{(1)} - x_i^{(2)}\right| \nonumber \\
& \quad \leq \frac{1}{2} \left\|x^{(1)} - x^{(2)}\right\|_1 ,
\end{align}
where the second line follows from the fact that $|\vartheta_j| \leq 1$, the third line follows from the triangle inequality and the fact that $x_i^{(1)} \in [0,1]$, the fourth line holds because $e^{-t}(1+e^{-t})^{-2} \leq \frac{1}{4}$ for all $t \in \R$, the fifth line follows from Lipschitz continuity and the fact that $\max_{t \in \R}{\big|\frac{\diff}{\diff t}e^{-t}(1+e^{-t})^{-2}\big|} \leq \frac{1}{4}$, and the sixth line follows from H\"{o}lder's inequality and the fact that $\|\vartheta\|_{\infty} \leq 1$. This implies that for all $x^{(1)},x^{(2)} \in [0,1]^{d-1}$ and $y_1,y_2 \in [0,1]$, we have
\begin{equation}
\begin{aligned}
& \left|\nabla_{x,y}^s g_i(x^{(1)},y_1;\vartheta) - \nabla_{x,y}^s g_i(x^{(2)},y_2;\vartheta)\right| \\
& \qquad \qquad \qquad \quad \leq \left\|(x^{(1)},y_1) - (x^{(2)},y_2)\right\|_1 .
\end{aligned}
\end{equation}
Thus, $L_2 = 1$ in this example, and each $g_i(\cdot;\vartheta)$ belongs to a $(2,1)$-H\"{o}lder class.
\end{enumerate}

\section{Bounds on Erlang Quantile Function}
\label{Bounds on Erlang Quantile Function}

For any $b \in \N$, let $Y$ be an Erlang distributed random variable with shape parameter $b$, rate $1$, and CDF $F_Y : [0,\infty) \rightarrow [0,1]$ given by \cite[Example 3.3.21]{EmbrechtsKluppelbergMikosch1997}:
\begin{equation}
\label{Eq: Erlang CDF}
\forall y \geq 0, \enspace F_Y(y) = 1 - e^{-y} \sum_{k = 0}^{b-1}{\frac{y^k}{k!}} \, .
\end{equation}
The ensuing proposition provides bounds on the quantile function (or inverse CDF) $F_Y^{-1}:[0,1) \rightarrow [0,\infty)$ in the neighborhood of unity.

\begin{proposition}[Quantile Function of Erlang Distribution]
\label{Prop: Quantile Function of Erlang Distribution}
For any integer $q \in \N$ such that $(q-1)b \geq 55$, we have
\begin{align*}
\frac{1}{2} \log(q-1) + \frac{1}{2} \log(b) & \leq F_Y^{-1}\!\left(1 - \frac{1}{q}\right) \\
& \leq 2\log(q-1) + 2b \log(2b) \, . 
\end{align*}
\end{proposition}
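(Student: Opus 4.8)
The plan is to recognize $Y$ as a sum $Y = \sum_{i=1}^{b} X_i$ of i.i.d.\ rate-$1$ exponentials $X_1,\dots,X_b$, so that the survival function $\overline{F}_Y(y) \triangleq 1 - F_Y(y) = \P(Y > y)$ is continuous and strictly decreasing on $[0,\infty)$. Writing $y^{\star} = F_Y^{-1}(1 - q^{-1})$, the defining relation is $\overline{F}_Y(y^{\star}) = q^{-1}$, and since $\overline{F}_Y$ is strictly decreasing, the two desired bounds $y_- \le y^{\star} \le y_+$ are \emph{equivalent} to the two tail estimates
$$ \overline{F}_Y(y_-) \ge \frac{1}{q} \ge \overline{F}_Y(y_+) , \qquad y_- = \tfrac{1}{2}\log((q-1)b), \quad y_+ = 2\log(q-1) + 2b\log(2b) . $$
Thus the entire proposition reduces to one lower and one upper bound on the Erlang tail.

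For the \textbf{lower bound} I would discard all but the largest summand by using $Y \ge \max_{i} X_i$, which yields the clean inequality
$$ \overline{F}_Y(y) \ge \P\Big(\max_{i \in [b]} X_i > y\Big) = 1 - (1 - e^{-y})^{b} . $$
Evaluating at $y_-$ gives $e^{-y_-} = ((q-1)b)^{-1/2}$, and the elementary bounds $(1-x)^b \le e^{-bx}$ and $\log(1+t) \le t$ reduce the target $1 - (1-e^{-y_-})^b \ge q^{-1}$ to $\sqrt{b/(q-1)} \ge \log(1 + (q-1)^{-1})$, which holds as soon as $b(q-1) \ge 1$. Since $(q-1)b \ge 55$, this half follows immediately and needs no further estimation.

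For the \textbf{upper bound} I would use the Chernoff estimate for the exponential sum, $\P(Y > y) \le e^{-sy}(1-s)^{-b}$ for $s \in (0,1)$; optimizing at $s = 1 - b/y$ (legitimate since $y_+ > 2b\log 2 > b$) gives $\overline{F}_Y(y) \le \exp(-\Phi(y))$ with $\Phi(y) = y - b - b\log(y/b)$, which is increasing on $(b,\infty)$. (Equivalently one may bound the tail sum $\sum_{k=0}^{b-1} y^k/k!$ by $b$ times its largest term $y^{b-1}/(b-1)!$ and invoke Stirling, obtaining the same estimate up to harmless constants.) It then remains only to verify $\Phi(y_+) \ge \log q$.

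I expect this verification to be \emph{the main obstacle}, because $y_+$ appears inside the correction $b\log(y_+/b)$, so the summands $2\log(q-1)$ and $2b\log(2b)$ of $y_+$ cannot be treated separately: when $q$ is enormous the term $b\log(y_+/b) \sim b\log\log q$ must be absorbed by the $2\log(q-1)$ part, whereas when $q$ is small it is the $2b\log(2b)$ part that dominates. My plan is to control $b\log(y_+/b)$ through the single inequality $y_+ \le C\, b^{3}(q-1)^{1/b}$ for an absolute constant $C$ (provable using $e^{t}\ge t$ and $2\log(2b)/b^2 \le 2/e$), which turns $b\log y_+$ into $3b\log b + \log(q-1) + O(b)$ and lets the $b\log b$ terms already present in $\Phi(y_+)$ cancel it; the remaining slack is then secured by the hypothesis $(q-1)b \ge 55$, with the few smallest values $b \in \{1,2,3\}$ checked by hand using the stronger lower bound on $q$ that this constraint forces in those cases. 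This also explains why the constant $55$ is dictated entirely by the upper bound, the lower bound being essentially free.
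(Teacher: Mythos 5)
Your reduction to the two tail estimates $\overline{F}_Y(y_-) \geq 1/q \geq \overline{F}_Y(y_+)$ is valid, and your lower-bound argument via $Y \geq \max_i X_i$ is complete, correct, and genuinely different from the paper's: the paper never leaves the series representation of $F_Y$, instead bounding $\sum_{k=b}^{\infty} y^k/k!$ below by $e^y/\max_k\binom{k+b}{b}$-type tricks (using $\binom{k+b}{b}\geq 1$) and retaining only the $k=b-1$ term of the head, which lands on the same inequality $b(q-1)\leq y e^y$. Your version is arguably cleaner and, as you note, needs only $(q-1)b\geq 1$. For the upper bound the paper again works with the series, using $\binom{k+b}{b}\leq 2^{k+b}$ and $b!\leq b^b$ to reduce everything to $b\log(2b)+\log(q-1)\leq y/2$; the constant $55\geq e^4$ enters only to guarantee $y^{\star}\geq 2$ so that a $\log(y-1)$ term can be dropped. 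Your Chernoff route $\overline{F}_Y(y)\leq\exp(-\Phi(y))$ with $\Phi(y)=y-b-b\log(y/b)$ is an equally legitimate starting point and $\Phi$ is indeed increasing on $(b,\infty)$ with $y_+>2b\log 2>b$.

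The gap is in your plan for verifying $\Phi(y_+)\geq\log q$. With the envelope $y_+\leq C b^3(q-1)^{1/b}$ you get $b\log(y_+/b)\leq b\log C+2b\log b+\log(q-1)$, and since $y_+$ contributes exactly $2b\log b$ (from $2b\log(2b)=2b\log 2+2b\log b$), the $b\log b$ terms cancel \emph{exactly}, leaving $\Phi(y_+)\geq\log(q-1)+b(2\log 2-1-\log C)$. The coefficient of $b$ is negative for any admissible $C$ (already $C\geq 2$ forces $2\log 2-1-\log C<0$), so the residual inequality $\log(q-1)-cb\geq\log q$ with $c>0$ is false for \emph{every} $b\geq 1$, and it fails worst in the regime you did not flag: $b$ large with $q=2$ (e.g., $b=55$, $q=2$, where $\log(q-1)=0$). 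Hand-checking $b\in\{1,2,3\}$ therefore cannot rescue the argument. The fix is to use the sharper envelope $y_+\leq(2+4/e)\,b^2(q-1)^{1/b}$, which follows from $\log(q-1)\leq b(q-1)^{1/b}$ and $\log(2b)\leq 2b/e$; this costs only $b\log b$ inside the logarithm and leaves a surplus $\Phi(y_+)\geq\log(q-1)+b\log b-0.86\,b$, which exceeds $\log q$ for all $b\geq 3$ (using $\log(q/(q-1))\leq 1/(q-1)\leq b/55$), with $b\in\{1,2\}$ then genuinely checkable by hand. With that repair your proof closes and remains a distinct, self-contained alternative to the paper's series manipulation.
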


\begin{proof}
Recall from the definition of generalized inverse that
\begin{align}
& F_Y^{-1}\!\left(1 - \frac{1}{q}\right) \nonumber \\
& = \inf\!\left\{y > 0 : F_Y(y) \geq 1 - \frac{1}{q} \right\} \nonumber \\
& = \inf\!\left\{y > 0 : \sum_{k = 0}^{b-1}{\frac{y^k}{k!}} \leq \frac{1}{q} \sum_{k = 0}^{\infty}{\frac{y^k}{k!}} \right\} \nonumber \\
& = \inf\!\left\{y > 0 : (q-1) \sum_{k = 0}^{b-1}{\frac{y^k}{k!}} \leq \sum_{k = b}^{\infty}{\frac{y^k}{k!}} \right\} \nonumber \\
& = \inf\!\left\{y > 0 : (q-1) \sum_{k = 0}^{b-1}{\frac{y^k}{k!}} \leq \frac{y^b}{b!} \sum_{k = 0}^{\infty}{\frac{y^{k}}{k!} \binom{k+b}{b}^{\! -1}} \right\} \nonumber \\
& = \inf\!\left\{y > 0 : b! (q-1) \sum_{k = 0}^{b-1}{\frac{y^{k-b}}{k!}} \leq \sum_{k = 0}^{\infty}{\frac{y^{k}}{k!} \binom{k+b}{b}^{\! -1}} \right\} ,
\label{Eq: Generalized inverse}
\end{align}
where the second equality uses \eqref{Eq: Erlang CDF} and the Maclaurin series of $e^y = \sum_{k = 0}^{\infty}{\frac{y^{k}}{k!}}$, and the fourth equality follows from straightforward manipulations of the summation on the right hand side.

To prove the lower bound, notice that for all $y > 0$,
\begin{align*}
b! (q-1) \sum_{k = 0}^{b-1}{\frac{y^{k-b}}{k!}} & \leq \sum_{k = 0}^{\infty}{\frac{y^{k}}{k!} \binom{k+b}{b}^{\! -1}} \\
\Rightarrow \qquad b! (q-1) \sum_{k = 0}^{b-1}{\frac{y^{k-b}}{k!}} & \leq \sum_{k = 0}^{\infty}{\frac{y^{k}}{k!}} = e^y \\ 
\Rightarrow \qquad b(q-1) & \leq y e^y \\
\Leftrightarrow \qquad \log(b) + \log(q-1) & \leq \log(y) + y \\
\Rightarrow \qquad \frac{1}{2} \log(b) + \frac{1}{2} \log(q-1) & \leq y \, ,
\end{align*}
where the first implication holds because $\binom{k+b}{b} \geq 1$, the second implication follows from only retaining the $k = b-1$ term in the summation on the left hand side, and the last implication holds because $\log(y) \leq y - 1 \leq y$. Hence, using \eqref{Eq: Generalized inverse}, we obtain
\begin{align*}
F_Y^{-1}\!\left(1 - \frac{1}{q}\right) & \geq \inf\!\left\{y \geq \frac{1}{2} \log(q-1) + \frac{1}{2} \log(b) \right\} \\
& = \frac{1}{2} \log(q-1) + \frac{1}{2} \log(b) \, .
\end{align*}

Next, note that since we assume that $(q-1)b \geq 55 \geq e^4$, $F_Y^{-1}(1 - q^{-1}) \geq 2$. So, it suffices to consider $y \geq 2$ within the infimum in \eqref{Eq: Generalized inverse}. To establish the upper bound, observe that for all $y \geq 2$, 
\begin{align*}
b! (q-1) \sum_{k = 0}^{b-1}{\frac{y^{k-b}}{k!}} & \leq \sum_{k = 0}^{\infty}{\frac{y^{k}}{k!} \binom{k+b}{b}^{\! -1}} \\
\Leftarrow \qquad 2^b b! (q-1) \sum_{k = 0}^{b-1}{\frac{y^{k-b}}{k!}} & \leq \sum_{k = 0}^{\infty}{\frac{(y/2)^{k}}{k!}} = e^{y/2} \\
\Leftarrow \qquad \frac{2^b b! (q-1)}{y} \sum_{k = 0}^{b-1}{\frac{1}{y^{k}}} & \leq e^{y/2} \\
\Leftarrow \qquad \frac{2^b b! (q-1)}{y - 1} & \leq e^{y/2} \\
\Leftrightarrow \qquad b \log(2) + \log(b!) + \log(q-1) & \leq \log(y-1) + \frac{y}{2} \\ 
\Leftarrow \qquad 2 b \log(2 b) + 2 \log(q-1) & \leq y \, ,
\end{align*}
where the first implication holds because $\binom{k+b}{b} \leq 2^{k+b}$, the third implication follows from computing the geometric series on the left hand side in the previous line over all $k \in \N\cup\!\{0\}$, and the last implication holds because $b! \leq b^b$ and $\log(y-1) \geq 0$. Therefore, using \eqref{Eq: Generalized inverse}, we get
\begin{align*}
F_Y^{-1}\!\left(1 - \frac{1}{q}\right) & \leq \inf\!\left\{y \geq 2 \log(q-1) + 2 b \log(2 b) \right\} \\
& = 2 \log(q-1) + 2 b \log(2 b) \, . 
\end{align*}
This completes the proof.
\end{proof}

\section{Low Rank Structure in Neural Networks}
\label{Low Rank Structure in Neural Networks}

In this appendix, we present some Keras (TensorFlow) based simulation results which provide evidence that a variant of the approximate low rank structure used in our analysis of the $\lrgd$ algorithm appears in neural network training problems. We perform our experiments using the well-known \emph{MNIST database} for handwritten digit recognition \cite{LeCunCortesBurgesMNIST} as well as the \emph{CIFAR-10 dataset} of tiny images \cite{KrizhevskyNairHintonCIFAR10}. We next describe our experiment for both databases. 

\textbf{MNIST training.} The MNIST database consists of a training dataset of $n = 60,000$ image-label pairs as well as a test dataset of $10,000$ pairs. Each handwritten digit image has $28 \times 28$ grayscale pixels taking intensity values in $\{0,\dots,255\}$, which we can stack and re-normalize to obtain a feature vector $y \in [0,1]^{784}$. Each label belongs to the set of digits $\{0,\dots,9\}$, which we transform into an elementary basis vector $z \in \{0,1\}^{10}$ via \emph{one hot encoding}. So, each data sample $(y,z)$ has dimension $d = 794$. 

To train a classifier that solves the multi-class classification problem of inferring digits based on images, we construct a \emph{fully connected feed-forward neural network with one hidden layer} (cf. \cite[Chapter 5]{Bishop2006} for basic definitions and terminology). The input layer of this network has $784$ nodes (since the feature vectors are $784$-dimensional), the hidden layer has $K \in \N$ nodes (we typically set $K = 784$ as well), and the output layer has $10$ nodes (since there are $10$ possible label values). The network also possesses the usual bias nodes, which are not included in the above counts. We use all \emph{logistic} (or sigmoid) activation functions (and in one case, all \emph{rectified linear unit}, i.e., ReLU, activation functions) in the hidden layer, and the \emph{softmax} activation function in the output layer. Note that these neural networks are highly overparametrized, because $d \ll n \ll p = 623,290$, where $p$ is the number of weight parameters in the network including all bias nodes. Finally, we use the \emph{cross-entropy} loss function, and minimize the empirical risk (with respect to the training data) over the weights of the neural network using the \emph{Adam} stochastic optimization algorithm \cite{KingmaBa2015} (and in one case, using the \emph{root mean square propagation}, i.e., RMSprop, algorithm \cite{TielemanHinton2012}). Specifically, we always use a batch size of $200$ and run the algorithm over $25$ epochs. In almost all instances in our experiments, this setting of hyper-parameters for the optimization algorithm yields test validation accuracies of over $98\%$ (and training accuracies of over $99\%$). (We note that perturbing the batch size or number of epochs does not change our qualitative observations.)

\textbf{CIFAR-10 training.} The CIFAR-10 database consists of a training dataset of $n = 50,000$ image-label pairs as well as a test dataset of $10,000$ pairs. Each image has $32 \times 32$ color pixels with $3$ channels per pixel taking intensity values in $\{0,\dots,255\}$, which we can stack and re-normalize to obtain a feature vector $y \in [0,1]^{3072}$. Each label belongs to a set of $10$ possible values, e.g., airplane, bird, horse, truck, etc., which we transform into an elementary basis vector $z \in \{0,1\}^{10}$ via one hot encoding. So, each data sample $(y,z)$ has dimension $d = 3082$. 

This time, to train a classifier that solves the problem of inferring labels based on images, we construct a \emph{convolutional neural network} (cf. \cite[Chapter 5]{Bishop2006}). The input layer of this network has $32 \times 32 \times 3$ nodes (since there are $3$ channels), the next two hidden convolutional layers use $40$ \emph{filters} per input channel each with $3 \times 3$ kernels with a stride of $1$ and zero padding (to retain the image dimensions of $32 \times 32$), the fourth layer performs sub-sampling via $2 \times 2$ \emph{max-pooling} with a stride of $2$, the fifth layer is fully connected with $150$ output nodes, and the sixth output layer is also fully connected and has $10$ nodes (since there are $10$ possible label values). The network also possesses the usual bias nodes, which are not included in the above counts of nodes. We use all logistic activation functions (and in one case, all ReLU activation functions) in the two convolutional hidden layers and the fifth fully connected layer, and the softmax activation function in the output layer. As before, these deep neural networks are highly overparametrized, because $d \ll n \ll p = 1,553,220$, where $p$ is the number of weight parameters in the network. Finally, we use the cross-entropy loss function, and minimize the empirical risk over the weights of the neural network using the Adam stochastic optimization algorithm (and in one case, using the RMSprop algorithm) as before. Specifically, we always use a batch size of $200$ and run the algorithm over $15$ epochs. In almost all instances in our experiments, this setting of hyper-parameters for the optimization algorithm yields test validation accuracies of around $60\%$ (with slightly higher test accuracies for ReLU activations). (On the other hand, training accuracies are usually higher, e.g., over $98\%$ for ReLU activations. Although these accuracies can be further increased through careful tuning using techniques like dropout for regularization and batch normalization, we do not utilize such ideas here for simplicity. We are able to illustrate low rank structure without these additional bells and whistles. Furthermore, we note that perturbing the batch size, number of epochs, the architecture of hidden layers, etc. does not change our qualitative observations.)

\textbf{Experimental details.} In either the MNIST or the CIFAR-10 settings delineated above, let $\theta^* \in \R^p$ be the optimal network weights output by the training process. Then, for any $k \in \{30,60,90\}$, we randomly choose $k$ test data samples $\{x^{(i)} = (y^{(i)},z^{(i)}) \in [0,1]^{d} : i \in [k]\}$ uniformly among all subsets of $k$ test data samples. Moreover, we fix any $k$ independent random perturbations of $\theta^*$, namely, vectors of network weights $\theta^{(1)},\dots,\theta^{(k)} \in \R^p$ such that
\begin{equation}
\theta^{(i)} = \theta^* + \frac{\left\|\theta^*\right\|_{1}}{p} \, g_i
\end{equation}
for all $i \in \{1,\dots,k\}$, where $g_1,\dots,g_k$ are i.i.d. Gaussian distributed random vectors with zero mean and identity covariance. We next construct a $k \times k \times p$ \emph{tensor $\M \in \R^{k \times k \times p}$ of partial derivatives}:
\begin{equation}
\begin{aligned}
& \forall i , j \in \{1,\dots,k\}, \, \forall q \in \{1,\dots,p\}, \\
& \qquad \qquad \qquad \qquad \M_{i,j,q} \triangleq \frac{\partial f}{\partial \theta_q}(x^{(i)};\theta^{(j)}) \, ,
\end{aligned}
\end{equation}
where $f : [0,1]^d \times \R^p \rightarrow \R$, $f(x;\theta)$ denotes the cross-entropy loss for a data sample $x$ when the neural network has weights $\theta$, the first dimension of $\M$ is indexed by $x^{(1)},\dots,x^{(k)}$, the second dimension is indexed by $\theta^{(1)},\dots,\theta^{(k)}$, and the third dimension is indexed by the network weight coordinates $\theta_1,\dots,\theta_p$. Note that the partial derivatives to obtain each $\M_{i,j,q}$ can be computed in a variety of ways; we use one iteration of a standard GD procedure in Keras with a single data sample. For any $q \in [p]$, let $\M_q \in \R^{k \times k}$ be the $q$th matrix slice of the tensor $\M$ (where the rows of $\M_q$ are indexed by  $x^{(1)},\dots,x^{(k)}$ and the columns are indexed by $\theta^{(1)},\dots,\theta^{(k)}$). Furthermore, letting $\sigma_1(\M_q) \geq \sigma_2(\M_q) \geq \cdots \geq \sigma_k(\M_q) \geq 0$ denote the ordered singular values of $\M_q$, we define the \emph{approximate rank} of the (non-zero) matrix $\M_q$ as the minimum number of ordered singular values that capture $90\%$ of the squared Frobenius norm of $\M_q$:
\begin{equation}
\begin{aligned}
& \text{approximate rank of } \M_q \\
& \qquad \triangleq \min\!\left\{J \in [k] : \sum_{i = 1}^{J}{\sigma_i(\M_q)^2} \geq 0.9 \left\|\M_q\right\|_{\Fro}^2 \right\} .
\end{aligned}
\end{equation} 
(Note that the choice of $90\%$ is arbitrary, and our qualitative observations do not change if we perturb this value.) Next, we fix a sufficiently small $\varepsilon > 0$; specifically, we use $\varepsilon = 0.005$ after eyeballing several instances of optimal weight vectors $\theta^*$ to determine a small enough threshold to rule out zero weights (although the precise choice of this value does not alter our qualitative observations). Ruling out zero weights is a stylistic choice in our experiments since these weights are effectively not used by the final trained model. However, keeping them in the sequel would not change the depicted low rank structure. Then, for every $q \in [p]$ such that $\theta^*_q$ is ``non-trivial,'' by which we mean $|\theta_q^*| \geq \varepsilon$, we compute the approximate rank of $\M_q$. We finally plot a \emph{histogram} of this multiset of approximate rank values, where the multiset contains one value for each $q \in [p]$ such that $\theta^*_q$ is non-trivial.\footnote{To be precise, for computational efficiency, we plot histograms of around $500$ randomly and uniformly chosen approximate rank values from this multiset.} 

\begin{figure*}[t]
\begin{subfigure}{.33\linewidth}
\centering
  
\includegraphics[trim = 0mm 0mm 0mm 0mm, clip, width=\linewidth]{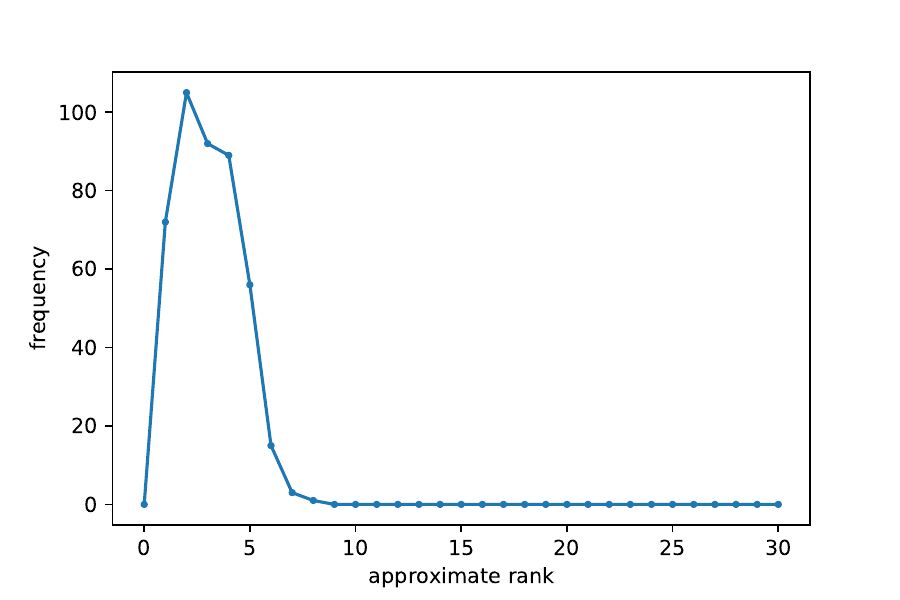}

\caption{$K = 784$, $k = 30$}  
\label{Fig: 784, logistic, 30}
\end{subfigure}
\begin{subfigure}{.33\linewidth}
\centering
  
\includegraphics[trim = 0mm 0mm 0mm 0mm, clip, width=\linewidth]{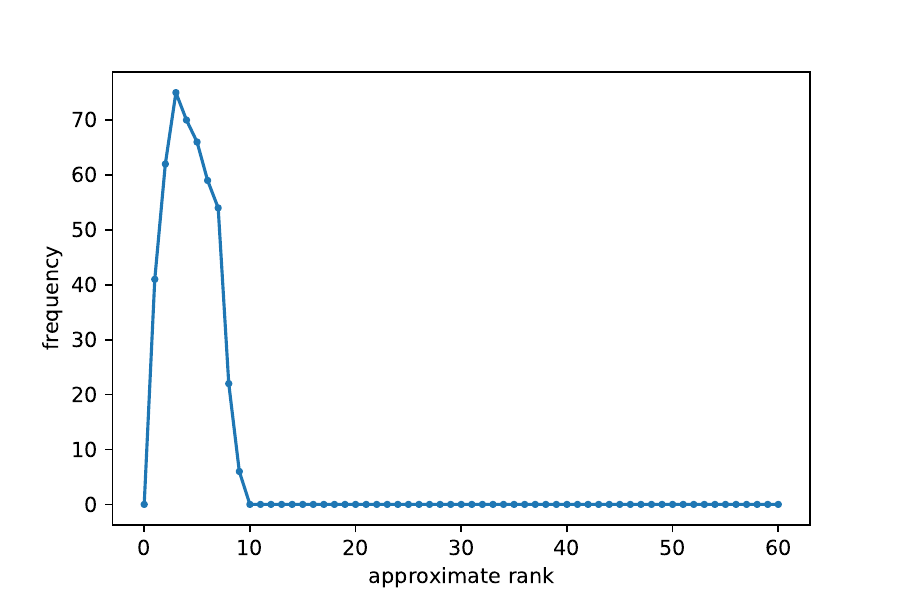}

\caption{$K = 784$, $k = 60$}  
\label{Fig: 784, logistic, 60}
\end{subfigure}
\begin{subfigure}{.33\linewidth}
\centering
  
\includegraphics[trim = 0mm 0mm 0mm 0mm, clip, width=\linewidth]{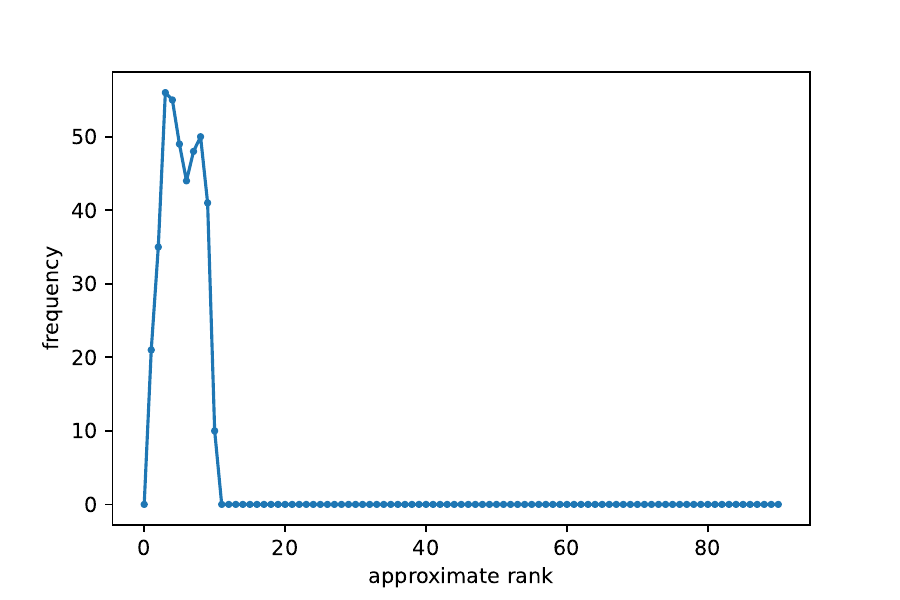}

\caption{$K = 784$, $k = 90$}  
\label{Fig: 784, logistic, 90}
\end{subfigure}
\newline
\begin{subfigure}{.33\linewidth}
\centering
  
\includegraphics[trim = 0mm 0mm 0mm 0mm, clip, width=\linewidth]{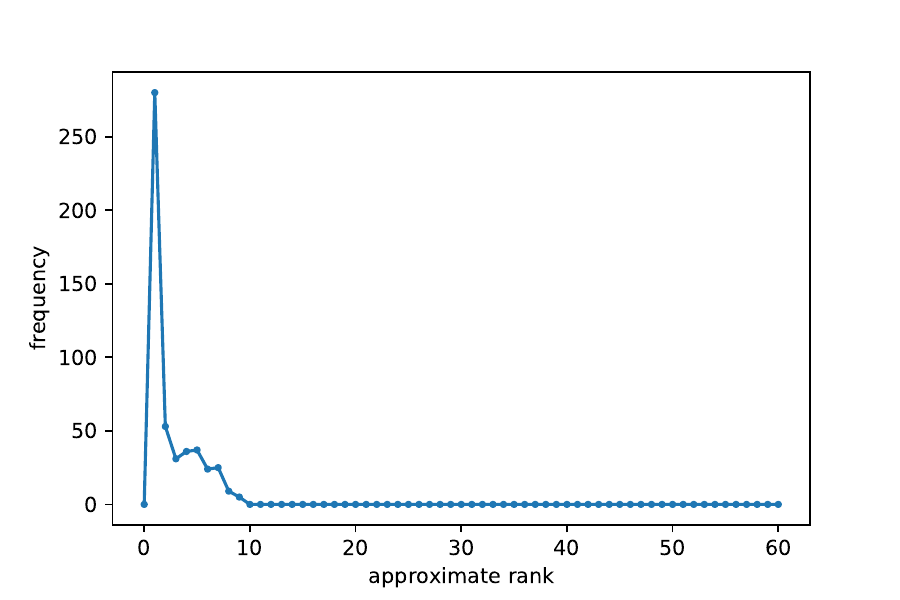}

\caption{$K = 784$, $k = 60$, all weights}  
\label{Fig: 784, logistic, 60, all weights}
\end{subfigure}
\begin{subfigure}{.33\linewidth}
\centering
  
\includegraphics[trim = 0mm 0mm 0mm 0mm, clip, width=\linewidth]{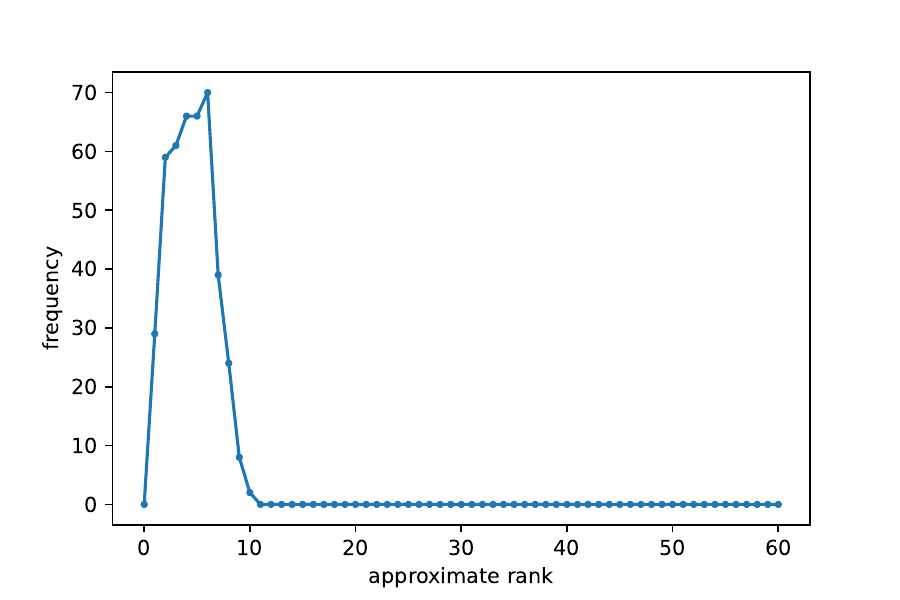}

\caption{$K = 500$, $k = 60$}  
\label{Fig: 500, logistic, 60}
\end{subfigure}
\begin{subfigure}{.33\linewidth}
\centering
  
\includegraphics[trim = 0mm 0mm 0mm 0mm, clip, width=\linewidth]{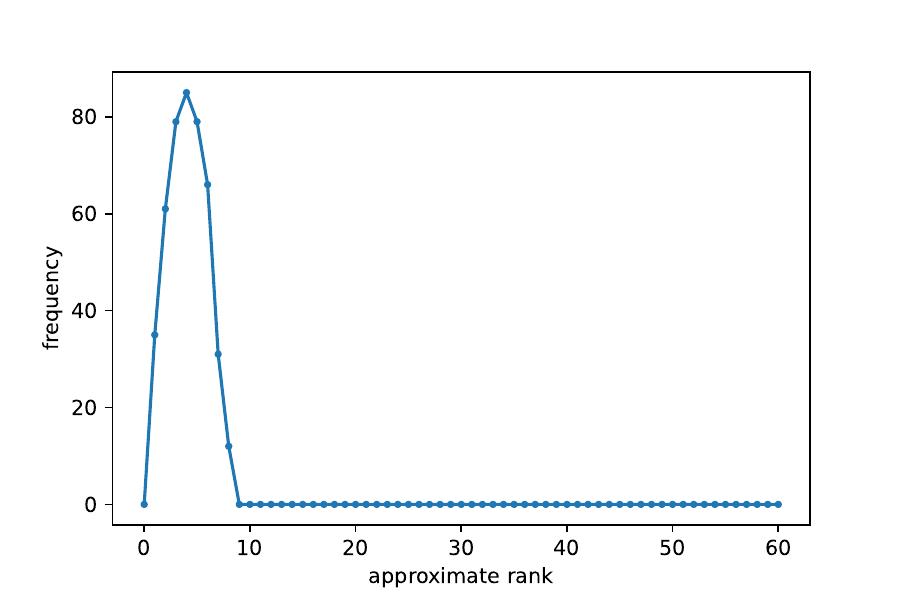}

\caption{$K = 1000$, $k = 60$}  
\label{Fig: 1000, logistic, 60} 
\end{subfigure}
\newline
\begin{subfigure}{.33\linewidth}
\centering
  
\includegraphics[trim = 0mm 0mm 0mm 0mm, clip, width=\linewidth]{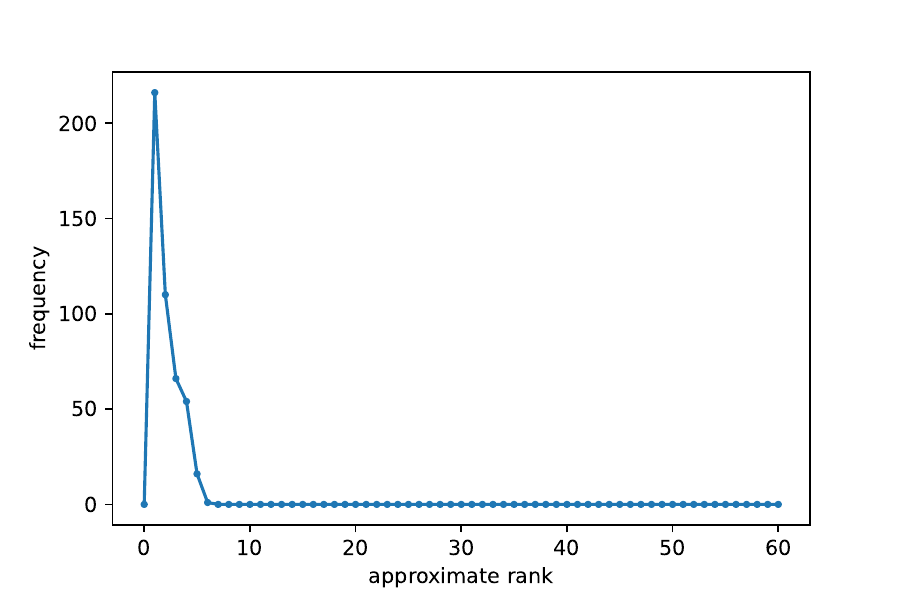}

\caption{$K = 784$, $k = 60$, RMSprop}  
\label{Fig: 784, logistic, 60, RMSprop}
\end{subfigure}
\begin{subfigure}{.33\linewidth}
\centering
  
\includegraphics[trim = 0mm 0mm 0mm 0mm, clip, width=\linewidth]{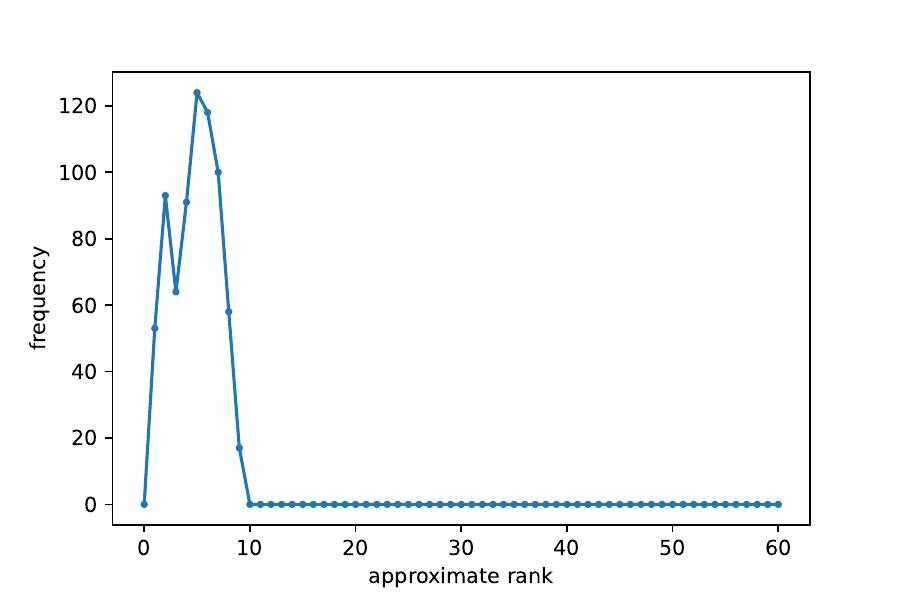}

\caption{$K = 784$, $k = 60$, two hidden layers}  
\label{Fig: 784, logistic, 60, two hidden layers}
\end{subfigure}
\begin{subfigure}{.33\linewidth}
\centering
  
\includegraphics[trim = 0mm 0mm 0mm 0mm, clip, width=\linewidth]{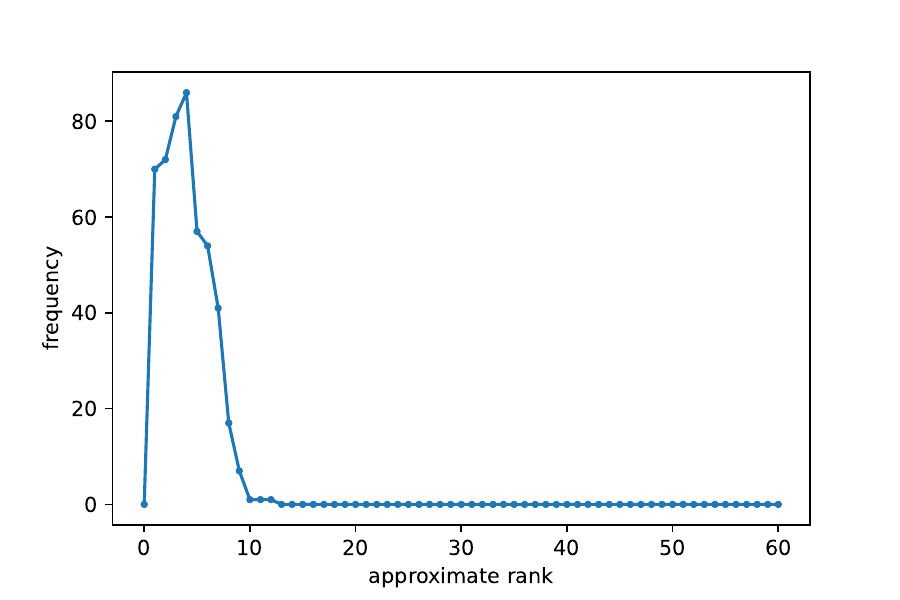}

\caption{$K = 784$, $k = 60$, ReLU}  
\label{Fig: 784, relu, 60}
\end{subfigure}

\caption{Histograms of approximate ranks for MNIST neural networks with different choices of hyper-parameters. The different values of $K$ and $k$ are stated under each plot. Unless stated otherwise, the neural network architectures for all the plots have one hidden layer, use logistic activation functions in all hidden layers, and use the Adam algorithm for training. Moreover, unless stated otherwise, the plots only consider approximate ranks of $\M_q$'s corresponding to non-trivial weights $\theta^*_q$. Note that \figref{Fig: 784, logistic, 60, all weights} displays approximate ranks for $\M_q$'s corresponding to all weights $\theta^*_q$, \figref{Fig: 784, logistic, 60, RMSprop} uses the RMSprop algorithm for training, \figref{Fig: 784, logistic, 60, two hidden layers} has two hidden layers of size $K = 784$ each, and \figref{Fig: 784, relu, 60} uses ReLU activation functions in the hidden layer.}
\label{Fig: Distributions of approximate ranks}
\end{figure*}

\textbf{MNIST plots.} In the MNIST setting, \figref{Fig: Distributions of approximate ranks} plots histograms generated using the aforementioned procedure for various choices of network hyper-parameters. We now explain these precise choices and discuss associated observations. \figref{Fig: 784, logistic, 60} depicts the prototypical histogram, where we use $K = 784$ and logistic activation functions in the hidden layer to define the neural network, Adam to train the network, and $k = 60$ to construct the tensor. The ordinate (vertical axis) of this plot represents frequency, i.e., the number of times we see a particular value, and the abscissa (horizontal axis) represents the set of possible approximate rank values. Moreover, the plot portrays that almost all $\M_q$'s corresponding to non-trivial $\theta_q^*$'s have approximate rank bounded by $10$. In fact, about half of these $\M_q$'s have approximate rank bounded by $5$. This suggests that the gradient of the loss function for this neural network model indeed exhibits some approximate low rank structure. We also note that $f$ is ``smooth'' here, because we have used smooth activation functions and the cross-entropy loss. Furthermore, the observation of low rankness does not change when we use other smooth losses instead of the cross-entropy loss. 

Compared to \figref{Fig: 784, logistic, 60}, \figref{Fig: 784, logistic, 30} and \figref{Fig: 784, logistic, 90} only change the value of $k$ to $k = 30$ and $k = 90$, respectively. It is clear from these plots that the broad qualitative observation explained above remains unchanged if $k$ is varied. In a different vein, compared to \figref{Fig: 784, logistic, 60}, \figref{Fig: 500, logistic, 60} and \figref{Fig: 1000, logistic, 60} only change the number of nodes in the hidden layer to $K = 500$ and $K = 1000$, respectively. These plots illustrate that our observations are qualitatively unchanged by different hidden layer sizes. Finally, compared to \figref{Fig: 784, logistic, 60}, \figref{Fig: 784, logistic, 60, RMSprop} only changes the optimization algorithm for training from Adam to RMSprop, and demonstrates (once again) that the choice of algorithm has essentially no effect on our observations. \figref{Fig: 784, logistic, 60, all weights} uses the same setting of network hyper-parameters as \figref{Fig: 784, logistic, 60}, but it plots a histogram of the multiset of approximate ranks of \emph{all $\M_q$'s}, i.e., $\M_q$'s corresponding to \emph{all weights} $\theta^*_q$. As mentioned earlier, the purpose of this plot is to illustrate that keeping all weights in the histograms does not change the depicted low rank structure, i.e., all $\M_q$'s have reasonably small approximate rank.

All of the experiments until now focus on smooth $f$ and neural networks with one hidden layer. To understand the effect of changing the number of hidden layers, \figref{Fig: 784, logistic, 60, two hidden layers} illustrates the histogram obtained when the neural network has two hidden layers with size $K = 784$ each and all logistic activation functions, and all other hyper-parameters are kept the same as \figref{Fig: 784, logistic, 60}. In this plot, the approximate rank of the bulk of $\M_q$'s corresponding to non-trivial $\theta_q^*$'s is bounded by $10$. This suggests that the addition of a few hidden layers does not increase the approximate rank of the gradient of $f$ by much. On the other hand, to investigate the effect of using non-smooth activation functions, \figref{Fig: 784, relu, 60} uses all ReLU activation functions in the hidden layer, and the same hyper-parameters as \figref{Fig: 784, logistic, 60} in all other respects. This histogram is qualitatively the same as that in \figref{Fig: 784, logistic, 60}. Therefore, the non-smoothness of $f$ does not seem to affect the approximate low rankness of the gradient of $f$. This final observation indicates that algorithms like $\lrgd$, which exploit approximate low rankness, could potentially be useful in non-smooth settings as well.

\begin{figure*}[t]
\begin{subfigure}{0.5\linewidth}
\centering
  
\includegraphics[trim = 0mm 0mm 0mm 0mm, clip, width=0.67\linewidth]{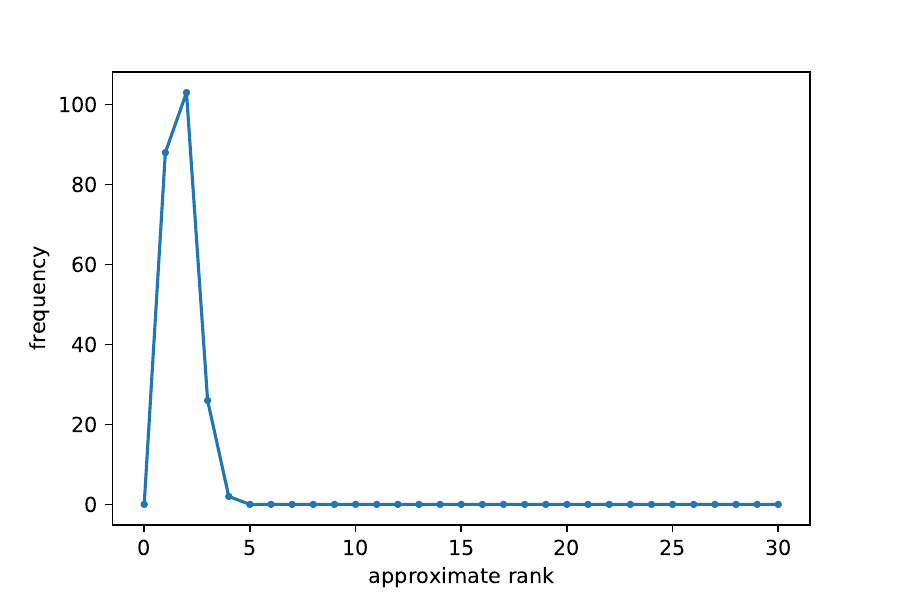}

\caption{$k = 30$}  
\label{Fig: cifar10, logistic, 30}
\end{subfigure}
\begin{subfigure}{0.5\linewidth}
\centering
  
\includegraphics[trim = 0mm 0mm 0mm 0mm, clip, width=0.67\linewidth]{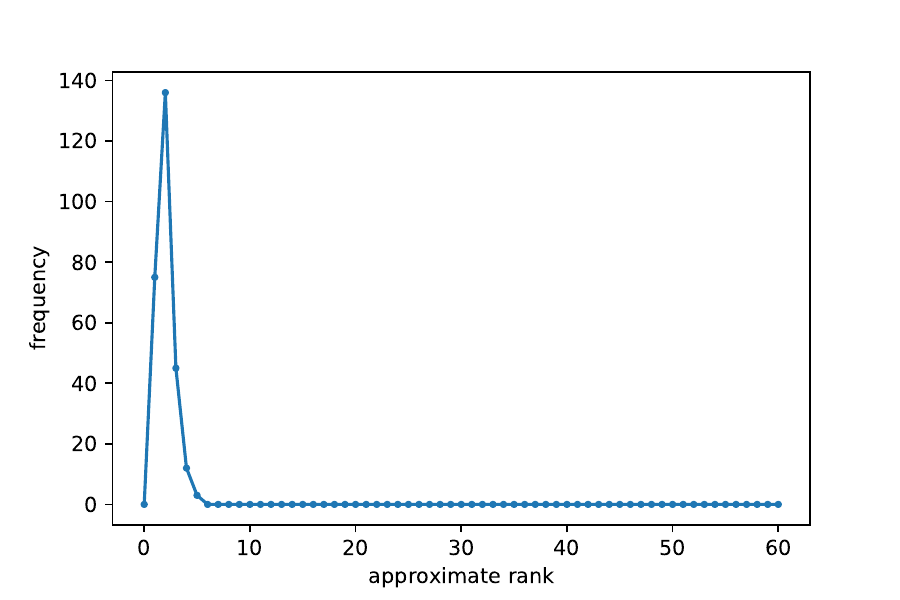}

\caption{$k = 60$}  
\label{Fig: cifar10, logistic, 60}
\end{subfigure}
\newline
\begin{subfigure}{0.5\linewidth}
\centering
  
\includegraphics[trim = 0mm 0mm 0mm 0mm, clip, width=0.67\linewidth]{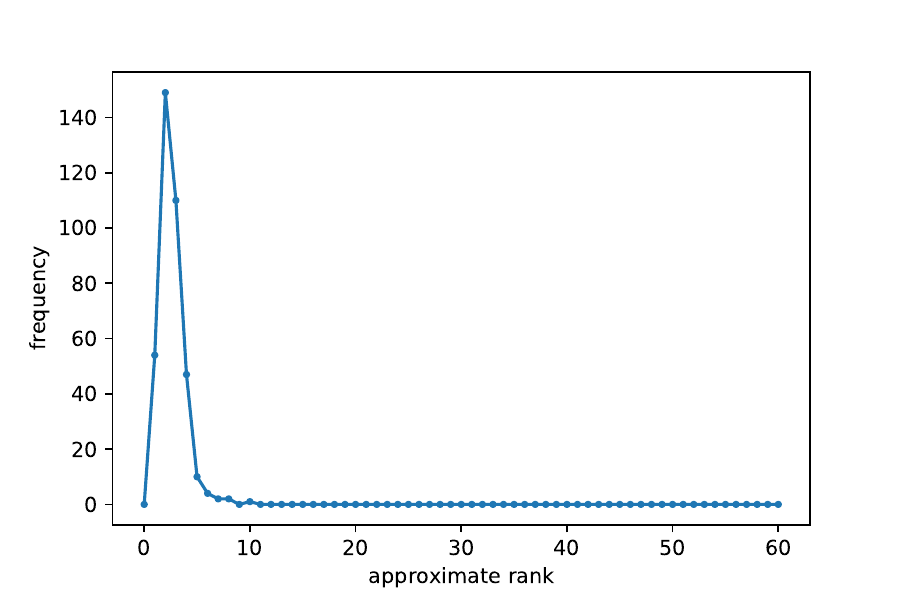}

\caption{$k = 60$, RMSprop}  
\label{Fig: cifar10, RMSprop}
\end{subfigure}
\begin{subfigure}{0.5\linewidth}
\centering
  
\includegraphics[trim = 0mm 0mm 0mm 0mm, clip, width=0.67\linewidth]{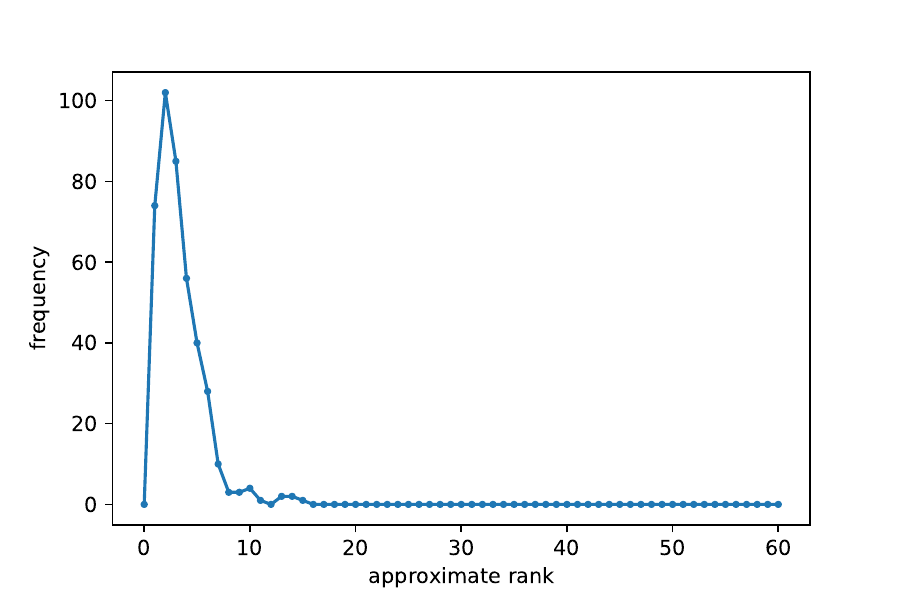}

\caption{$k = 60$, ReLU}  
\label{Fig: cifar10, relu}
\end{subfigure}

\caption{Histograms of approximate ranks for CIFAR-10 neural networks with different choices of hyper-parameters. The different values of $k$ are stated under each plot. Unless stated otherwise, the neural network architectures for all the plots have two hidden convolutional layers, a fourth max-pooling layer, and two final fully connected layers, and they use logistic activation functions in all non-sub-sampling hidden layers and the Adam algorithm for training. Moreover, unless stated otherwise, the plots only consider approximate ranks of $\M_q$'s corresponding to non-trivial weights $\theta^*_q$. Note that \figref{Fig: cifar10, RMSprop} uses the RMSprop algorithm for training and \figref{Fig: cifar10, relu} uses ReLU activation functions in all non-sub-sampling hidden layers.}
\label{Fig: Distributions of approximate ranks 2}
\end{figure*}

\textbf{CIFAR-10 plots.} In the CIFAR-10 setting, \figref{Fig: Distributions of approximate ranks 2} plots histograms generated using the aforementioned procedure for various choices of network hyper-parameters. The purpose of this figure is to demonstrate that the low rank structure depicted in \figref{Fig: Distributions of approximate ranks} can also be observed in more complex neural network architectures used for more challenging supervised learning tasks. As before, we briefly explain these precise choices and discuss associated observations. Many of our comments are similar in spirit to those in the MNIST setting.

\figref{Fig: cifar10, logistic, 60} depicts the prototypical histogram, where we use the convolutional architecture outlined earlier, logistic activation functions in all non-sub-sampling hidden layers, Adam to train the network, and $k = 60$ to construct the tensor. The axes of the plot are the same as those in \figref{Fig: Distributions of approximate ranks}. The plot illustrates that almost all $\M_q$'s corresponding to non-trivial $\theta_q^*$'s have approximate rank bounded by $6$. This corroborates the MNIST experiments that the gradient of the loss function for CIFAR-10 also exhibits some approximate low rank structure. Compared to \figref{Fig: cifar10, logistic, 60}, \figref{Fig: cifar10, logistic, 30} changes the value of $k$ to $k = 30$ to show that our qualitative observations remain unchanged if $k$ is varied, and \figref{Fig: cifar10, RMSprop} only changes the optimization algorithm for training to RMSprop, and demonstrates that the choice of algorithm has essentially no effect on our observations (as before). Lastly, while all of these plots are for a smooth loss function $f$ (because we have used smooth activation functions and the cross-entropy loss), compared to \figref{Fig: cifar10, logistic, 60}, \figref{Fig: cifar10, relu} only changes the activation functions in all non-sub-sampling hidden layers to ReLU. Once again, we observe that there is no qualitative difference in the depicted approximate low rank structure even when non-smooth activation functions are used.

\section{Proof of \thmref{Thm: Federated Oracle Complexity of FedAve}}
\label{Proof of Thm Federated Oracle Complexity of FedAve}

To prove \thmref{Thm: Federated Oracle Complexity of FedAve}, we will need two auxiliary lemmata. The first of these is the following convergence result which is distilled from \cite[Theorem 1]{Reisizadehetal2020a}.

\begin{lemma}[Convergence of $\FedAve$ {\cite[Theorem 1]{Reisizadehetal2020a}}]
\label{Lemma: Convergence of FedAve}
Let $\theta^{(T)} \in \R^p$ be the parameter vector at the server after $T$ epochs of $\FedAve$ (see \cite[Algorithm 1]{Reisizadehetal2020a}). Suppose assumptions 1 and 3 in the statement of \thmref{Thm: Federated Oracle Complexity of FedAve} hold. Then, for every 
$$ T \geq T_0 = 4 \max\!\left\{ \kappa , 4  + \frac{32 \kappa^2 (1 - \tau)}{\tau (m - 1)}, \frac{4 m}{\mu^2 b} \right\} , $$
we have
\begin{align*}
\E\!\left[F(\theta^{(T)})\right] - F_* & \leq \frac{L_1}{2} \, \E\!\left[\left\|\theta^{(T)} - \theta^*\right\|_2^2\right] \\
& \leq \frac{L_1 (T_0 b + 1)^2}{2 (T b + 1)^2} \, \E\!\left[\left\|\theta^{(T_0)} - \theta^*\right\|_2^2\right] \\
& \quad \, + \frac{8 \kappa \sigma^2}{\mu m} \left(1 + \frac{8 (1 - \tau) m}{\tau (m - 1)} \right) \frac{b}{T b + 1} \\
& \quad \, + \left(\frac{8 e L_1 \kappa^2 \sigma^2}{m}\right) \frac{(b-1)^2}{T b + 1} \\
& \quad \, + \frac{128 e \kappa^3 \sigma^2}{\mu} \!\left(\! 1 \!+ \! \frac{8 (1 - \tau)}{\tau (m - 1)}\!\right) \! \frac{b - 1}{(T b + 1)^2} ,
\end{align*}
where the first inequality follows from the smoothness in $\theta$ assumption in \secref{Assumptions on Loss Function}.
\end{lemma}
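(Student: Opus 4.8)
The plan is to prove the two inequalities separately, reserving the bulk of the effort for the second. The first inequality, $\E[F(\theta^{(T)})] - F_* \leq \frac{L_1}{2}\,\E[\|\theta^{(T)} - \theta^{\star}\|_2^2]$, is immediate from the smoothness in $\theta$ assumption of \secref{Assumptions on Loss Function}: since $\nabla_{\theta} F$ is $L_1$-Lipschitz continuous, $F$ admits the quadratic upper bound $F(\theta) \leq F(\theta^{\star}) + \nabla_{\theta} F(\theta^{\star})^{\T}(\theta - \theta^{\star}) + \frac{L_1}{2}\|\theta - \theta^{\star}\|_2^2$, and the strong convexity assumption guarantees a unique minimizer $\theta^{\star}$ with $\nabla_{\theta} F(\theta^{\star}) = 0$, so the linear term vanishes. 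Taking expectations over the random iterate $\theta^{(T)}$ then yields the first inequality.

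For the second inequality, I would directly invoke the sharp convergence analysis of \cite[Theorem 1]{Reisizadehetal2020a}, which already tracks the three phenomena governing $\FedAve$: the geometric contraction of the initial error under strong convexity and a $\Theta(1/(\mu k))$-decaying step size, the accumulation of stochastic-gradient variance $\sigma^2$ across the $m$ clients, and the bias introduced both by the $b$ local updates between communication rounds and by partial participation $\tau < 1$. First I would specialize their result by switching off quantization, i.e., setting their compression operator to the identity so that the quantization-variance contribution collapses; this leaves exactly the uncompressed $\FedAve$ studied here. Their bound is stated in terms of the total number of local SGD steps $k = Tb$, so re-indexing to epochs $T$ and using the warm start at $T_0$ epochs reproduces the four right-hand-side terms: the $(T_0 b + 1)^2/(Tb+1)^2$ factor is the contraction of the initial distance $\E[\|\theta^{(T_0)} - \theta^{\star}\|_2^2]$, the $O(b/(Tb+1))$ term is the leading variance term carrying the $\frac{1-\tau}{\tau(m-1)}$ partial-participation correction, and the $O((b-1)^2/(Tb+1))$ and $O((b-1)/(Tb+1)^2)$ terms capture the local-drift bias.

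The threshold $T_0 = 4\max\{\kappa,\, 4 + \frac{32\kappa^2(1-\tau)}{\tau(m-1)},\, \frac{4m}{\mu^2 b}\}$ is precisely the burn-in required for the step-size schedule and the contraction recursion of \cite{Reisizadehetal2020a} to be valid, and I would read it off by matching their conditions on the learning rate and on the number of rounds to our notation (their $\kappa$, $\tau$, $m$, $b$, $\mu$). The main obstacle is bookkeeping rather than conceptual: one must carefully align assumptions 1 and 3 of \thmref{Thm: Federated Oracle Complexity of FedAve} with their smoothness, strong-convexity, and bounded-variance hypotheses, convert their iteration-indexed bound into the epoch-indexed form with the correct constants, and verify that collapsing the compression to the identity leaves the structure of the remaining terms intact. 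Once the constants are matched, chaining the two inequalities gives the stated bound and completes the proof.
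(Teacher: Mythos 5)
Your proposal matches the paper's own treatment: the paper gives no independent derivation of the second inequality, instead distilling it directly from \cite[Theorem 1]{Reisizadehetal2020a} (with quantization disabled), and justifies the first inequality exactly as you do, via the quadratic upper bound from $L_1$-smoothness together with $\nabla_{\theta}F(\theta^{\star}) = 0$ at the unique minimizer. Your additional remarks on re-indexing from local SGD steps to epochs and matching the burn-in threshold $T_0$ are consistent with, and slightly more explicit than, what the paper records.
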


The second lemma provides a useful estimate of a trigonometric expression we will utilize in the proof.

\begin{lemma}[Taylor Approximation of Root]
\label{Lemma: Taylor Approximation of Root}
For all $t \in [0,1]$, we have\footnote{Note that we use $\arcsin(1-t^2)$ here instead of $\arcsin(1-t)$, because the latter can only be expanded as a \emph{Newton-Puiseux series}.}
\begin{align*}
\frac{1}{2} + \frac{t}{\sqrt{6}} - \frac{5 t^2}{9} & \leq \sin\!\left(\frac{1}{3} \arcsin\!\left(1 - t^2\right) + \frac{2 \pi}{3}\right) \\
& \leq \frac{1}{2} + \frac{t}{\sqrt{6}} + \frac{5 t^2}{9} \, . 
\end{align*}
\end{lemma}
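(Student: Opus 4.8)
The plan is to establish the two-sided bound by a second-order Taylor expansion with Lagrange remainder of the function
$h(t) \triangleq \sin\!\big(\tfrac{1}{3}\arcsin(1-t^2) + \tfrac{2\pi}{3}\big)$ about $t=0$. Writing $\phi(t) = \tfrac{1}{3}\arcsin(1-t^2) + \tfrac{2\pi}{3}$ so that $h = \sin\circ\,\phi$, the first and most delicate task is to show that $h$ is twice continuously differentiable on all of $[0,1]$, including the endpoint $t=0$, where the argument $1-t^2$ of $\arcsin$ approaches its boundary value $1$ at which $\arcsin$ is itself non-differentiable. The crucial simplification is the identity $1 - (1-t^2)^2 = t^2(2-t^2)$, so that the apparent $t^{-1}$ singularity arising from the chain rule cancels and one obtains the analytic expression $\phi'(t) = -\tfrac{2}{3\sqrt{2-t^2}}$ on $[0,1]$. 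This is exactly why $\arcsin(1-t^2)$ rather than $\arcsin(1-t)$ appears in the statement (cf. the footnote): with $\arcsin(1-t)$ the derivative would carry a $t^{-1/2}$ factor and admit only a Newton-Puiseux (fractional-power) expansion, whereas the square resolves the branch.

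Second, I would evaluate the expansion data at $t=0$. Since $\phi(0) = \tfrac{\pi}{6} + \tfrac{2\pi}{3} = \tfrac{5\pi}{6}$, we get $h(0) = \sin(\tfrac{5\pi}{6}) = \tfrac{1}{2}$, matching the constant term of both bounds. Differentiating gives $h'(t) = \cos(\phi(t))\,\phi'(t)$, and using $\cos(\tfrac{5\pi}{6}) = -\tfrac{\sqrt{3}}{2}$ together with $\phi'(0) = -\tfrac{2}{3\sqrt{2}} = -\tfrac{\sqrt{2}}{3}$ yields $h'(0) = \tfrac{\sqrt{6}}{6} = \tfrac{1}{\sqrt{6}}$, matching the linear coefficient of the bounds.

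Third, the heart of the argument is a uniform bound on the second derivative over $[0,1]$. Differentiating once more, $h''(t) = -\sin(\phi(t))\,\phi'(t)^2 + \cos(\phi(t))\,\phi''(t)$, where $\phi'(t)^2 = \tfrac{4}{9(2-t^2)}$ and $\phi''(t) = -\tfrac{2t}{3(2-t^2)^{3/2}}$. Using $|\sin|,|\cos|\le 1$, the triangle inequality, the fact that $2-t^2 \in [1,2]$ for $t\in[0,1]$, and the elementary inequality $\tfrac{t}{(2-t^2)^{3/2}} \le 1$ on $[0,1]$ (with equality at $t=1$), I would bound $|h''(t)| \le \tfrac{4}{9} + \tfrac{2}{3} = \tfrac{10}{9}$ for every $t \in [0,1]$. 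This bound is crude (the true values of $h''$ are much smaller) but entirely sufficient.

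Finally, Taylor's theorem with Lagrange remainder \cite[Theorem 12.14]{Apostol1974} gives $h(t) = \tfrac{1}{2} + \tfrac{t}{\sqrt{6}} + \tfrac{1}{2}h''(\xi)\,t^2$ for some $\xi \in (0,t) \subseteq (0,1)$, and the derivative bound forces $\big|\tfrac{1}{2}h''(\xi)\,t^2\big| \le \tfrac{5}{9}t^2$, which is precisely the claimed sandwich. The only genuinely nontrivial step is the first one — verifying $C^2$-regularity at the left endpoint by resolving the boundary singularity of $\arcsin$ — after which the derivative computations and the $\tfrac{10}{9}$ bound are routine.
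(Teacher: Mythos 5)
Your proposal is correct and follows essentially the same route as the paper's proof: expand $h(t)=\sin\big(\tfrac{1}{3}\arcsin(1-t^2)+\tfrac{2\pi}{3}\big)$ to second order at $t=0$, compute $h(0)=\tfrac12$ and $h'(0)=\tfrac{1}{\sqrt 6}$, bound $|h''|\le \tfrac{10}{9}$ on $[0,1]$, and invoke Taylor's theorem with Lagrange remainder. Your extra care about $C^2$-regularity at $t=0$ (via the cancellation $1-(1-t^2)^2=t^2(2-t^2)$) and your term-by-term bound on $|h''|$ in place of the paper's monotonicity argument are only cosmetic differences.
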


\begin{proof}
Define the function $h:[0,1] \rightarrow [-1,1]$ as
$$ h(t) \triangleq \sin\!\left(\frac{1}{3} \arcsin\!\left(1 - t^2\right) + \frac{2 \pi}{3}\right) . $$
The first and second derivatives of $h$ are
\begin{align*}
h^{\prime}(t) & = -\frac{2}{3 \sqrt{2 - t^2}} \cos\!\left(\frac{1}{3} \arcsin\!\left(1 - t^2\right) + \frac{2 \pi}{3}\right) , \\
h^{\prime\prime}(t) & = - \frac{4}{9 (2 - t^2)} \sin\!\left(\frac{1}{3} \arcsin\!\left(1 - t^2\right) + \frac{2 \pi}{3}\right) \\
& \quad \, - \frac{2 t}{3 (2 - t^2)^{3/2}} \cos\!\left(\frac{1}{3} \arcsin\!\left(1 - t^2\right) + \frac{2 \pi}{3}\right) ,
\end{align*}
for all $t \in [0,1]$. Moreover, we have $h(0) = \frac{1}{2}$, $h^{\prime}(0) = \frac{1}{\sqrt{6}}$, and for any $t \in [0,1]$,
$$ |h^{\prime\prime}(t)| \leq \frac{4}{9 (2 - t^2)} + \frac{2 t}{3 (2 - t^2)^{3/2}} \leq \frac{10}{9} \, , $$
where the first inequality follows from the triangle inequality, and the second inequality follows from setting $t = 1$ since $[0,1] \ni t \mapsto \frac{4}{9 (2 - t^2)} + \frac{2 t}{3 (2 - t^2)^{3/2}}$ is an increasing function. Hence, using Taylor's theorem (with Lagrange remainder term), we get:
$$ \forall t \in [0,1], \, \exists s \in [0,t], \enspace h(t) = \frac{1}{2} + \frac{t}{\sqrt{6}} + \frac{h^{\prime\prime}(s) t^2}{2} \, . $$
This implies that:
$$ \forall t \in [0,1], \enspace \left|h(t) - \frac{1}{2} - \frac{t}{\sqrt{6}} \right| \leq \frac{5 t^2}{9} \, , $$
which completes the proof.
\end{proof}

We next establish \thmref{Thm: Federated Oracle Complexity of FedAve}.

\begin{proof}[Proof of \thmref{Thm: Federated Oracle Complexity of FedAve}]
Since we assume that $\sigma^2 = O(p)$, $b = O(m)$, $\E\big[\big\|\theta^{(T_0)} - \theta^*\big\|_2^2\big] = O(p)$, and $\tau,L_1,\mu = \Theta(1)$ with respect to $m$ (see assumptions 2-5 in the theorem statement), for every $T \geq T_0 = \Theta\big(\frac{m}{b}\big)$, each term on the right hand side of \lemref{Lemma: Convergence of FedAve} satisfies
\begin{align*}
\frac{L_1 (T_0 b + 1)^2}{2 (T b + 1)^2} \, \E\!\left[\left\|\theta^{(T_0)} - \theta^*\right\|_2^2\right] & = O\!\left(\frac{m^2 p}{T^2 b^2}\right) , \\
\frac{8 \kappa \sigma^2}{\mu m} \left(1 + \frac{8 (1 - \tau) m}{\tau (m - 1)} \right) \frac{b}{T b + 1} & = O\!\left(\frac{p}{m T}\right) , \\
\left(\frac{8 e L_1 \kappa^2 \sigma^2}{m}\right) \frac{(b-1)^2}{T b + 1} & = O\!\left(\frac{b p}{m T}\right) , \\
\frac{128 e \kappa^3 \sigma^2}{\mu} \left(1 + \frac{8 (1 - \tau)}{\tau (m - 1)}\right) \frac{b - 1}{(T b + 1)^2} & = O\!\left(\frac{p}{T^2 b}\right) .
\end{align*}
Clearly, the third term above dominates the second term, and the first term dominates the fourth term for all sufficiently large $m$ (as $b = O(m)$). Hence, \lemref{Lemma: Convergence of FedAve} implies that
$$ \E\!\left[F(\theta^{(T)})\right] - F_* \leq C \left(\frac{m^2 p}{T^2 b^2} + \frac{b p}{m T}\right) , $$
where $C > 0$ is a constant that depends on the various constant problem parameters. 

Observe that if we seek an approximation accuracy of $\epsilon > 0$, i.e., $\E\big[F(\theta^{(T)})\big] - F_* \leq \epsilon$, we need to ensure that
\begin{equation} 
\label{Eq: Inequality constraint}
\frac{m^2}{T^2 b^2} + \frac{b}{m T} \leq \frac{\epsilon}{C p} \, . 
\end{equation}
According to \defref{Def: Federated oracle complexity}, the associated federated oracle complexity of the $\FedAve$ algorithm is $T b + \phi \tau m T$, because the active clients each compute $b$ gradients in all $T$ epochs of $\FedAve$. However, we have control over the parameters $b$ and $T$ in $\FedAve$. So, we can minimize this complexity over all $b \in \N$ and $T \in \N$ subject to the approximation accuracy constraint above. Indeed, the (minimum) federated oracle complexity of $\FedAve$ is
\begin{equation}
\label{Eq: Min Fed Or Com}
\Gamma(\FedAve) = \min_{\substack{b,T \in \R_+:\\\frac{m^2}{T^2 b^2} + \frac{b}{m T} \leq \frac{\epsilon}{C p}}}{T (b + \phi \tau m)} \, , 
\end{equation}
where $\phi > 0$, $\tau \in (0,1]$, $m \in \N$, $C > 0$, $\epsilon \in (0,1]$, and $p \in \N$ are held constant in the optimization.\footnote{Note that we allow $b$ and $T$ to be real numbers in the optimization rather than natural numbers, because this does not affect the final scaling of federated oracle complexity with respect to the problem parameters. Moreover, the optimization in \eqref{Eq: Min Fed Or Com} is really an upper bound on $\Gamma(\FedAve)$ if we are to be pedantic.} We next calculate how $\Gamma(\FedAve)$ scales with $m$, $\phi$, $\epsilon$, and $p$. For convenience, we let $\varepsilon = \epsilon/p$ in the sequel.

First, notice that the inequality constraint in \eqref{Eq: Inequality constraint} can be changed into an equality constraint
$$ \frac{m^2}{T^2 b^2} + \frac{b}{m T} = \frac{\varepsilon}{C} \, . $$
Indeed, if the optimal $T$ and $b$ for \eqref{Eq: Min Fed Or Com} do not achieve \eqref{Eq: Inequality constraint} with equality, we can reduce $T$ to achieve \eqref{Eq: Inequality constraint} with equality, and this reduces the objective value of $T (b + \phi \tau m)$ further (producing a contradiction). Rearranging this equality constraint, we obtain the following quadratic equation in $T$:
$$ \frac{\varepsilon T^2}{C} - \frac{b T}{m} - \frac{m^2}{b^2} = 0 \, , $$
whose unique non-negative solution is
\begin{equation}
\label{Eq: Quadratic solution}
T = \frac{b C}{2 m \varepsilon} + \sqrt{\left(\frac{b C}{2 m \varepsilon}\right)^{\! 2} + \frac{C m^2}{\varepsilon b^2}} \, .
\end{equation}
This solution satisfies the bounds
\begin{equation}
\label{Eq: bounds}
\frac{b C}{2 m \varepsilon} + \frac{m}{b} \sqrt{\frac{C}{\varepsilon}} \leq T \leq 2 \left(\frac{b C}{2 m \varepsilon} + \frac{m}{b} \sqrt{\frac{C}{\varepsilon}} \right) , 
\end{equation}
where the lower bound follows from neglecting the $(b C/(2 m \varepsilon))^{2} \geq 0$ term in \eqref{Eq: Quadratic solution}, and the upper bound follows from the sub-additivity of the square root function. (Note that $T \geq T_0 = \Theta\big(\frac{m}{b}\big)$ is satisfied.) Now define the constants
\begin{align}
\label{Eq: const 1}
C_1 & = \frac{C}{2 m \varepsilon} \, , \\\
\label{Eq: const 2}
C_2 & = m \sqrt{\frac{C}{\varepsilon}} \, , \\
\label{Eq: const 3}
C_3 & = \phi \tau m \, ,
\end{align}
and the function $g : (0,\infty) \rightarrow \R$ as:
\begin{align*}
\forall b > 0, \enspace g(b) & \triangleq \left(C_1 b + \frac{C_2}{b} \right) \! (b + C_3) \\
& = C_1 b^2 + C_1 C_3 b + \frac{C_2 C_3}{b} + C_2 \, . 
\end{align*}
Then, using \eqref{Eq: Min Fed Or Com}, \eqref{Eq: bounds}, \eqref{Eq: const 1}, \eqref{Eq: const 2}, and \eqref{Eq: const 3}, we have
\begin{equation}
\label{Eq: Estimate of Federated oracle complexity}
\min_{b > 0}{g(b)} \leq \Gamma(\FedAve) \leq 2 \min_{b > 0}{g(b)} \, .
\end{equation}
So, we focus on evaluating $\min_{b > 0}{g(b)}$ from hereon. 

Since $C_1,C_2,C_3 > 0$, it is straightforward to verify that $g$ is a convex function on $(0,\infty)$. This means that any first order stationary point of $g$ in $(0,\infty)$ is a global minimum. We will find such a stationary point in the sequel. To this end, notice that 
$$ \forall b > 0, \enspace g^{\prime}(b) = 2 C_1 b + C_1 C_3 - \frac{C_2 C_3}{b^2} \, . $$
Setting this equal to $0$ yields the stationarity condition 
\begin{equation}
\label{Eq: Stationary condition}
2 C_1 b^3 + C_1 C_3 b^2 - C_2 C_3 = 0 \, . 
\end{equation}
Using the substitution $b = a - \frac{C_3}{6}$ above, we obtain:
\begin{align*}
0 & = 2 C_1 \left(a - \frac{C_3}{6}\right)^{\! 3} + C_1 C_3 \left(a - \frac{C_3}{6}\right)^{\! 2} - C_2 C_3 \\
& = (2 C_1) a^3 - \left(\frac{ C_1 C_3^2}{6} \right)\! a + \left(\frac{C_1 C_3^3}{54} - C_2 C_3\right) ,
\end{align*}
which we rearrange to get the (monic) \emph{depressed} cubic equation:
\begin{equation}
\label{Eq: Depressed cubic}
a^3 - \left(\frac{C_3^2}{12} \right)\! a + \left(\frac{C_3^3}{108} - \frac{C_2 C_3}{2 C_1}\right) = 0 \, . 
\end{equation}
The discriminant of this depressed cubic equation is
\begin{align*}
& 4 \left(\frac{C_3^2}{12} \right)^{\! 3} - 27 \left(\frac{C_3^3}{108} - \frac{C_2 C_3}{2 C_1}\right)^{\! 2} \\
& \qquad \qquad = \frac{C_2 C_3^4}{4 C_1} - \frac{27 C_2^2 C_3^2}{4 C_1^2} \\
& \qquad \qquad = \frac{\phi^4 \tau^4 m^6 \sqrt{\varepsilon}}{2 \sqrt{C}} - \frac{27 \phi^2 \tau^2 m^6 \varepsilon}{C} \\
& \qquad \qquad = \underbrace{\phi^2 \tau^2 m^6 \sqrt{\frac{\varepsilon}{C}}}_{> 0} \left(\frac{1}{2} \phi^2 \tau^2 - 27 \sqrt{\frac{\varepsilon}{C}} \right) \\
& \qquad \qquad > 0 \, ,
\end{align*}
where the second equality follows from \eqref{Eq: const 1}, \eqref{Eq: const 2}, and \eqref{Eq: const 3}, and the strict positivity holds because $\tau$ and $C$ are constants, $\varepsilon = \epsilon/p \in (0,1]$, and $\phi$ is sufficiently large (compared to these constant problem parameters). In particular, we will assume throughout this proof that
\begin{equation}
\label{Eq: large phi}
\phi \geq \frac{40}{C^{1/4} \tau} \, .
\end{equation}
In the previous case, \eqref{Eq: large phi} clearly implies that
$$ \phi > \frac{3 \sqrt{6}}{\tau} \left(\frac{\varepsilon}{C}\right)^{\! 1/4} \quad \Leftrightarrow \quad \frac{1}{2} \phi^2 \tau^2 - 27 \sqrt{\frac{\varepsilon}{C}} > 0 \, . $$
Thus, the depressed cubic equation in \eqref{Eq: Depressed cubic} has three distinct real solutions. Using \emph{Vi\`{e}te's trigonometric formula} for the roots of a depressed cubic equation \cite{Nickalls1993,Nickalls2006}, we have that 
$$ a^* = \frac{C_3}{3} \sin\!\left(\frac{1}{3} \arcsin\!\left(1 - \frac{54 C_2}{C_1 C_3^2}\right) + \frac{2 \pi}{3}\right) $$
is one of the solutions to \eqref{Eq: Depressed cubic}, and hence,
$$ b^* = \frac{C_3}{3} \left( \sin\!\left(\frac{1}{3} \arcsin\!\left(1 - \frac{54 C_2}{C_1 C_3^2}\right) + \frac{2 \pi}{3}\right) - \frac{1}{2} \right) $$
is one of the solutions to \eqref{Eq: Stationary condition}. Applying \lemref{Lemma: Taylor Approximation of Root} to $b^*$ and simplifying yields
$$ \sqrt{\frac{C_2}{C_1}} - \frac{10 C_2}{C_1 C_3} \leq b^* \leq \sqrt{\frac{C_2}{C_1}} + \frac{10 C_2}{C_1 C_3} \, , $$
where we use \eqref{Eq: large phi} and the fact that $\varepsilon \in (0,1]$, which imply that the condition required by \lemref{Lemma: Taylor Approximation of Root} is satisfied:
$$ \phi \geq \frac{6 \sqrt{3}}{\tau} \left(\frac{\varepsilon}{C}\right)^{\! 1/4}  \quad \Leftrightarrow \quad \frac{3}{C_3} \sqrt{\frac{6 C_2}{C_1}} \leq 1 \, . $$
Then, substituting \eqref{Eq: const 1}, \eqref{Eq: const 2}, and \eqref{Eq: const 3} into these bounds gives
$$ \sqrt{2} \, m \! \left(\frac{\varepsilon}{C}\right)^{\! 1/4} - \frac{20 m}{\phi \tau} \sqrt{\frac{\varepsilon}{C}} \leq b^* \leq \sqrt{2} \, m \! \left(\frac{\varepsilon}{C}\right)^{\! 1/4} + \frac{20 m}{\phi \tau} \sqrt{\frac{\varepsilon}{C}} . $$
Since $\varepsilon \in (0,1]$ and \eqref{Eq: large phi} holds, we have
$$ \phi \geq \frac{40 \varepsilon^{1/4}}{C^{1/4} \tau} \quad \Leftrightarrow \quad \frac{20 m}{\phi \tau} \sqrt{\frac{\varepsilon}{C}} \leq \frac{m}{2} \! \left(\frac{\varepsilon}{C}\right)^{\! 1/4} \, , $$
and we see that $b^* = \Theta(m \varepsilon^{1/4})$; specifically, 
\begin{equation}
\label{Eq: Bounds on root}
\frac{m}{2} \left(\frac{\varepsilon}{C}\right)^{\! 1/4} \leq b^* \leq 2 m \! \left(\frac{\varepsilon}{C}\right)^{\! 1/4} . 
\end{equation}
(Note that $b^* = O(m)$ is satisfied.) Furthermore, $b^* > 0$ is the unique positive solution to the stationarity condition \eqref{Eq: Stationary condition} by \emph{Descartes' rule of signs}. Therefore, using \eqref{Eq: const 1}, \eqref{Eq: const 2}, and \eqref{Eq: const 3}, we have
\begin{align*}
\min_{b > 0}{g(b)} & = C_1 (b^*)^2 + C_1 C_3 b^* + \frac{C_2 C_3}{b^*} + C_2 \\
& = \frac{C}{2 m \varepsilon} (b^*)^2 + \frac{C \phi \tau}{2 \varepsilon}  b^* + \frac{m^2 \phi \tau}{b^*} \sqrt{\frac{C}{\varepsilon}} + m \sqrt{\frac{C}{\varepsilon}} \, .
\end{align*}
We can now estimate the scaling of $\min_{b > 0}{g(b)}$. Indeed, by applying the inequalities in \eqref{Eq: Bounds on root}, we have
\begin{align*}
\frac{3 \phi \tau m}{4} \left(\frac{C}{\varepsilon}\right)^{\! 3/4} \!\!\!\! & \leq \frac{9 m}{8} \sqrt{\frac{C}{\varepsilon}} + \frac{3 \phi \tau m}{4} \left(\frac{C}{\varepsilon}\right)^{\! 3/4} \\
& \leq \min_{b > 0}{g(b)} \\
& \leq 3 m \sqrt{\frac{C}{\varepsilon}} + 3 \phi \tau m \! \left(\frac{C}{\varepsilon}\right)^{\! 3/4} \!\!\!\! \leq 4 \phi \tau m \! \left(\frac{C}{\varepsilon}\right)^{\! 3/4} \!\! , 
\end{align*}
where the last inequality again uses \eqref{Eq: large phi} and the fact that $\varepsilon \in (0,1]$, which imply that
$$ \phi \geq \frac{3}{\tau} \left(\frac{\varepsilon}{C}\right)^{\! 1/4} \quad \Leftrightarrow \quad 3 m \sqrt{\frac{C}{\varepsilon}} \leq \phi \tau m \! \left(\frac{C}{\varepsilon}\right)^{\! 3/4} . $$

Finally, from the bounds in \eqref{Eq: Estimate of Federated oracle complexity}, we can estimate the scaling of $\Gamma(\FedAve)$:
$$ \frac{3 \phi \tau m}{4} \left(\frac{C}{\varepsilon}\right)^{\! 3/4} \leq \Gamma(\FedAve) \leq 8 \phi \tau m \! \left(\frac{C}{\varepsilon}\right)^{\! 3/4} . $$
Since $\varepsilon = \epsilon/p$, this establishes that $\Gamma(\FedAve) = \Theta(\phi m (p/\epsilon)^{3/4})$. More pedantically, this argument shows that $\Gamma(\FedAve) = O(\phi m (p/\epsilon)^{3/4})$, because the constraint \eqref{Eq: Inequality constraint} uses the (worst case $O(p)$ scaling) bounds in assumptions 3 and 5 of the theorem statement. This completes the proof.
\end{proof}

\section{Proof of \propref{Prop: Comparison between FedLRGD and FedAve}}
\label{Proof of Prop Comparison between FedLRGD and FedAve}

\begin{proof}
We first calculate the federated oracle complexity of $\lrgd$. Since $d = O(\log(n)^{\lambda})$, without loss of generality, suppose that $d \leq c_3 \log(n)^{\lambda}$ for some constant $c_3 > 0$. From \eqref{Eq: Rank lower bound condition} in \thmref{Thm: Federated Oracle Complexity of FedLRGD},
\begin{align*}
r & = \! \left\lceil\rule{0cm}{1.1cm}\right. \!\! \max\!\left\{\rule{0cm}{1.1cm}\right. \!\!\! e \sqrt{d} \, 2^d (\eta + d)^d , \! \left(\rule{0cm}{0.9cm}\right. \!\!\!\! \left(\!\frac{L_2^d \eta^{d} e^{\eta (d + 1)} d^{\eta / 2} (2\eta + 2d)^{\eta d}}{(\eta - 1)^{\eta d}}\!\right) \\
& \qquad\quad\, \cdot \! \left(\!\frac{\kappa \! \left(F(\theta^{(0)}) - F_* + \frac{9 (B + 3)^4 p}{2 \mu}\right)}{(\kappa - 1)\epsilon}\!\right)^{\!\! \frac{d}{2}} \!\! \left.\rule{0cm}{0.9cm}\right)^{\!\! \frac{1}{\eta - (2 \alpha + 2)d}} \!\! \left.\rule{0cm}{1.1cm}\right\} \!\!\left.\rule{0cm}{1.1cm}\right\rceil .
\end{align*}
Let us analyze each salient term in this equation:
\begin{align*}
e \sqrt{d} \, 2^d (\eta + d)^d & \leq e (2c_2 + 4 \alpha + 6)^d d^{d + (1/2)} \\
& = O\!\left(d^{2 d + (1/2)}\right) \\
& = O\!\left(\exp\!\left(2 \lambda c_3 \log(n)^{(\lambda+1)/2}\right)\right) , 
\end{align*}
\begin{align*}
& \left(L_2^d \eta^{d} (2 e)^{\eta (d + 1)} d^{\eta / 2} \left(\frac{\eta + d}{\eta - 1}\right)^{\! \eta d}\right)^{\! \frac{1}{\eta - (2 \alpha + 2)d}} \\
& = O\!\left( d^{(c_2 + 2\alpha + 4)/(2 c_1)} \left(\frac{2 e(c_2 + 2\alpha + 3)}{c_1 + 2\alpha + 1} \right)^{\! (c_2 + 2\alpha + 2) d/c_1} \right) \\
& = O\Bigg(\log(n)^{\lambda (c_2 + 2\alpha + 4)/(2 c_1)} \\
& \qquad \quad \cdot \left(\frac{2 e(c_2 + 2\alpha + 3)}{c_1 + 2\alpha + 1} \right)^{\! c_3 (c_2 + 2\alpha + 2) \log(n)^{\lambda} /c_1} \Bigg) , 
\end{align*}
\begin{align*}
& \left(\frac{\kappa \! \left(F(\theta^{(0)}) - F_* + \frac{9 (B + 3)^4 p}{2 \mu}\right)}{(\kappa - 1)\epsilon}\right)^{\!\! \frac{d}{2\eta - (4 \alpha + 4)d}} \\
& \qquad \qquad \qquad \qquad  \qquad \qquad = O\!\left(\left(\frac{p}{\epsilon}\right)^{\! 1/(2 c_1)}\right) ,
\end{align*}
where the first estimate uses the facts that $\eta \leq (c_2 + 2\alpha + 2) d$ and $d \leq c_3 \log(n)^{\lambda}$, the second estimate uses the facts that $(c_1 + 2 \alpha + 2) d \leq \eta \leq (c_2 + 2 \alpha + 2) d$ and $d \leq c_3 \log(n)^{\lambda}$, and the third estimate uses the facts that $(c_1 + 2 \alpha + 2) d \leq \eta$ and $F(\theta^{(0)}) - F_* = O(p)$. Hence, since $\epsilon = \Theta(n^{-\beta})$, $p/\epsilon = \Omega(n^{\beta})$ and $r$ grows at least polynomially in $n$, which implies that
\begin{equation}
\label{Eq: Rank scaling}
r = O\!\left(\subpoly(n) \left(\frac{p}{\epsilon}\right)^{\! 1/(2 c_1)}\right) , 
\end{equation}
where $\subpoly(n)$ denotes a term that is dominated by any polynomial in $n$, i.e., $\subpoly(n) = O(n^{\tau})$ for any $\tau > 0$. 

On the other hand, from \eqref{Eq: Iteration count} in \thmref{Thm: Federated Oracle Complexity of FedLRGD},
\begin{align*}
S & = \left\lceil \left(\log\!\left(\frac{\kappa}{\kappa-1}\right)\right)^{\! -1} \log\!\left(\frac{F(\theta^{(0)}) - F_* + \frac{9 (B + 3)^4 p}{2 \mu}}{\epsilon} \right) \right\rceil \\
& = \Theta\!\left( \log\!\left(\frac{p}{\epsilon} \right)\right) \\
& = O(\log(r)) \, , 
\end{align*}
where the second equality uses the fact that $F(\theta^{(0)}) - F_* = O(p)$, and the third equality follows from \eqref{Eq: Rank scaling}. Therefore, the federated oracle complexity of $\lrgd$ from \thmref{Thm: Federated Oracle Complexity of FedLRGD} scales as
\begin{align*}
\Gamma(\lrgd) & = r^2 + rs + rS + \phi m r \\
& \leq O(\phi m r^2) \\
& = O\!\left(\subpoly(n) \, \phi m \left(\frac{p}{\epsilon}\right)^{\! 1/ c_1} \right) \\
& \leq O\!\left( \phi m \left(\frac{p}{\epsilon}\right)^{\! 2/ c_1} \right) \\
& \leq O\!\left( \phi m \left(\frac{p}{\epsilon}\right)^{\! \xi} \right) ,
\end{align*}
where the second line holds because $s = O(\phi m)$ and $S = \log(r)$, the fourth line holds because $\subpoly(n) = O((p/\epsilon)^{1/c_1})$ since $\epsilon = \Theta(n^{-\beta})$, and the last line holds because $c_1 \geq 2/\xi$.

Furthermore, rewriting \thmref{Thm: Federated Oracle Complexity of FedAve}, the federated oracle complexity of $\FedAve$ scales as
$$ \Gamma(\FedAve) \leq \Gamma_+(\FedAve) = \Theta\!\left(\phi m \left(\frac{p}{\epsilon}\right)^{\! 3/4}\right) . $$
Hence, since $\xi < \frac{3}{4}$, we have
$$ \lim_{m \rightarrow \infty}{\frac{\Gamma(\lrgd)}{\Gamma_+(\FedAve)}} \leq \lim_{m \rightarrow \infty}{c_4 \left(\frac{p}{\epsilon}\right)^{\! \xi - (3/4)}} = 0 \, , $$
as desired, where $c_4 > 0$ is some constant. This completes the proof.
\end{proof}

% References section
\bibliographystyle{IEEEtran}
\bibliography{refs.bib}

\end{document}